\crefname{equation}{}{}
\Crefname{equation}{}{}
\crefname{definition}{\textbf{definition}}{definitions}
\Crefname{definition}{Definition}{Definitions}
\crefname{assumption}{\textbf{assumption}}{assumptions}
\Crefname{assumption}{Assumption}{Assumptions}
\definecolor{maroon}{RGB}{192,80,77}
\newcommand{\maroon}[1]{\textcolor{maroon}{#1}}
\newcommand{\explain}[2]{\underset{\mathclap{\overset{\uparrow}{#2}}}{#1}}
\newcommand{\explainup}[2]{\overset{\mathclap{\underset{\downarrow}{#2}}}{#1}}
\newtheorem{theorem}{Theorem}
\newtheorem{lemma}[theorem]{Lemma}
\newtheorem{proposition}[theorem]{Proposition}
\newtheorem{corollary}[theorem]{Corollary}
\newtheorem{definition}[theorem]{Definition}
\newtheorem{example}[theorem]{Example}
\newcommand{\argmin}{\mathop{\mathrm{argmin}}}
\newcommand{\minimize}{\mathop{\mathrm{minimize}}}
\def\E{\mathbb{E}}
\def\P{\mathbb{P}}
\def\R{\mathbb{R}}
\def\cA{\mathcal{A}}
\def\cB{\mathcal{B}}
\def\cD{\mathcal{D}}
\def\cF{\mathcal{F}}
\def\cH{\mathcal{H}}
\def\cI{\mathcal{I}}
\def\cP{\mathcal{P}}
\def\cR{\mathcal{R}}
\def\cS{\mathcal{S}}
\def\cX{\mathcal{X}}
\def\cY{\mathcal{Y}}
\def\cZ{\mathcal{Z}}
\begin{document}

\title{Learning with Differential Privacy: Stability, Learnability and the Sufficiency and Necessity of ERM Principle}

\author[1,2]{Yu-Xiang Wang}
\author[2]{Jing Lei}
\author[1,2]{Stephen E. Fienberg}
\affil[1]{Machine Learning Department, Carnegie Mellon University}
\affil[2]{Department of Statistics, Carnegie Mellon University}

\maketitle

\begin{abstract}
While machine learning has proven to be a powerful data-driven solution to many real-life problems, its use in sensitive domains has been limited due to privacy concerns. 
A popular approach known as \emph{differential privacy} offers provable privacy guarantees, but it is often observed in practice that it could substantially hamper learning accuracy.
In this paper we study the learnability (whether a problem can be learned by any algorithm) under Vapnik's general learning setting with differential privacy constraint, and reveal some intricate relationships between privacy, stability and learnability.

In particular, we show that a problem is privately learnable \emph{if an only if} there is a private algorithm that asymptotically minimizes the empirical risk (AERM). In contrast, for non-private learning AERM alone is not sufficient for learnability. This result suggests that when searching for private learning algorithms, we can restrict the search to algorithms that are AERM. In light of this, we propose a conceptual procedure that always finds a universally consistent algorithm whenever the problem is learnable under privacy constraint. We also propose a generic and practical algorithm and show that under very general conditions it privately learns a wide class of learning problems. 
Lastly, we extend some of the results to the more practical $(\epsilon,\delta)$-differential privacy and establish the existence of a phase-transition on the class of problems that are approximately privately learnable with respect to how small $\delta$ needs to be.

\noindent
Keywords: {\it differential privacy, learnability, characterization, stability, privacy-preserving machine learning}   
\end{abstract}

\newpage
\tableofcontents
\newpage

\section{Introduction}
Increasing public concerns regarding data privacy have posed obstacles in the development and application of new
machine learning methods as data collectors and curators may no longer be able to share
data for research purposes.     In addition to addressing the original goal of information extraction, privacy-preserving learning also requires
the learning procedure to protect sensitive information of individual data entries.
For example, the second Netflix Prize competition was canceled in response to a lawsuit and Federal Trade Commission privacy concerns, and the National Institute of Health decided in August 2008 to remove aggregate Genome-Wide Association Studies (GWAS) data from the public web site, after learning about a potential privacy risk. 

A major challenge in developing privacy-preserving learning methods is to  quantify formally
the amount of privacy leakage, given all possible and unknown auxiliary information the attacker may have, a challenge in part addressed by the notion of \emph{differential privacy} \citep{dwork2006differential,DworkMNS06}.
Differential privacy  has three main advantages over other approaches: (1)  it rigorously
quantifies the privacy property of any data analysis mechanism; (2)  it
 controls the amount of privacy leakage regardless of the attacker's resource or knowledge, (3)  it has useful interpretations from the perspectives of Bayesian inference and statistical hypothesis testing, and hence fits naturally in the general framework of statistical machine learning, e.g., see \citep{dwork2009,wasserman2010,smith2011,lei2011,wang2015}, as well as applications involving regression
 \citep{chaudhuri2011differentially,thakurta2013} and GWAS data \citep{yu2014}, etc.

In this paper we focus on the following fundamental question about differential privacy and
machine learning:
{\it  What problems can we learn with differential privacy?}
Most literature focuses on designing differentially private extensions of various learning algorithms, where the methods depend crucially on the specific context and differ vastly in nature. But with the privacy constraint, we have less choice in developing learning and data analysis algorithms.  It remains unclear how such a constraint affects our ability to learn, and if it is possible to design a generic privacy-preserving analysis mechanism that is applicable to a wide class of learning problems.
\paragraph{Our Contributions} We provide a general answer to the relationship between learnability and differential privacy under Vapnik's General Learning Setting \citep{vapnik1995nature} in four aspects.

1. We characterize the subset of problems in the General Learning Setting that can be learned under differential privacy. Specifically, we show that a sufficient and necessary condition for a problem to be privately learnable is the existence of an algorithm that is differentially private and asymptotically minimizes the empirical risk. This characterization generalizes previous studies of the subject \citep{kasi2011,beimel2013characterizing} that focus on binary classification in discrete domain under the PAC learning model. Technically, the result relies on the now well-known intuitive observation that ``privacy implies algorithmic stability'' and the argument in \citet{shalev2010learnability} that shows a variant of algorithmic stability is necessary for learnability.

2. We also introduce a weaker notion of learnability, which only requires consistency for a class of distributions $\mathfrak{D}$. Problems that are not privately learnable (a surprisingly large class that includes simple problems such as 0-1 loss binary classification in continuous feature domain \citep{chaudhuri2011sample}) are usually private $\mathfrak{D}$-learnable for some ``nice'' distribution class $\mathfrak{D}$.  We characterize the subset of private $\mathfrak{D}$-learnable problems that are also (non-privately) learnable using conditions analogous to those in distribution-free private learning.

3. Inspired by the equivalence between privacy learnability and private AERM, we propose a generic (but impractical) procedure that always finds a consistent and private algorithm for any privately learnable (or $\mathfrak{D}$-learnable) problems. We also study a specific algorithm that aims at minimizing the empirical risk while preserving the privacy. We show that under a sufficient condition that relies on the geometry of the hypothesis space and the data distribution, this algorithm is able to privately learn (or $\mathfrak{D}$-learn) a large range of learning problems including classification, regression, clustering, density estimation and etc, and it is computationally efficient when the problem is convex. In fact, this generic learning algorithm learns any privately learnable problems in the PAC learning setting \citep{beimel2013characterizing}. It remains an open problem whether the second algorithm also learns any privately learnable problem in the General Learning Setting.

4. Lastly, we provide a preliminary study of learnability under the more practical $(\epsilon,\delta)$-differential privacy. Our results reveal that whether there is separation between learnability and approximate private learnability depends on how fast $\delta$ is required to go to $0$ with respect to the size of the data. Finding where the exact phase transition occurs is an open problem of future interest.

Our primary objective is to understand the conceptual impact of differential privacy and learnability under a general framework and the rates of convergence obtained in the analysis may be suboptimal. Although we do provide some discussion on polynomial time approximations to the proposed algorithm, learnability under computational constraints is beyond the scope of this paper. 


\paragraph{Related work} 
While a large amount of work has been devoted to finding consistent (and rate optimal) differentially private learning algorithms in various settings \citep[e.g.,][]{chaudhuri2011differentially,kifer2012private,jain2013differentially,bassily2014private},
the characterization of privately learnable problems were only studied in a few special cases.

\citet{kasi2011} showed that, for binary classification with a finite discrete hypothesis space, anything that is non-privately learnable is privately learnable under the agnostic Probably Approximately Correct (PAC) learning framework, therefore ``finite VC-dimension'' characterizes the set of private learnable problems in this setting. \citet{beimel2013characterizing} extends  \citet{kasi2011} by characterizing the sample complexity of the same class of problems, but the result only applies to the realizable (non-agnostic) case.  \citet{chaudhuri2011sample} provided a counter-example showing that for continuous hypothesis space and data space, there is a gap between learnability and learnability under privacy constraint. They proposed to fix this issue by either weakening the privacy requirement to labels only or by restricting the class of potential distribution. While meaningful in some cases, these approaches do not resolve the learnability problem in general.

A key difference of our work from \citet{kasi2011,chaudhuri2011sample,beimel2013characterizing} is that we consider a more general class of learning problems and provide a proper treatment in a statistical learning framework. This allows us to capture a wider collection of
important learning problems (see Figure \ref{fig:gls_examples} and Table~
\ref{tab:examples_gls}).

It is important to note that despite its generality, Vapnik's general learning setting still does not nearly cover the full spectrum of private learning.
In particular, our results do not apply to improper learning (learning using a different hypothesis class) as considered in \citet{beimel2013characterizing} or to structural loss minimization (the loss function jointly take all data points as input) considered in \citet{beimel2013private}. Also, our results do not address the sample complexity problem, which remains open in the general learning setting even for learning without privacy constraints.

Our characterization of private learnability (and private $\mathfrak{D}$-learnability) in Section~\ref{sec:characterization} uses a recent advance
in the characterization of  general learnability given by \citet{shalev2010learnability}.
Roughly speaking, they showed that a problem is learnable if and only if there exists an algorithm that (i) is stable under small perturbation of training data, and (ii) behaves like empirical risk minimization (ERM) asymptotically. We also makes use of a folklore observation that ``Privacy $\Rightarrow$ Stability $\Rightarrow$ Generalization''. The connection of privacy and stability appeared as early as 2008 in a conference version of \citet{kasi2011}. Further connection to ``generalization'' recently appeared in blog posts\footnote{For instance, Frank McSherry described in a blog post an example of exploiting differential privacy for measure concentration \url{http://windowsontheory.org/2014/02/04/differential-privacy-for-measure-concentration/}; Moritz Hardt discussed the connection of differential privacy to stability and generalization in his blog post \url{http://blog.mrtz.org/2014/01/13/false-discovery}.}, stated as a theorem in Appendix F of \citet{bassily2014private}, and was shown to hold with strong concentration in \citet{dwork2014preserving}.

	\citet{dwork2014preserving} is part of an independent line of work \citep{hardt2014preventing,bassily2015algorithmic,dwork2015reusable,blum2015ladder} on adaptive data analysis, which also stems from the observation that privacy implies stability and generalization. Comparing to adaptive data analysis works, our focus is quite different. 
  Adaptive data analysis work focus on the impact of $k$ on how fast the maximum absolute error of $k$-adaptively chosen queries goes to $0$ as a function of $n$, while this paper is concerned with whether the error can go to $0$ at all for each learning problem
  when we require the learning algorithm be differentially private with $\epsilon < \infty$.
Nonetheless, we acknowledge that Theorem~7 in \citet{dwork2014preserving} provides an interesting alternative proof for ``differentially private learners have small generalization error'', when choosing the statistical query as evaluating a loss function at a privately learned hypothesis. The connection is not quite obvious and we provide a more detailed explanation in Appendix~\ref{app:B}.

The main tool used in the construction of our generic private learning algorithm in Section~\ref{sec:pen-erm} is the Exponential Mechanism \citep{mcsherry2007mechanism}, which provides a simple and differentially-private approximation to the maximizer of a score function among a candidate set.  In the general learning context, we use the negative empirical risk as the utility function, and apply the exponential mechanism to a possibly pre-discretized
hypothesis space.
This exponential mechanism approach was used in \citet{bassily2014private} for minimizing convex and Lipschitz functions. The sample discretization procedure has been considered in \citet{chaudhuri2011sample} and \citet{beimel2013characterizing}. Our scope and proof techniques are different. Our strategy is to show that, under some general regularity conditions, the exponential mechanism is stable and behaves like ERM. Our sublevel set condition has the same flavor as that in the proof of \citet[Theorem~3.2]{bassily2014private}, although we do not need the loss function to be convex or Lipschitz.

Stability, privacy and generalization were also studied in \citet{thakurta2013} with different notions of stability. More importantly, their stability is used as an assumption rather than a consequence, so their result is not directly comparable to ours.

\section{Background}
\subsection{Learnability under the General Learning Setting}
\begin{figure}	
	\centering
\subfigure[Illustration of general learning setting. Examples of known DP extensions are circled in \textbf{\maroon{maroon}}.]{\label{fig:gls_examples}		
\includegraphics[width=0.9\textwidth]{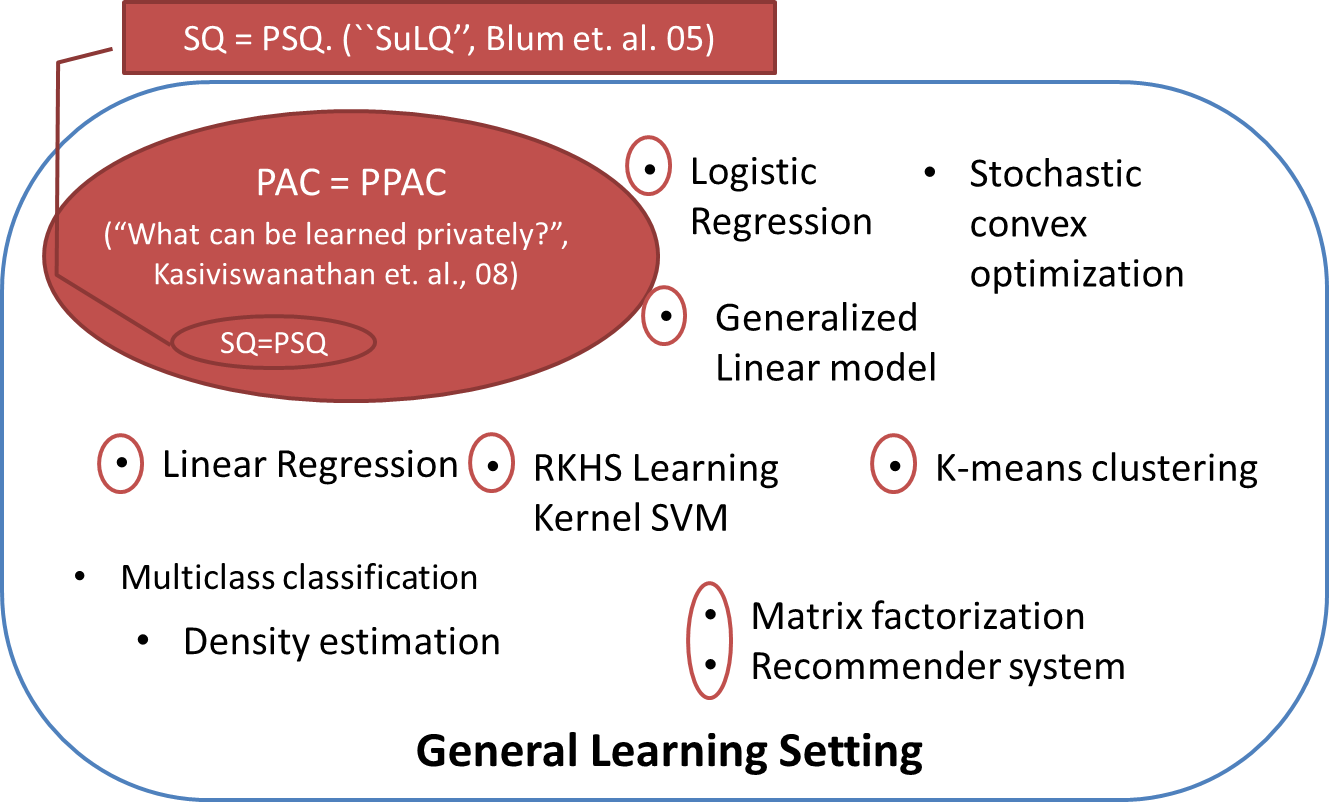}
}\\
\subfigure[Our characterization of private learnable problems in the general learning setting (in \textbf{{\color{blue} blue}}).]{\label{fig:gls_characterization}
\includegraphics[width=0.9\textwidth]{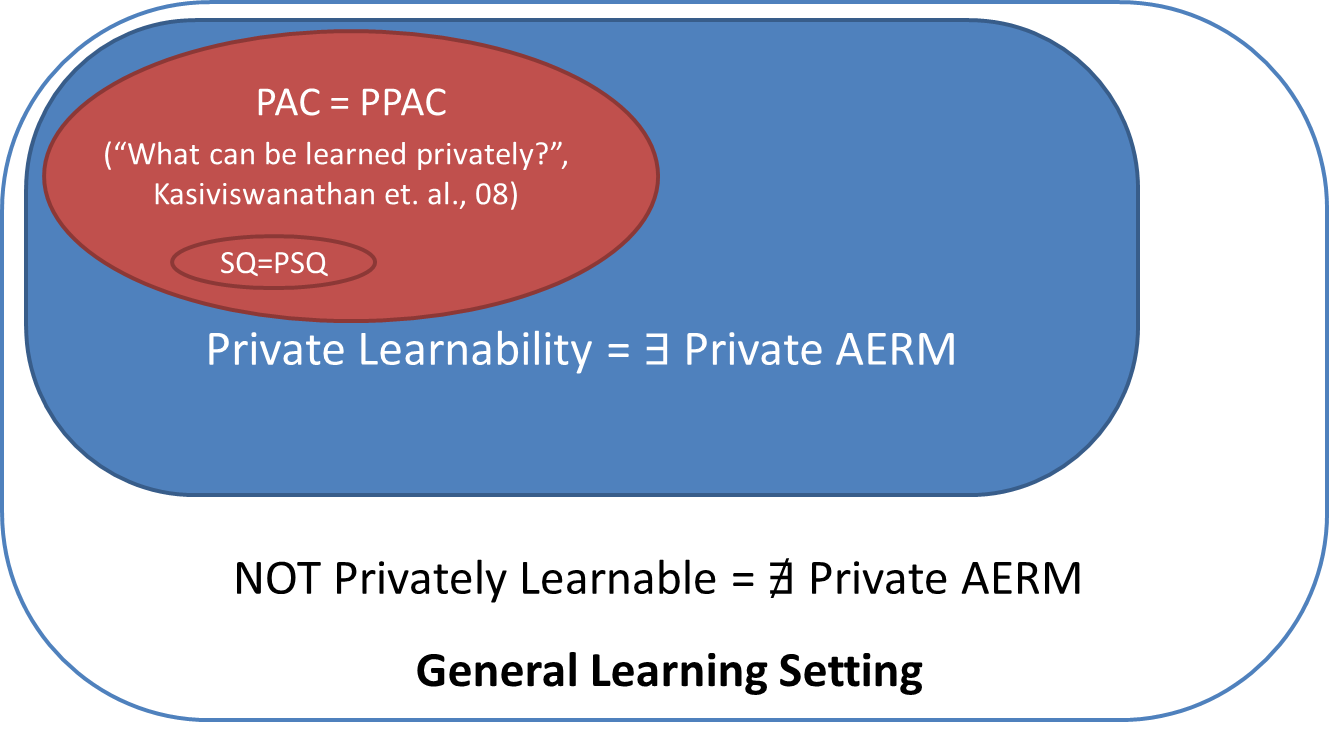}}
		\caption{The Big Picture: illustration of general learning setting and our contribution in understanding differentially private learnability. }\label{fig:gls_privacy}
\end{figure}

In the General Learning Setting of \citet{vapnik1995nature}, a learning problem is characterized
by a triplet $(\mathcal Z, \mathcal H, \ell)$.
Here $\mathcal{Z}$ is the sample space (with a $\sigma$-algebra).
The hypothesis space $\mathcal{H}$ is a collection of models such that each $h\in\mathcal{H}$ describes some structures of the data.
The loss function $\ell:\mathcal{H}\times \mathcal{Z}\rightarrow \R$ measures how well the hypothesis $h$ explains the data instance $z\in \cZ$. For example, in supervised learning problems $\cZ = \mathcal{X}\times \mathcal{Y}$ where $\cX$ is the feature space and $\cY$ is the label space; $\cH$ defines a collection of mapping $h: \mathcal{X}\rightarrow \mathcal{Y}$; and $\ell(h,z)$ measures how well $h$ predicts the feature-label relationship $z=(x,y)\in \cZ$.  This setting includes problems with continuous input/output in potentially infinite dimensional spaces (e.g. RKHS methods), hence is much more general than PAC learning.
In addition, the general learning setting also covers a variety of unsupervised learning problems, including clustering, density estimation, principal component analysis (PCA) and variants (e.g., Sparse PCA, Robust PCA), dictionary learning, matrix factorization and even Latent Dirichlet Allocation (LDA). Details of these examples are given in Table~\ref{tab:examples_gls} (the first few are extracted from \citet{shalev2010learnability}).

To account for the randomness in the data, we are primarily interested in the case where the data  $Z=\{z_1,...,z_n\}\in \mathcal Z^n$ are independent samples drawn from an unknown probability
distribution $\cD$ on $\cZ$. We denote such a random sample by $Z\sim \cD^n$.
For a given distribution $\cD$, let $R(h)$ be the expected loss of hypothesis $h$  and $\hat{R}(h,Z)$ the empirical risk from a sample $Z\in\cZ^{n}$:
\begin{align*}
R(h)=\E_{z\sim\mathcal{D}} \ell(h,z), &&\hat{R}(h,Z)=\frac{1}{n}\sum_{i=1}^n \ell(h,z_i)\,.
\end{align*}
The optimal risk $R^*=\inf_{h\in\mathcal{H}} R(h)$ and we assume that it is achieved by an optimal $h^*\in \mathcal{H}$. Similarly, the minimal empirical risk $\hat R^*(Z)=\inf_{h\in\mathcal{H}} \hat R(h,Z)$ is achieved by $\hat h^*(Z)\in\mathcal H$.
For a possibly randomized algorithm $\cA:\mathcal{Z}^n\rightarrow \mathcal{H}$  that learns some hypothesis $\cA(Z)\in \mathcal{H}$ given data sample $Z$,  we say $\cA$ is \emph{consistent} if
\begin{equation}\label{eq:def_cons}
\lim_{n\rightarrow\infty}\E_{Z\sim \cD^n}  \left( \E_{h\sim \cA(Z)} R(h) - R^*   \right) = 0.
\end{equation}
In addition, we say $\cA$ is consistent with rate $\xi(n)$ if
\begin{equation}\label{eq:def_cons_wrate}
\E_{Z\sim \cD^n}  \left( \E_{h\sim \cA(Z)} R(h) - R^*   \right)  \leq  \xi(n), \,\text{ where } \lim_{n\rightarrow\infty}\xi(n)\rightarrow 0.
\end{equation}

Since the distribution $\cD$ is unknown, we cannot adapt the algorithm $\cA$ to $\cD$, especially when privacy is a concern. Also, even if $\cA$ is pointwise consistent for any distribution $\cD$, it may have different rates for different $\cD$ and potentially be arbitrarily slow for some $\cD$. This makes it hard to evaluate whether $\cA$ indeed learns the learning problem and forbids the study of the learnability problem. In this study, we adopt the stronger notion of learnability considered in \citet{shalev2010learnability}, which is a direct generalization of PAC-learnability \citep{valiant1984theory} and agnostic PAC-learnability \citep{kearns1992toward} to the General Learning Setting as studied by \citet{haussler1992decision}.
\begin{definition}[Learnability, \citealp{shalev2010learnability}]\label{def:learnability}
A learning problem is learnable if there exists an algorithm $\mathbb{\cA}$ and rate $\xi(n)$, such that $\cA$ is consistent with rate $\xi(n)$ for any distribution $\mathcal{D}$ defined on $\mathcal{Z}$.
\end{definition}
This definition requires consistency to hold universally for any distribution $\cD$ with a uniform (distribution-independent) rate $\xi(n)$. This type of problem is often called \emph{distribution-free learning} \citep{valiant1984theory}, and an algorithm is said to be \emph{universally consistent} with rate $\xi(n)$ if it realizes the criterion.



\begin{table}
{\footnotesize
\centering
\begin{tabular}{|l|l|l|l|}
  \hline
   Problem & Hypothesis class $\cH$ & $\cZ$ or $\cX\times\cY$ & Loss function $\ell$  \\\hline
 Binary classification & $\cH\subset \{f: \{0,1\}^d\rightarrow\{0,1\}\} $& $\{0,1\}^d\times\{0,1\}$ & $1(h(x)\neq y)$\\
  Regression & $\cH\subset \{f: [0,1]^d\rightarrow \R\}$ & $[0,1]^d\times\R$ & $|h(x) - y|^2$\\
  Density Estimation & Bounded distributions on $\cZ$ & $\cZ\subset \R^d$ & $-\log(h(z))$ \\
  K-means Clustering &
$\{S\subset \R^d : |S|=k\}$& $\cZ\subset \R^d$ & $\underset{c
  \in h}{\min}\|c-z\|^2$ \\
  RKHS classification & Bounded RKHS & RKHS$\times\{0,1\}$ & $\max\{0,1-y\langle x,h\rangle\}$ \\
  RKHS regression & Bounded RKHS& RKHS$\times\R$ & $|\langle x,h\rangle - y|^2$\\
  Sparse PCA & Rank-$r$ projection matrices & $\R^d$  & $\|h z - z\|^2 + \lambda\|h\|_{1}$  \\
  Robust PCA & All subspaces in $\R^d$ & $\R^d$  & $\|\cP_h(z)-z\|_{1} + \lambda \mathrm{rank}(h)$  \\
  Matrix Completion & All subspaces in $\R^d$ & $\R^d\times \{1,0\}^d$  & $\underset{b\in h}{\min}\|y\circ(b-x)\|^2 + \lambda\mathrm{rank}(h)$  \\
 Dictionary Learning & All dictionaries $\in \R^{d\times r}$ & $\R^d$ & $\underset{b\in \R^r}{\min}\|hb - z\|^2 + \lambda\|b\|_1$\\
  Non-negative MF & All dictionaries $\in \R_+^{d\times r}$ & $\R^d$ & $\underset{b\in \R_+^r}{\min}\|hb - z\|^2$\\
  Subspace Clustering & A set of $k$ rank-$r$ subspaces  & $\R^d$  & $\underset{b\in h}{\min}\|\cP_b(z) - z\|^2$\\
 Topic models (LDA) & $\{\P(\text{word} | \text{topic})\}$\ & Documents & $\underset{\tiny b\in\{\P(\text{Topic})\}}{-\;\;\mathrm{max}\quad}\
 \underset{w\in z}{\sum}\log\P_{b,h}(w)$\\
  \hline
\end{tabular}
}
\caption{An illustration of problems in the General Learning setting.}\label{tab:examples_gls}
\end{table}

\subsection{Differential privacy}
Differential privacy requires that if we arbitrarily perturb a database by only one data point, the output should not differ much. Therefore, if one conducts a statistical test for whether any individual is in the database or not, the false positive and false negative probabilities cannot both be small \citep{wasserman2010}. Formally, define ``Hamming distance''
\begin{equation}\label{eq:Z_distance}
  d(Z,Z^\prime):= \# \{i=1,...,n: z_i\neq z_i^{\prime}\}\,.
\end{equation}

\begin{definition}[$\epsilon$-Differential Privacy, \citealp{dwork2006differential}]\label{def:diff_privacy}
An algorithm $\cA$ is $\epsilon$-differentially private, if
$$
\P(\cA(Z)\in H)\leq \exp(\epsilon)\P(\cA(Z^{\prime})\in H)
$$
for $\forall\ Z,\ Z^{\prime}$ obeying $d(Z,Z^{\prime})=1$ and any measurable subset $H\subseteq \mathcal H$.
\end{definition}
There are weaker notions of differential privacy. For example $(\epsilon,\delta)$-differential privacy allows for a small probability $\delta$ where the privacy guarantee does not hold. In this paper, we will mainly work with the stronger $\epsilon$-differential privacy. In \Cref{sec:eps_delta} we discuss the problem of $(\epsilon,\delta)$-differential privacy and extend some of the results to this setting.

Our objective is to understand whether there is a gap between learnable problems and privately learnable problems in the general learning setting, and to quantify the tradeoff required to protect privacy. To achieve this objective, we need to show the existence of an algorithm that learns a class of problems while preserving differential privacy. More formally, we define
\begin{definition}[Private learnability]\label{def:private-learnability}
	A learning problem is privately learnable with rate $\xi(n)$ if there exists an algorithm $\cA$ that satisfies both universal consistency (as in Definition~\ref{def:learnability}) with rate $\xi(n)$ and $\epsilon$-differential privacy with privacy parameter $\epsilon < \infty$.
\end{definition}

We can view the consistency requirement Definition~\ref{def:private-learnability}  as a measure of utility. This utility is not a function of the observed data, however, but rather how the results generalize to unseen data.


The following lemma shows that the above definition of private learnability is actually equivalent to a seemingly much stronger condition with a vanishing privacy loss $\epsilon$.
	
	\begin{lemma}\label{lem:dp_learnability}
		If there is an $\epsilon$-DP algorithm that is consistent with rate $\xi(n)$ for some constant $0<\epsilon<\infty$, then there is a $\frac{2}{\sqrt{n}}\left(e^\epsilon-e^{-\epsilon}\right)$-DP algorithm that is consistent with rate $\xi(\sqrt{n})$.
	\end{lemma}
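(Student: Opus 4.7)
The plan is to construct $\cB$ by subsampling: on input $Z \in \cZ^n$, draw a uniformly random $m$-subset $S \subset [n]$ with $m := \lfloor \sqrt{n} \rfloor$, and return $\cA(Z_S)$. Consistency is essentially free: marginally over the random subsampling, $Z_S$ is i.i.d.\ $\cD^m$ whenever $Z$ is i.i.d.\ $\cD^n$, so $\E_{Z,S}[R(\cB(Z))] - R^* \leq \xi(m) = \xi(\sqrt{n})$, matching the claimed rate. The substance lies entirely in showing that the random subsampling amplifies $\epsilon$ to $\epsilon' = 2(e^\epsilon - e^{-\epsilon})/\sqrt{n}$.

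For the privacy analysis, fix neighboring $Z, Z'$ differing only at coordinate $i^*$ and a measurable event $H \subseteq \cH$. Condition on whether $i^* \in S$: with probability $1 - p$ (where $p := m/n \leq 1/\sqrt{n}$) it is not, so $Z_S = Z'_S$ and the two output distributions coincide; call this common conditional probability $\nu(H)$. With probability $p$ the sample contains $i^*$; write the two conditional probabilities as $\nu_Z(H)$ and $\nu_{Z'}(H)$. Then
\[
\P[\cB(Z)\in H] = (1-p)\nu(H) + p \nu_Z(H), \qquad \P[\cB(Z')\in H] = (1-p)\nu(H) + p \nu_{Z'}(H).
\]

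The pivotal step is to show $\nu_Z(H), \nu_{Z'}(H) \in [e^{-\epsilon}\nu(H), \, e^\epsilon \nu(H)]$. I would use a secrecy-of-the-sample coupling: condition on $T := S \setminus \{i^*\}$ and realize the uniform $m$-subset of $[n]\setminus\{i^*\}$ (the ``$i^* \notin S$'' branch) as $T \cup \{j\}$ for a uniform $j \in [n] \setminus (T \cup \{i^*\})$. For each $(T,j)$ the datasets $Z_{T \cup \{i^*\}}$ and $Z_{T \cup \{j\}}$ are Hamming-neighbors, so the $\epsilon$-DP of $\cA$ gives $\P[\cA(Z_{T \cup \{i^*\}}) \in H] \leq e^\epsilon\, \P[\cA(Z_{T \cup \{j\}}) \in H]$. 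Averaging over $(T,j)$ yields $\nu_Z(H) \leq e^\epsilon \nu(H)$; the reverse inequality and the analogous bounds for $\nu_{Z'}$ follow identically. Combining, $|\nu_Z(H) - \nu_{Z'}(H)| \leq (e^\epsilon - e^{-\epsilon})\nu(H)$.

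Plugging into the ratio $\P[\cB(Z)\in H]/\P[\cB(Z')\in H]$ and lower-bounding the denominator by $(1-p)\nu(H)$ gives
\[
\frac{\P[\cB(Z)\in H]}{\P[\cB(Z')\in H]} \leq 1 + \frac{p(e^\epsilon - e^{-\epsilon})}{1-p}.
\]
For $n \geq 4$ one has $p \leq 1/2$, so $p/(1-p) \leq 2p = 2/\sqrt{n}$, and $\ln(1+x) \leq x$ finishes the calculation, yielding $\epsilon' \leq 2(e^\epsilon - e^{-\epsilon})/\sqrt{n}$. The only real obstacle is the secrecy-of-the-sample inequality; once $\nu_Z$ and $\nu_{Z'}$ are both pinned into the interval $[e^{-\epsilon}\nu, e^\epsilon\nu]$, the remaining manipulations are elementary algebra. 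A small bookkeeping point is the rounding $m = \lfloor \sqrt{n}\rfloor$ versus $\sqrt{n}$, which is absorbed into the monotonicity of $\xi$ and the $\leq 1/\sqrt{n}$ bound on $p$.
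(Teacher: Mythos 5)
Your proposal is correct and follows essentially the same route as the paper: subsample a uniformly random $\lfloor\sqrt{n}\rfloor$-subset (so consistency at rate $\xi(\sqrt{n})$ is immediate from the subsample being i.i.d.), and amplify privacy via the secrecy-of-the-sample argument adapted from \citet[Lemma~4.4]{beimel2014bounds}, conditioning on whether the differing index is selected and coupling the ``selected'' branch with the ``not selected'' branch through neighboring subsampled datasets $Z_{T\cup\{i^*\}}$ and $Z_{T\cup\{j\}}$. The only difference is cosmetic: you bound $|\nu_Z-\nu_{Z'}|$ and lower-bound the denominator by $(1-p)\nu$, while the paper bounds numerator and denominator separately by $\gamma e^{\pm\epsilon}+1-\gamma$ times the common reference; both yield the stated $\frac{2}{\sqrt{n}}(e^\epsilon-e^{-\epsilon})$ constant.
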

	The proof, given in \Cref{sec:proof-subsampling}, uses a subsampling theorem adapted from \citet[Lemma~4.4]{beimel2014bounds}. 


There are many approaches to design differentially private algorithms, such as noise perturbation
using Laplace noise \citep{dwork2006differential,DworkMNS06} and the Exponential Mechanism
\citep{mcsherry2007mechanism}.  Our construction of generic differentially private learning
algorithms applies the Exponential Mechanism to penalized empirical risk minimization.  Our argument will make use of a general characterization of learnability described below.


%
%

\subsection{Stability and Asymptotic ERM}
An important breakthrough in learning theory is a full characterization of all learnable problems in the General Learning Setting in terms of stability and empirical risk minimization \citep{shalev2010learnability}.
Without assuming uniform convergence of empirical risk, \citeauthor{shalev2010learnability} showed that a problem is learnable if and only if there exists a ``strongly uniform-RO stable'' and ``always asymptotically empirical risk minimization'' (Always AERM) randomized algorithm that learns the problem. Here ``RO'' stands for ``replace one''. Also, any strongly uniform-RO stable and ``universally'' AERM (weaker than ``always'' AERM) learning rule learns the problem consistently.  Here we give detailed definitions.

\begin{definition}[Universally/Always AERM, \citealp{shalev2010learnability}]\label{def:always-aerm}
A (possibly randomized) learning rule $\mathcal A$ is Universally AERM if for any distribution $\cD$ defined on domain $\cZ$
  $$
  \E_{Z\sim \cD^n}\left[\E_{h\sim \mathcal A(Z)} \hat R(h,Z) - \hat R^*(Z)\right]\rightarrow 0,~~
  \text{as}~~n\rightarrow\infty\,
  $$
    where $\hat R^*(Z)$ is the minimum empirical risk for data set $Z$.   We say $\mathcal A$ is Always AERM, if in addition,
    $$
  \sup_{Z\in \mathcal Z^n}\E_{h\sim \mathcal A(Z)} \hat R(h,Z) - \hat R^*(Z)\rightarrow 0,~~
  \text{as}~~n\rightarrow\infty\,.
  $$
\end{definition}

\begin{definition}[Strongly Uniform RO-Stability, \citealp{shalev2010learnability}]\label{def:stability}
An \\algorithm $\cA$ is strongly uniform RO-stable if
$$
\sup_{z \in \cZ} \sup_{\tiny{\begin{array}{c}
                         Z,Z^\prime\in \cZ^n, \\
                         d(Z,Z^\prime)=1
                       \end{array}}
  } |\E_{h\sim \cA(Z)}\ell(h,z) - \E_{h\sim\cA(Z^{\prime})} \ell(h,z)| \rightarrow 0 \text{ as }n\rightarrow \infty.
$$
where $d(Z,Z^\prime)$ is defined in \eqref{eq:Z_distance}, in other word, $Z$ and $Z^{\prime}$ can differ by at most one data point.
\end{definition}
Since we will not deal with other variants of algorithmic stability in this paper (e.g., hypothesis stability~\citep{kearns1999algorithmic}, uniform stability~\citep{bousquet2002stability} and leave-one-out (LOO) stability in~\citet{mukherjee2006learning}),  we simply call Definition~\ref{def:stability} stability or uniform stability. Likewise, we will refer to $\epsilon$-differential privacy as just ``privacy'' although there are several other notions of privacy in the literature.

\section{Characterization of private learnability}\label{sec:characterization}
We are now ready to state our main result. The only assumption we make is the uniform boundedness of the loss function. This is also assumed in \citet{shalev2010learnability} for the learnability problem without privacy constraints. Without loss of generality, we can assume $0 \leq \ell(h,z) \leq 1$.
\begin{figure}
  \centering
  \includegraphics[width=0.7\textwidth]{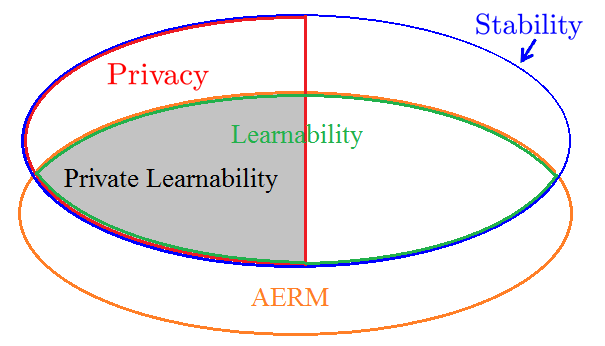}\\
  \caption{A summary of the relationships of various notions revealed by our analysis.}\label{fig:blockdiagraom}
\end{figure}



\begin{theorem}\label{thm:characterization}
Given a learning problem $(\cZ,\cH,\ell)$,  the following statements are equivalent.
\begin{enumerate}
  \item The problem is privately learnable.
  \item There exists a differentially private universally AERM algorithm.
  \item There exists a differentially private always AERM algorithm.
\end{enumerate}
\end{theorem}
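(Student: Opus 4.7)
I would prove the three equivalences via the cycle $(3) \Rightarrow (2) \Rightarrow (1) \Rightarrow (3)$. The implication $(3) \Rightarrow (2)$ is immediate: the sup-over-$Z$ bound required by always AERM dominates the expectation-over-$Z$ bound required by universally AERM.

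For $(1) \Rightarrow (3)$, the natural construction is a bootstrap. Given a DP universally consistent $\cA$ with rate $\xi$, I would define $\cA^*(Z_0) := \cA(Z^b)$, where $Z^b$ is an i.i.d.\ sample of size $m=m(n)\to\infty$ drawn from the empirical distribution $\cD_{Z_0}$ on $Z_0$. Because $R_{\cD_{Z_0}}(h) = \hat R(h, Z_0)$ and $R^*_{\cD_{Z_0}} = \hat R^*(Z_0)$, instantiating the uniform-in-distribution consistency of $\cA$ at $\cD = \cD_{Z_0}$ yields, for every fixed $Z_0$,
\[
\E_{Z^b}\E_{h\sim\cA(Z^b)} \hat R(h, Z_0) \;-\; \hat R^*(Z_0) \;=\; \E R_{\cD_{Z_0}}(\cA(Z^b)) - R^*_{\cD_{Z_0}} \;\leq\; \xi(m),
\]
which is exactly always AERM at rate $\xi(m)$ uniformly in $Z_0$. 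Privacy of $\cA^*$ follows from a sub-sampling amplification in the spirit of Lemma~\ref{lem:dp_learnability}: choosing $m = o(\sqrt n)$ makes collisions in $Z^b$ unlikely, and collisions can be handled cleanly either by truncating to a fixed default hypothesis when a collision occurs, or by switching to a uniform-subset-without-replacement scheme at the cost of a small total-variation penalty. Either route preserves $\epsilon'$-DP for some finite $\epsilon'$.

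For $(2) \Rightarrow (1)$, I would combine the folklore ``$\epsilon$-DP $\Rightarrow$ strong uniform RO-stability with rate $e^\epsilon - 1$'' with the Shalev-Shwartz characterization that stability plus universally AERM yields universal consistency. The catch is that for fixed $\epsilon$ the stability rate is only a constant, whereas Shalev-Shwartz needs it to vanish. I would therefore first apply the sub-sampling construction underlying Lemma~\ref{lem:dp_learnability} to the given $\cA$ to obtain a new algorithm with privacy parameter $\epsilon_n\to 0$, so that its stability rate vanishes. One then checks that the sub-sampled algorithm still satisfies universally AERM at a (possibly degraded) rate going to zero: decompose $\hat R(\cA(Z'),Z) = \tfrac{m}{n}\hat R(\cA(Z'),Z') + \tfrac{n-m}{n} R(\cA(Z'))$ in expectation, use the original universally AERM of $\cA$ on the in-sample part, and control the held-out part via the DP-generalization inequality. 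With both stability and universally AERM at vanishing rates, Shalev-Shwartz's theorem delivers universal consistency, and since privacy is preserved throughout, the problem is privately learnable.

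The main obstacle is the amplification step in $(2) \Rightarrow (1)$: carrying the universally AERM property through sub-sampling while simultaneously driving the stability rate to zero. The subtlety is that the inner algorithm's constant DP-generalization gap $e^\epsilon - 1$ does not automatically improve under outer sub-sampling, so one may need either a recursive amplification or a careful direct bookkeeping to show that the outer rate genuinely vanishes. A secondary technical wrinkle lies in the bootstrap privacy analysis in $(1) \Rightarrow (3)$, where repeated draws of the same point amplify sensitivity; the truncation-on-collision remedy (combined with the fact that $m = o(\sqrt n)$ makes collisions rare) resolves this cleanly.
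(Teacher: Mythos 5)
Your proposal follows essentially the same route as the paper: the same cycle of implications, the same three ingredients (subsampling amplification of privacy, privacy $\Rightarrow$ uniform RO-stability, and stability $+$ AERM $\Rightarrow$ consistency via Shalev-Shwartz et al.), and for $(1)\Rightarrow(3)$ the same key trick of instantiating the universal consistency of $\cA$ at the empirical distribution on $Z_0$. Your with-replacement bootstrap with truncation on collision is just the mirror image of the paper's without-replacement subsample with conditioning on the no-duplicate event; both cost an $O(m^2/n)$ term that is absorbed by taking $m=O(\sqrt n)$. The one obstacle you flag in $(2)\Rightarrow(1)$ --- that the inner algorithm's constant generalization gap $e^{\epsilon}-1$ threatens the AERM property after subsampling --- is a fair worry, but it is also exactly the step the paper leaves implicit: it disposes of $(2)\Rightarrow(1)$ in one line by combining the subsampling lemma with the corollary ``privacy $+$ universal AERM $\Rightarrow$ consistency with rate $\xi(n)+e^{\epsilon(n)}-1$,'' without spelling out how universal AERM is carried through the subsample, so your bookkeeping concern is not a defect relative to the paper's own argument.
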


The proof is simple yet revealing, we will present the arguments for $2 \Rightarrow 1$
(sufficiency of AERM) in Section~\ref{sec:privacy_stability} and $1 \Rightarrow 3$ (necessity of AERM) in Section~\ref{sec:necessity}. $3 \Rightarrow 2$ follows trivially from the definition of ``always'' and ``universal'' AERM.

The theorem says that we can stick to ERM-like algorithms
for private learning, despite that ERM may fail for some problems in the (non-private) general learning setting \citep{shalev2010learnability}. Thus a standard procedure for finding universally consistent and differentially private algorithms would be to approximately minimize the empirical risk using some differentially private procedures~\citep{chaudhuri2011differentially,kifer2012private,bassily2014private}. 
If the utility analysis reveals that the method is AERM, we do not need to worry about generalization as it is guaranteed by privacy. This consistency analysis is considerably simpler than non-private learning problems where one typically needs to control generalization error either via uniform convergence (VC-dimension, Rademacher complexity, metric entropy, etc) or to adopt the stability argument \citep{shalev2010learnability}.

This result does not imply that privacy is helping the algorithm to learn in any sense, as the simplicity is achieved at the cost of having a smaller class of learnable problems. A concrete example of a problem being learnable but not privately learnable is given in \citep{chaudhuri2011sample} and we will revisit it in Section~\ref{sec:counter_example}.
For some problems where ERM fails, it may not be possible to make it AERM while preserving privacy. In particular, we were not able to privatize the problem in Section~4.1 of \citet{shalev2010learnability}.

To avoid any potential misunderstanding, we stress that Theorem~\ref{thm:characterization} is a characterization of learnability, \emph{not} learning algorithms. It does not prevent the existence of a universally consistent learning algorithm that is private but not AERM.
Also, the characterization given in Theorem~\ref{thm:characterization}  is about consistency, and it does not claim anything on sample complexity. An algorithm that is AERM may be suboptimal in terms of convergence rate.




\subsection{Sufficiency: Privacy implies stability}\label{sec:privacy_stability}
A key ingredient in the proof of sufficiency is a well-known heuristic observation that differential privacy by definition implies uniform stability, which is useful in its own right.
\begin{lemma}[Privacy $\Rightarrow$ Stability]\label{lem:EM_stability}
Assume $0\leq\ell(h,z)\leq 1$, any $\epsilon$-differentially private algorithm satisfies $(e^\epsilon-1)$-stability. Moreover if $\epsilon\leq 1$ it satisfies $2\epsilon$-stability.
\end{lemma}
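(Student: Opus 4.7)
The plan is to turn the multiplicative $\epsilon$-DP guarantee on events into a multiplicative bound on expectations of bounded loss, and then convert the multiplicative bound into the desired additive stability bound.

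First I would rewrite the expected loss as an integral over its superlevel sets. Since $\ell(\cdot, z)\in[0,1]$ is a fixed bounded measurable function of $h$, for any distribution over $\cH$ the layer-cake identity gives
\begin{equation*}
\E_{h\sim\cA(Z)}\ell(h,z)=\int_{0}^{1}\P\bigl(\ell(\cA(Z),z)\geq t\bigr)\,dt.
\end{equation*}
For each fixed $t\in[0,1]$, the set $H_t:=\{h\in\cH:\ell(h,z)\geq t\}$ is a measurable subset of $\cH$, so by the definition of $\epsilon$-differential privacy (applied to $Z,Z'$ with $d(Z,Z')=1$),
\begin{equation*}
\P\bigl(\cA(Z)\in H_t\bigr)\leq e^{\epsilon}\,\P\bigl(\cA(Z')\in H_t\bigr).
\end{equation*}
Integrating this pointwise inequality in $t$ over $[0,1]$ yields the multiplicative stability bound
\begin{equation*}
\E_{h\sim\cA(Z)}\ell(h,z)\leq e^{\epsilon}\,\E_{h\sim\cA(Z')}\ell(h,z).
\end{equation*}

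Next I would convert this to an additive bound. Subtracting $\E_{h\sim\cA(Z')}\ell(h,z)$ from both sides and using $\ell\leq 1$ (so the right-hand expectation is at most $1$),
\begin{equation*}
\E_{h\sim\cA(Z)}\ell(h,z)-\E_{h\sim\cA(Z')}\ell(h,z)\leq (e^{\epsilon}-1)\,\E_{h\sim\cA(Z')}\ell(h,z)\leq e^{\epsilon}-1.
\end{equation*}
Because the roles of $Z$ and $Z'$ in the DP definition are symmetric (swapping them preserves $d(Z,Z')=1$), the same argument yields the reverse inequality, so the absolute difference is at most $e^{\epsilon}-1$. Taking the supremum over $z\in\cZ$ and over neighboring $Z,Z'$ gives uniform stability with parameter $e^{\epsilon}-1$.

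Finally, for the sharper bound when $\epsilon\leq 1$, I would invoke the elementary inequality $e^{\epsilon}-1\leq 2\epsilon$ for $\epsilon\in[0,1]$, which follows by noting that $g(\epsilon):=1+2\epsilon-e^{\epsilon}$ satisfies $g(0)=0$, $g(1)=3-e>0$, and $g$ is concave on $[0,1]$ (since $g''(\epsilon)=-e^{\epsilon}<0$), hence nonnegative on the interval. Substituting this into the previous display yields $2\epsilon$-stability. The whole argument is short and essentially calculation-free; there is no real obstacle, though one should be slightly careful that the layer-cake step requires measurability of the superlevel sets $H_t$, which holds because $\ell(\cdot,z)$ is a measurable function on $\cH$.
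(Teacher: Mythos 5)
Your proof is correct, but it takes a different route from the paper's. The paper works at the level of densities: it writes the difference of expectations as $\int_h \ell(h,z)(p(h)-p'(h))\,dh$, restricts to the set $\{p\geq p'\}$, and uses the pointwise density-ratio bound $p(h)/p'(h)\leq e^\epsilon$ to get $(e^\epsilon-1)\int_{p\geq p'}p'\leq e^\epsilon-1$ (essentially a bound on the total variation distance between $\cA(Z)$ and $\cA(Z')$, multiplied by $\sup|\ell|$). You instead decompose the expectation by the layer-cake identity over superlevel sets $H_t$ and apply the $\epsilon$-DP inequality to each event $H_t$ directly, obtaining the multiplicative bound $\E_{\cA(Z)}\ell\leq e^\epsilon\E_{\cA(Z')}\ell$ before converting to an additive one. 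The two arguments land on the same constant, but yours has the minor advantage of never assuming that $\cA(Z)$ and $\cA(Z')$ admit densities or mass functions with respect to a common dominating measure — you only ever use the event-level form of Definition~\ref{def:diff_privacy}, which is exactly how DP is stated. The paper's version, on the other hand, makes the connection to total variation explicit, which is the more standard way this ``privacy implies stability'' folklore is usually phrased. Both are complete; your justification of $e^\epsilon-1\leq 2\epsilon$ on $[0,1]$ via concavity is also fine (the paper simply asserts it).
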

The proof of this lemma comes directly from the definition of differential privacy so it is algorithm independent. The converse, however, is not true in general (e.g., a non-trivial deterministic algorithm can be stable, but not differentially private.)


\begin{corollary}[Privacy + Universal AERM $\Rightarrow$ Consistency]\label{corr:consistency}
If a learning algorithm $\cA$ is $\epsilon(n)$-differentially private and   $\cA$ is universally AERM with rate $\xi(n)$, then $\cA$ is universally consistent with rate $\xi(n) + e^{\epsilon(n)}-1=O(\xi(n) + \epsilon(n))$.
\end{corollary}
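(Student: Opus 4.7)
The plan is to bound the excess risk $\E[R(\cA(Z))] - R^\ast$ by the standard three-term decomposition
\begin{equation*}
\E R(\cA(Z)) - R^\ast = \underbrace{\E\bigl[R(\cA(Z)) - \hat R(\cA(Z),Z)\bigr]}_{\text{generalization gap}} + \underbrace{\E\bigl[\hat R(\cA(Z),Z) - \hat R^\ast(Z)\bigr]}_{\text{optimization gap}} + \underbrace{\E\bigl[\hat R^\ast(Z) - R^\ast\bigr]}_{\text{sampling gap}},
\end{equation*}
where all expectations are over $Z\sim\cD^n$ and the internal randomness of $\cA$. Two of these three terms are easy to control: the optimization gap is at most $\xi(n)$ by the universal AERM hypothesis, and the sampling gap is non-positive since $\hat R^\ast(Z)\le \hat R(h^\ast,Z)$ and $\E\hat R(h^\ast,Z)=R^\ast$. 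So the remaining task is to show that the generalization gap is at most $e^{\epsilon(n)}-1$.

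For the generalization gap I would use the classical ``stability implies generalization'' renaming trick, exploiting Lemma~\ref{lem:EM_stability} to get uniform RO-stability of parameter $\beta(n):=e^{\epsilon(n)}-1$ for free. Specifically, fix any $i\in\{1,\dots,n\}$, let $z'\sim\cD$ be an independent fresh sample, and let $Z^{(i\to z')}$ denote $Z$ with $z_i$ replaced by $z'$. Because $z_i$ and $z'$ are i.i.d., relabeling gives the identity
\begin{equation*}
\E_{Z,z'}\,\E_{h\sim\cA(Z^{(i\to z')})}\ell(h,z_i) \;=\; \E_{Z,z'}\,\E_{h\sim\cA(Z)}\ell(h,z') \;=\; \E_{Z}\,\E_{h\sim\cA(Z)}R(h).
\end{equation*}
On the other hand, Lemma~\ref{lem:EM_stability} applied to the pair $(Z,Z^{(i\to z')})$, whose Hamming distance is at most one, yields
\begin{equation*}
\bigl|\E_{h\sim\cA(Z^{(i\to z')})}\ell(h,z_i) - \E_{h\sim\cA(Z)}\ell(h,z_i)\bigr| \le \beta(n)
\end{equation*}
pointwise in $(Z,z')$. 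Taking expectations, averaging over $i=1,\dots,n$, and recognizing $\frac1n\sum_i\E_{h\sim\cA(Z)}\ell(h,z_i)=\E_{h\sim\cA(Z)}\hat R(h,Z)$ gives $|\text{generalization gap}|\le \beta(n)=e^{\epsilon(n)}-1$.

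Putting the three bounds together yields the stated rate $\xi(n) + e^{\epsilon(n)}-1$, and the big-$O$ form follows from $e^x-1 = O(x)$ for bounded $x$. The only mildly subtle step is the symmetry/renaming identity above—it is important that the AERM hypothesis and the stability bound both incorporate the internal randomness of $\cA$ via the inner expectation $\E_{h\sim\cA(\cdot)}$, so one has to be careful that Definition~\ref{def:stability} (which also averages $h$ out) is exactly what is needed to push the argument through without any additional assumption on $\cA$.
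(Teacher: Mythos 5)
Your proof is correct and follows essentially the same route as the paper: the paper also reduces the corollary to Lemma~\ref{lem:EM_stability} combined with a ``stability $+$ AERM $\Rightarrow$ consistency'' argument (Theorem~\ref{thm:stable_aerm}), which uses the same risk decomposition and the same i.i.d.\ renaming trick to convert uniform RO-stability into an on-average generalization bound. The only cosmetic difference is that you isolate the sampling gap $\E[\hat R^\ast(Z)-R^\ast]\le 0$ as a separate third term, whereas the paper folds it in by writing $R^\ast=\E\hat R(h^\ast,Z)\ge\E\hat R^\ast(Z)$ inside a two-term decomposition.
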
%


The proof of Corollary \ref{corr:consistency}, provided in the Appendix, combines Lemma~\ref{lem:EM_stability} and the fact that consistency is implied by stability and AERM (Theorem~\ref{thm:stable_aerm}).  Our Theorem~\ref{thm:stable_aerm} is based on minor modifications of Theorem 8 in  \citet{shalev2010learnability}. In fact, Corollary~\ref{corr:consistency} can be stated in a stronger per distribution form, since universality is not used in the proof. We will revisit this point when we discuss a weaker notion of private learnability below.

Lemma~\ref{lem:dp_learnability} and Corollary~\ref{corr:consistency} together establishes $2 \Rightarrow 1$ in Theorem~\ref{thm:characterization}.

If for a problem privacy and always AERM cannot coexist, then the problem is not privately learnable. This is what we will show next.


\subsection{Necessity: Consistency implies Always AERM}\label{sec:necessity}
To prove that the existence of an always AERM learning algorithm is necessary for any private learnable problems, it suffices to construct such a learning algorithm from 

or each learnable problem.
any universally consistent learning algorithm.

\begin{lemma}[Consistency + Privacy $\Rightarrow$ Private Always AERM]\label{lem:necessity}
If $\cA$ is a universally consistent learning algorithm satisfying $\epsilon$-DP with any $\epsilon>0$ and consistent with rate $\xi(n)$, then there is another universally consistent learning algorithm $\cA'$ that is always AERM with rate $\xi(\sqrt{n})$ and satisfies $\frac{2}{\sqrt{n}}(e^{\epsilon}-e^{-\epsilon})$-DP.
\end{lemma}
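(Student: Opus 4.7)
The plan is to build $\cA'$ from $\cA$ by a subsample-then-learn construction: given $Z=(z_1,\dots,z_n)$, draw a uniformly random subset $S\subseteq\{1,\dots,n\}$ of size $m=\lceil\sqrt n\rceil$ independently of $Z$ and of $\cA$'s internal randomness, form $Z_S=(z_i)_{i\in S}$, and output $\cA'(Z):=\cA(Z_S)$. This is exactly the construction underlying Lemma~\ref{lem:dp_learnability}, so I would recycle its subsampling-amplification ingredient and push a little harder to upgrade ``consistent'' to ``always AERM''.

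Step 1 (always AERM via a self-referential use of consistency). Fix an arbitrary $Z\in\cZ^n$ and let $\hat{\cD}_Z$ be the empirical distribution on the entries of $Z$. Note that $R_{\hat\cD_Z}(h)=\hat R(h,Z)$ and $R_{\hat\cD_Z}^*=\hat R^*(Z)$. If we sample with replacement, then $Z_S\sim\hat\cD_Z^{\,m}$ is genuinely i.i.d., so \emph{universal} consistency of $\cA$ at this particular distribution gives
\[
\E_{h\sim\cA'(Z)}\hat R(h,Z)-\hat R^*(Z)
= \E_{Z_S\sim\hat\cD_Z^{\,m}}\E_{h\sim\cA(Z_S)} R_{\hat\cD_Z}(h)-R_{\hat\cD_Z}^* \leq \xi(m)\leq \xi(\sqrt n).
\]
Because the right-hand side does not depend on $Z$, $\cA'$ is always AERM with rate $\xi(\sqrt n)$. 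This is the crux of the argument: universal consistency (a statement about all product distributions) is strong enough to handle the random empirical distribution $\hat\cD_Z$ for every individual $Z$.

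Step 2 (privacy amplification by subsampling). Here I would invoke the same subsampling theorem adapted from \citet[Lemma~4.4]{beimel2014bounds} that is used for Lemma~\ref{lem:dp_learnability}: composing an $\epsilon$-DP mechanism with a uniform size-$m$ subsample from a size-$n$ dataset yields an $\epsilon'$-DP mechanism with $\epsilon'\le\frac{m}{n}(e^{\epsilon}-e^{-\epsilon})$, up to a constant factor that accounts for the ceiling in $m=\lceil\sqrt n\rceil$. With $m\le 2\sqrt n$ this gives exactly the promised $\frac{2}{\sqrt n}(e^{\epsilon}-e^{-\epsilon})$-DP guarantee for $\cA'$. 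Finally, universal consistency of $\cA'$ follows from Corollary~\ref{corr:consistency} since $\cA'$ is both DP and (always, hence universally) AERM.

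The main obstacle is the mild mismatch between the sampling scheme that makes Step 1 clean (with replacement, giving genuine i.i.d.\ draws from $\hat\cD_Z$ so that universal consistency applies verbatim) and the sampling scheme in which the privacy-amplification lemma is usually stated (without replacement). I would resolve this by either (i) using with-replacement sampling and verifying that the amplification lemma in \citet{beimel2014bounds} still applies with the stated constants in this variant, or (ii) using without-replacement sampling and observing that for every $Z$ the distribution of $Z_S$ is a convex combination of i.i.d.\ draws from related empirical distributions on sub-multisets, so that the AERM inequality still goes through (possibly with a mild adjustment to the rate). Either way the argument is essentially bookkeeping; the conceptual content is the two-line reduction above.
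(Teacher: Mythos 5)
Your proposal matches the paper's proof in all essentials: the same subsample-then-learn construction, the same self-referential application of universal consistency to the empirical distribution on the entries of $Z$ (adapted from Lemma~24 of \citet{shalev2010learnability}), the same privacy-amplification lemma, and the same appeal to Corollary~\ref{corr:consistency} at the end. The one piece you defer as bookkeeping --- reconciling with-replacement sampling (which makes the AERM step clean) with without-replacement sampling (for which the amplification lemma is stated) --- is resolved in the paper by a decomposition that runs in the opposite direction from your option~(ii): an i.i.d.\ sample of size $\lfloor\sqrt n\rfloor$ from $\mathrm{Unif}(Z)$, \emph{conditioned on containing no duplicates}, is exactly the without-replacement subsample, and since $\P(\text{no duplicates})\ge 1/2$ at this subsample size and the excess empirical risk is non-negative, the law of total expectation gives $\E_{\text{w/o repl}}\bigl[\hat R(\cA(Z'),Z)-\hat R^*(Z)\bigr]\le \E_{\text{iid}}\bigl[\hat R(\cA(Z'),Z)-\hat R^*(Z)\bigr]/\P(\text{no duplicates})\le 2\xi(\sqrt n)$, costing only a factor of $2$ in the rate.
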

Lemma~\ref{lem:necessity} is proved in \Cref{sec:proof-characterization}. The proof idea is to run $\cA$ on a size $O(\sqrt{n})$ random subsample of $Z$, which will be universally consistent with a slower rate, differentially private with $\epsilon(n) \rightarrow 0$ (Lemma~\ref{lem:sampling_thm}), and at the same time always AERM. The last part uses an argument in Lemma~24 of \citet{shalev2010learnability} which appeals to the universality of $\cA$'s consistency on a specific discrete distribution supported on the given data set $Z$.

	As pointed out by an anonymous reviewer, there is a simpler proof by invoking Theorem~10 of \citet{shalev2010learnability} that says any consistent and generalizing algorithm must be AERM and a result  \citep[e.g.,][Appendix F]{bassily2014private} that says ``privacy $\Rightarrow$ generalization''. This is a valid observation. But their Theorem~10 is proven using a detour through ``generalization'', which leads to a slower rate than what we are able to obtain in Lemma~\ref{lem:necessity} using a more direct argument.




\subsection{Private Learnability vs. Non-private Learnability}\label{sec:counter_example}
Now we have a characterization of all privately learnable problems,
a natural question to ask is that whether any learnable problem is also privately learnable. The answer is ``yes'' for learning in Statistical Query (SQ)-model and PAC Learning model (binary classification) with finite hypothesis space, and is ``no'' for continuous hypothesis space \citep{chaudhuri2011sample}.


By definition, all privately learnable problems are learnable. But now that we know that privacy implies generalization, it is tempting to hope that privacy can help at least some problem to learn better than any non-private algorithm.
In terms of learnability, the question becomes: Could there be a (learnable) problem that is  \emph{exclusively} learnable through private algorithms? We now show that such a problem does not exist.

\begin{proposition}\label{pro:non-private-learnability}
If a learning problem is learnable by an $\epsilon$-DP algorithm $\cA$, then it is also learnable by a non-private algorithm.
\end{proposition}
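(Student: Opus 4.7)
The plan is to build from $\cA$ a new algorithm $\cA'$ that remains universally consistent while explicitly leaking information about a single data point, thereby failing $\epsilon'$-differential privacy for every finite $\epsilon'$. Without loss of generality assume $|\cH| \ge 2$ (otherwise the constant map is trivially non-private and consistent). Fix $h_0 \neq h_1$ in $\cH$ and a measurable partition $S, \cZ \setminus S$ of $\cZ$ with both pieces nonempty, and define a deterministic leak $g : \cZ \to \cH$ by $g(z) = h_0$ for $z \in S$ and $g(z) = h_1$ otherwise. Then set
\begin{equation*}
\cA'(Z) \;=\; \begin{cases} \cA(Z) & \text{with probability } 1 - \tfrac{1}{n}, \\ g(z_1) & \text{with probability } \tfrac{1}{n}. \end{cases}
\end{equation*}

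\textbf{Step 1 (universal consistency).} Because $0 \le \ell \le 1$, $R(h) \in [0,1]$ for every $h$, and conditioning on which branch of $\cA'$ fires gives
\begin{equation*}
\E_{Z\sim\cD^n}\E_{h\sim \cA'(Z)} R(h) - R^* \;\le\; \bigl(1-\tfrac{1}{n}\bigr)\bigl(\E_{Z\sim\cD^n}\E_{h\sim \cA(Z)} R(h) - R^*\bigr) + \tfrac{1}{n} \;\le\; \xi(n) + \tfrac{1}{n},
\end{equation*}
which vanishes uniformly in $\cD$. So $\cA'$ is universally consistent at rate $\xi(n) + 1/n$.

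\textbf{Step 2 (failure of privacy).} Pick $Z, Z'$ differing only in coordinate $1$, with $z_1 \in S$ and $z_1' \notin S$, and take any measurable $H \subseteq \cH$ containing $h_0$ but not $h_1$. The deterministic branch forces $\P(\cA'(Z) \in H) \ge 1/n$, whereas $\P(\cA'(Z') \in H) = (1-1/n)\P(\cA(Z') \in H)$ since the leak on $Z'$ lands in $H^c$. Shrinking $H$ to a small neighborhood of $h_0$ (or simply $H = \{h_0\}$ in a non-atomic setting where $\cA(Z')$ has no atoms), the denominator is zero or can be made arbitrarily small while the numerator stays $\ge 1/n$. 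Thus the privacy loss is unbounded, so $\cA'$ is not $\epsilon'$-DP for any finite $\epsilon'$.

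The main obstacle I anticipate is the purely atomic setting, where every singleton carries $\cA$-mass bounded below (a property DP propagates from one input to all). In that regime the mixture may remain $\epsilon'$-DP with a mildly growing $\epsilon'$, and one must either replace the uniform mixing weight by one that fires only on a carefully chosen rare configuration of $z_1$ (keeping the consistency penalty vanishing under every $\cD$) or exploit a non-triviality assumption on the loss to produce a zero-versus-positive probability split for a suitable output event. Under the uniform boundedness hypothesis alone, verifying that some version of this construction works for every $(\cZ,\cH,\ell)$ with $|\cH|\ge 2$ is the delicate point of the argument.
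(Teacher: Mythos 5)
Your Step 1 and the non-atomic half of Step 2 are fine, and your mixture construction is close in spirit to the paper's (the paper also perturbs $\cA$ with probability $1/n$ on an event tied to the presence of a distinguished data point). But the case you flag at the end as "the delicate point" is a genuine gap, not a technicality, and it is exactly where the paper's proof does its work. Concretely: take $\cH=\{h_0,h_1\}$ and let $\cA$ be a randomized-response-type rule with $\P(\cA(Z)=h_i)\in[1/3,2/3]$ for all $Z$ (DP in fact propagates such a uniform lower bound on atom masses from one input to all inputs). Then for every measurable $H$ and every adjacent pair,
$$\frac{\P(\cA'(Z)\in H)}{\P(\cA'(Z')\in H)}\;\le\; e^\epsilon+\frac{1/n}{(1-1/n)\cdot(1/3)},$$
so your $\cA'$ is $\epsilon'$-DP for a finite $\epsilon'$ and you have not produced a non-private algorithm. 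Since such problems exist and are learnable, the proposition is not established by your argument as written.

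The paper closes this case by splitting on the structure of $\cA(Z)$'s point-mass support $\tilde\cH$, which DP forces to be the same for all $Z$. If $\cH\setminus\tilde\cH\neq\emptyset$, it leaks probability $1/n$ onto $\cH\setminus\tilde\cH$ when the distinguished point is present, creating a zero-versus-positive split. If $\tilde\cH=\cH$ then $\cH$ is discrete: for $|\cH|<n$ it abandons the mixture entirely and uses deterministic ERM (consistent by Hoeffding plus a union bound, and non-private as a non-trivial deterministic rule); for $|\cH|\ge n$ it uses pigeonhole to find an atom $h$ of mass $<1/n$ and outputs $\cA(Z)$ conditioned on $\cA(Z)\neq h$ when the distinguished point is present, again forcing a zero-versus-positive probability for $\{h\}$ at a consistency cost of $O(1/n)$. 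To complete your proof you would need to supply these (or equivalent) case analyses; the "rare configuration of $z_1$" idea you sketch does not obviously work, because rarity under the data distribution does not help when the obstruction lives in the output distribution's atoms.
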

The proof is given in \Cref{sec:proof-counter-example}.  The idea is that $\cA(Z)$ defines a distribution over $\cH$. Pick an $z\in \cZ$. If $z\notin Z$, algorithm $\cA'=\cA$.
Otherwise, $\cA'(Z)$ samples from a slightly different distribution than $\cA(Z)$ that does not affect the expectation much.

On the other hand, not all learnable problems are privately learnable. This can already be seen from \citet{chaudhuri2011sample}, where the gap between learning and private learning is established. We revisit \citeauthor*{chaudhuri2011sample}'s example in our notation under the general learning setting and produce an alternative proof by showing that differential privacy contradicts \emph{always AERM}, then invoking Theorem~\ref{thm:characterization} to show the problem is not privately learnable.

\begin{proposition}[{\citealp[Theorem~5]{chaudhuri2011sample}}]\label{prop:counterexample}
There exists a problem that is learnable by a non-private algorithm, but not privately learnable. In particular, any private algorithm cannot be \emph{always AERM} in this problem.
\end{proposition}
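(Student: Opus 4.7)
The plan is to take the classical threshold-classification problem over a continuous feature domain as the counterexample, verify non-private learnability, and then show that differential privacy rules out always AERM for this problem---which, by Theorem~\ref{thm:characterization}, is the same as saying it is not privately learnable. Let $\cZ = [0,1] \times \{0,1\}$, $\cH = \{h_t : t \in [0,1]\}$ with $h_t(x) = \mathbf{1}[x \geq t]$, and use $0$-$1$ loss. Because $\cH$ has VC dimension $1$, empirical risk minimization is universally consistent at rate $O(1/\sqrt{n})$, so the problem is learnable in the sense of Definition~\ref{def:learnability}.

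For the harder direction, I would assume for contradiction that $\cA$ is $\epsilon$-DP and always AERM with rate $\alpha(n) \to 0$. For each $n$, put points on the grid $x_j = j/(n+1)$ and define, for $k \in \{0, 1, \ldots, n\}$, the realizable dataset $Z^{(k)} = \{(x_j, \mathbf{1}[j > k])\}_{j=1}^n$, so that $\hat R^*(Z^{(k)}) = 0$ and $d(Z^{(k)}, Z^{(k')}) = |k - k'|$. Always AERM plus Markov's inequality then yields, for each $k$, an interval $A_k$ of width $\Theta(\alpha(n))$ centered at the realizing threshold $t_k = (k + \tfrac{1}{2})/(n+1)$ with $\Pr[\cA(Z^{(k)}) \in A_k] \geq \tfrac12$. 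The next step is to select a subfamily $k_1 < \cdots < k_N$ with $N = \Theta(1/\alpha(n))$ whose intervals $A_{k_j}$ are pairwise disjoint, and to combine the group-privacy bound $\Pr[\cA(Z^{(k_1)}) \in A_{k_j}] \geq \tfrac12 e^{-|k_j - k_1|\epsilon}$ with the disjointness constraint $\sum_j \Pr[\cA(Z^{(k_1)}) \in A_{k_j}] \leq 1$, producing a summability inequality that should fail for large $n$.

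The hard part will be that the plain grid packing above is sharp only for rates $\alpha(n) = \Omega(1/(n\epsilon))$, so it does not by itself preclude slower decays such as $\alpha(n) = 1/\sqrt{n}$. To close this gap I would exploit the continuous nature of $[0,1]$ in the spirit of \citet{chaudhuri2011sample}: either pigeonhole against the output measure $\mu = \cA(Z_0)$ of a reference dataset $Z_0$ (over $[0,1]$, any probability measure has arbitrarily small mass on small intervals on average) to locate a target threshold whose $\alpha(n)$-neighborhood has vanishing $\mu$-mass and then transport the bound via DP, thereby decoupling the quantitative argument from the coarse grid resolution; or invoke their Theorem~5 as a black box to conclude that no $\epsilon$-DP algorithm has finite sample complexity for proper threshold learning on a continuous domain. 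Either route, together with the equivalence of statements $1$ and $3$ in Theorem~\ref{thm:characterization}, delivers both conclusions of the proposition.
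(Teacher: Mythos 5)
Your setup (thresholds on $[0,1]$ with $0$-$1$ loss) and your high-level strategy (pack disjoint target intervals, force the output into them via AERM, and contradict this with group privacy plus disjointness) match the paper's. But the gap you candidly flag at the end is exactly where your argument dies, and neither of your proposed repairs closes it. The obstruction is structural: by keeping all datasets on a common grid $x_j=j/(n+1)$ and varying only the labels, you have (i) only $n+1$ distinct realizable datasets, hence at most polynomially many packable targets, and (ii) a localization width $\Theta(\alpha(n))$ that depends on the (possibly arbitrarily slow) AERM rate. Group privacy decays like $e^{-\epsilon d}$ in the Hamming distance $d$, so a polynomial number of targets can never overwhelm it; your pigeonhole variant inherits the same defect, since the interval of small output mass it locates must still be reached from the reference dataset at an exponential group-privacy cost that $1/N \geq 1/(n+1)$ cannot beat. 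The black-box appeal to \citet{chaudhuri2011sample} is formally admissible given the attribution, but it abandons the self-contained argument the proposition is meant to showcase.

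The paper's construction removes both defects at once by moving the \emph{data points themselves}, not just the labels. Set $\eta = e^{-\epsilon n}$ and pick $K=\lceil 1/\eta\rceil$ thresholds $h_1,\dots,h_K$ with disjoint intervals $[h_i-\eta/3,\,h_i+\eta/3]$. Dataset $Z_i$ is supported \emph{entirely inside} $[h_i-\eta/3,h_i+\eta/3]$, half the points on each side of $h_i$, labeled by $h_i$. Then any threshold outside that interval misclassifies half of $Z_i$, so $\hat R(h,Z_i)\ge 1/2$ there; consequently always AERM forces $\P(\cA(Z_i)\in[h_i-\eta/3,h_i+\eta/3])$ to be a constant close to $1$ for all large $n$, \emph{independently of the AERM rate} --- this is what makes the argument rate-free, unlike your Markov step. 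The datasets $Z_1$ and $Z_i$ differ in all $n$ points, so group privacy costs $e^{-\epsilon n}$, but you have packed $K\approx e^{\epsilon n}$ disjoint intervals, so summing $\P(\cA(Z_1)\in[h_i-\eta/3,h_i+\eta/3])\ge 0.9\,e^{-\epsilon n}$ over $i=2,\dots,K$ gives $\P(\cA(Z_1)\notin[h_1-\eta/3,h_1+\eta/3])\ge 0.9$, contradicting AERM on $Z_1$. The exponential fineness of the packing, matched exactly against the $e^{-\epsilon n}$ group-privacy factor, is the idea your proposal is missing; your instinct to ``exploit the continuous nature of $[0,1]$'' is right, but it must be realized by placing the data at resolution $e^{-\epsilon n}$ rather than $1/n$.
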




We describe the counterexample and re-establish the impossibility of private learning
for this problem using
the contrapositive of Theorem~\ref{thm:characterization}, which suggests that if privacy and always AERM algorithm cannot coexist for some problem, then the problem is not privately learnable. 

Consider the binary classification problem with $\cX = [0,1]$, $\cY=\{0,1\}$ and 0-1 loss function. Let $\cH$ be the collection of threshold functions that output $h(x)=1$ if $x>h$ and $h(x)=0$ otherwise. This class has VC-dimension 1, and hence the problem is learnable.

Next we will construct $K=\lceil\exp(\epsilon_n n)\rceil$ data sets such that if $K-1$ of them obey AERM, the remaining one cannot be.
Let $\eta = 1/\exp(\epsilon n)$, $K:= \lceil 1/\eta\rceil$.
Let $h_1,h_2,...,h_K$ be a disjoint thresholds such that they are at least $\eta$ apart and $[h_i-\eta/3,h_i+\eta/3]$ are disjoint intervals.

If we take $Z_i\subseteq [h_i-\eta/3,h_i+\eta/3]$ with half of the points in $[h_i-\eta/3,h_i)$ and the other half in $(h_i,h_i+\eta/3]$ and we label each data point in it with $\mathbf 1(z>h_i)$, then empirical risk $\hat{R}(h_i,Z_i) =0 \;\forall i=1,...,K$. So for any AERM learning rule, $\E_{h\sim\cA(Z_i)} \hat{R}(h,Z_i) \rightarrow 0$ for all $i$. For some sufficiently large $n$, $\E_{h\sim\cA(Z_i)} \hat{R}(h,Z_i) < 0.1.$

Now consider $Z_1$,
$$\P (\cA(Z_1)\notin [h_1-\eta/3,h_1+\eta/3]) \geq \sum_{i=2}^{K}\P (\cA(Z_1)\in [h_i-\eta/3,h_i+\eta/3]), $$
since these intervals are disjoint. Then by the definition of $\epsilon$-DP,
\begin{equation}\label{eq:counter_exp_keyeq1}
\P(\cA(Z_1) \in [h_i-\eta/3,h_i+\eta/3]) \geq \exp(-\epsilon n)\P(\cA(Z_i) \in [h_i-\eta/3,h_i+\eta/3]).
\end{equation}
It follows that $\P(\cA(Z_i) \in [h_i-\eta/3,h_i+\eta/3]) > 0.9$ otherwise $\E_{h\sim\cA(Z_i)} \hat{R}(h,Z_i) \geq 0.1$, therefore \begin{equation}\label{eq:counter_exp_keyeq2}
\P (\cA(Z_1)\notin [h_1-\eta/3,h_1+\eta/3]) \geq K \exp(-\epsilon n) 0.9  \geq 0.9,
\end{equation}
and $\E_{h\sim\cA(Z_i)} \hat{R}(h,Z_i) \geq 0.9 \times 1 = 0.9$, which violates the ``always AERM'' condition that requires $\E_{h\sim\cA(Z_1)} \hat{R}(h,Z_1) < 0.1.$ Therefore, the problem is not privately learnable.


As is pointed out by an anonymous reviewer, the same conclusion of this impossibility result of privately learning thresholds on $[0,1]$ can be drawn numerically through the characterization of the sample complexity \citep{beimel2013characterizing}, via the bound that depends logarithmically on the $\log(|\cH|)$ and on $[0,1]$ this number is infinite. The above analysis provides different insights about the problem. We will be using it again for understanding the separation of learnability and learnability under $(\epsilon,\delta)$-Differential Privacy later in Section~\ref{sec:eps_delta}.

\subsection{Private $\mathfrak{D}$-learnability}
The above example implies that even very simple learning problems may not be privately learnable. To fix this caveat, note that most data sets of practical interest have nice distributions. Therefore, it makes sense to consider a smaller class of distributions, e.g., smooth distributions that have bounded $k$th order derivative, or those having bounded total variation. These are common assumptions in non-parametric statistics, such as kernel density estimation, smoothing spline regression and mode clustering. Similarly, in high dimensional statistics, there are often assumptions on the structures of the underlying distribution, such as sparsity, smoothness, and low-rank conditions.

\begin{definition}[(Private) $\mathfrak{D}$-learnability]
We say a learning problem  $(\cZ, \cH, \ell)$ is $\mathfrak{D}$-learnable if there exists a learning algorithm $\cA$ that is consistent for every unknown distribution $\cD \in \mathfrak{D}$.
If in addition, the problem is $\mathfrak{D}$-learnable under $\epsilon$-differential privacy for
some $0\le \epsilon<\infty$, then we say the problem is privately $\mathfrak{D}$-learnable.
\end{definition}

Almost all of our arguments hold in a per distribution fashion, therefore they also hold for any such
subclass $\mathfrak{D}$.  The only exception is the necessity of ``always AERM''
(Lemma~\ref{lem:necessity}), where
we used the universal consistency on an arbitrary discrete uniform distribution in the proof.
The characterization still holds if the class $\mathfrak{D}$ contains all finite discrete uniform
distributions. For general distribution classes, we
characterize private $\mathfrak{D}$-learnability using a weaker ``universally AERM'' (instead of ``always AERM'') under the assumption
that the problem itself is learnable in a distribution-free setting without privacy constraints.

\begin{lemma}[private $\mathfrak{D}$-learnability $\Rightarrow$ private $\mathfrak{D}$-universal AERM]\label{lem:necessity_mod}
If an $\epsilon$-DP algorithm $\cA$ is $\mathfrak{D}$-universally consistent with rate $\xi(n)$
and the problem itself is learnable in a distribution-free sense with rate $\xi'(n)$,
then there exists a $\mathfrak{D}$-universally consistent learning algorithm $\cA'$ that is $\mathfrak{D}$-universally AERM with rate $12\xi'(n^{1/4}) + \frac{37}{\sqrt{n}} + \xi(\sqrt{n})$ and satisfies $\frac{2}{\sqrt{n}}(e^{\epsilon}-e^{-\epsilon})$-DP.
\end{lemma}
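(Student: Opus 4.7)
The plan is to define $\cA'$ as the algorithm that draws a uniform random size-$\sqrt{n}$ subsample $Z'$ of $Z$ (without replacement) and outputs $\cA(Z')$. The privacy guarantee $\frac{2}{\sqrt{n}}(e^\epsilon-e^{-\epsilon})$-DP follows from the subsampling argument already used in the proofs of Lemmas~\ref{lem:dp_learnability} and~\ref{lem:necessity}. The $\mathfrak{D}$-universal consistency at rate $\xi(\sqrt{n})$ is also immediate: for any $\cD\in\mathfrak{D}$, the subsample $Z'$ is i.i.d.\ $\cD^{\sqrt{n}}$ (by exchangeability of i.i.d.\ data), so $\cA$'s consistency on $Z'$ transfers directly.

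For the $\mathfrak{D}$-universal AERM claim, I would decompose
\[
\E_Z\!\left[\E_{h\sim \cA'(Z)} \hat R(h,Z) - \hat R^*(Z)\right]
= \underbrace{\E_Z\!\left[\E_h \hat R(h,Z) - \E_h R(h)\right]}_{(I)}
+ \underbrace{\E_Z \E_h R(h) - R^*}_{(II)}
+ \underbrace{R^* - \E_Z \hat R^*(Z)}_{(III)}.
\]
Term $(II)$ is bounded by $\xi(\sqrt n)$ by the consistency of $\cA'$. Term $(I)$ is the in-expectation generalization gap; since $\cA'$ is $O(1/\sqrt n)$-DP, Lemma~\ref{lem:EM_stability} delivers uniform RO-stability of the same order, and the standard replace-one symmetrization argument converts this into $|(I)| = O(1/\sqrt n)$.

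Term $(III)$ is where this proof must depart from that of Lemma~\ref{lem:necessity}. In the universal setting the trick was to run $\cA$ on a bootstrap from the empirical distribution $\cD_Z$ and invoke $\cA$'s \emph{universal} consistency on that auxiliary discrete distribution---but here $\cA$ is only $\mathfrak{D}$-consistent and in general $\cD_Z \notin \mathfrak{D}$. Instead I would bring in the hypothesized distribution-free consistent algorithm $\cB$ (with rate $\xi'$) \emph{solely in the analysis}. Let $Z''$ be a bootstrap sample of size $m = n^{1/4}$ drawn i.i.d.\ from $\cD_Z$. Since $\cB$ is distribution-free consistent, its consistency applies to $\cD_Z$ (for which $R_{\cD_Z}(h)=\hat R(h,Z)$ and $R^*_{\cD_Z}=\hat R^*(Z)$), giving
\[
\hat R^*(Z) \;\geq\; \E_{Z''}\E_{h\sim \cB(Z'')}\hat R(h,Z) - \xi'(m)
\]
pointwise in $Z$. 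Taking $\E_Z$ and using symmetry across the $n$ data points reduces the right-hand side to $\E[\ell(h,z_1)] - \xi'(m)$, where the outer expectation is over $Z$, the bootstrap indices $I$, and $h=\cB(Z'')$. I then split on whether the index $1$ appears in the bootstrap multiset: conditional on $1 \notin I$ (which has probability $(1-1/n)^m \geq 1 - m/n$), the hypothesis $h = \cB(Z'')$ depends only on $\{z_j : j \neq 1\}$, so $z_1 \perp h$ and $\E[\ell(h,z_1) \mid 1 \notin I] = \E[R(h) \mid 1 \notin I] \geq R^*$. This yields $\E_Z \hat R^*(Z) \geq (1-m/n)R^* - \xi'(m)$, hence $(III) \leq m/n + \xi'(m) = O(1/\sqrt n) + \xi'(n^{1/4})$.

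Combining the three bounds gives the claimed rate $\xi(\sqrt n) + \xi'(n^{1/4}) + O(1/\sqrt n)$; the explicit numerical constants $12$ and $37$ come from careful bookkeeping in the stability-to-generalization step and in the bootstrap-collision estimate. The main obstacle is term $(III)$: the ``virtual discrete distribution'' trick from the universal setting fails because $\cA$'s consistency need not extend to $\cD_Z$, and the key conceptual step is that a non-private distribution-free learner $\cB$ can play exactly this role purely in the analysis, never appearing inside the definition of $\cA'$ itself.
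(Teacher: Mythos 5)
Your proposal is correct and uses the same algorithm (run $\cA$ on a uniform $\sqrt{n}$-subsample), with identical privacy and consistency arguments, but the analysis of the AERM property takes a genuinely different route in both halves. For the generalization term $(I)$, the paper does not invoke stability at all: it writes $\E_Z\hat{R}(\cA'(Z),Z)$ exactly as $\E_{Z'}\bigl[\tfrac{\lfloor\sqrt{n}\rfloor}{n}\hat{R}(\cA(Z'),Z')+\tfrac{n-\lfloor\sqrt{n}\rfloor}{n}R(\cA(Z'))\bigr]$, bounding the first piece by $1/\sqrt{n}$ via boundedness alone. This yields an absolute constant, whereas your privacy $\Rightarrow$ stability $\Rightarrow$ generalization route gives $|(I)|\le e^{\epsilon'}-1$ with $\epsilon'=\tfrac{2}{\sqrt{n}}(e^\epsilon-e^{-\epsilon})$, i.e.\ a constant of order $e^\epsilon$ in front of $1/\sqrt{n}$; so your version proves the lemma with a constant depending on $\epsilon$ rather than the stated absolute $37$ --- a cosmetic weakening, not a gap. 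For term $(III)$, the paper invokes Theorems~7 and~8 of \citet{shalev2010learnability} to manufacture an auxiliary learner $\cB$ that is simultaneously AERM, generalizing, and consistent, and chains $\E\hat{R}^*(Z)\approx\E\hat{R}(\cB(Z),Z)\approx R(\cB(Z))\approx R^*$, accumulating the constant $12$. Your decoupling argument --- running $\cB$ on an i.i.d.\ bootstrap from $\cD_Z$ purely in the analysis, then conditioning on whether index $1$ is hit --- needs only $\cB$'s consistency (applied to $\cD_Z$, which is legitimate since the distribution-free rate covers discrete distributions), is more elementary, and is quantitatively sharper: it gives $\xi'(n^{1/4})+n^{-3/4}$ in place of $12\xi'(n^{1/4})+36/\sqrt{n}$, and would even permit $m=\sqrt{n}$. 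Your diagnosis of why the proof of Lemma~\ref{lem:necessity} cannot be reused verbatim ($\cA$'s consistency need not extend to $\cD_Z\notin\mathfrak{D}$) is exactly the issue the paper's proof is also working around.
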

The proof, given in \Cref{sec:proof-D-learnability}, shows that the algorithm $\cA'$ that applies $\cA$ to a random subsample of size $\lfloor\sqrt{n}\rfloor$ is AERM for any distribution in the class $\mathfrak{D}$.

\begin{theorem}[Characterization of private $\mathfrak D$-learnability]\label{thm:characterization_mod}
A problem is privately $\mathfrak{D}$-learnable \emph{if} there exists an algorithm that is $\mathfrak{D}$-universally AERM and differentially private with privacy loss $\epsilon(n)\rightarrow 0$.
If in addition, the problem is (distribution-free and non-privately) learnable, then the converse is also true.
\end{theorem}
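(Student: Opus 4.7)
The plan is to derive both directions by assembling previously established components in the paper, so no substantially new machinery is required; the content of the theorem is essentially to package Corollary~\ref{corr:consistency} and Lemma~\ref{lem:necessity_mod} into a single characterization statement, being careful about which direction needs the extra distribution-free learnability assumption.

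For sufficiency, I would argue directly from Corollary~\ref{corr:consistency}. Suppose $\cA$ is $\mathfrak{D}$-universally AERM with rate $\xi(n)$ and $\epsilon(n)$-differentially private with $\epsilon(n) \to 0$. Inspecting the proof of Corollary~\ref{corr:consistency}, it only uses Lemma~\ref{lem:EM_stability} (``privacy $\Rightarrow$ stability'') together with the stability$+$AERM$\Rightarrow$consistency theorem, both of which operate on a single distribution at a time; universality over all distributions is never invoked. Hence applying Corollary~\ref{corr:consistency} pointwise for each $\cD \in \mathfrak{D}$ yields a consistency rate $\xi(n) + e^{\epsilon(n)}-1 \to 0$ that is uniform over $\mathfrak{D}$, and the privacy loss is finite (indeed vanishing), so $\cA$ witnesses private $\mathfrak{D}$-learnability.

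For the necessity direction under the added hypothesis of (distribution-free, non-private) learnability, the technical work has already been done in Lemma~\ref{lem:necessity_mod}. Given a privately $\mathfrak{D}$-learnable algorithm $\cA$ that is $\epsilon$-DP and $\mathfrak{D}$-universally consistent with rate $\xi(n)$, together with some distribution-free learner realizing rate $\xi'(n)$, Lemma~\ref{lem:necessity_mod} constructs $\cA'$ that is $\mathfrak{D}$-universally AERM with rate $12\xi'(n^{1/4}) + \tfrac{37}{\sqrt{n}} + \xi(\sqrt{n})$ and $\tfrac{2}{\sqrt{n}}(e^{\epsilon}-e^{-\epsilon})$-DP. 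Both the AERM rate and the privacy loss tend to $0$, so $\cA'$ meets all the requirements of the theorem.

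The main obstacle, which is what forces the auxiliary distribution-free learnability assumption in the converse, is the asymmetry between $\mathfrak{D}$ and arbitrary distributions. The subsample argument underlying Lemma~\ref{lem:necessity} compares the subsampled learner's empirical risk to $\hat{R}^*(Z)$ by appealing to universal consistency on a specific discrete uniform distribution supported on the given dataset $Z$, and such an empirical distribution need not lie in $\mathfrak{D}$. The role of the distribution-free learnability hypothesis is precisely to supply a separate (possibly non-private) universally consistent learner that controls the empirical risk on these adversarial discrete distributions, while $\cA$ handles the in-class distributions; Lemma~\ref{lem:necessity_mod} stitches these together and produces the stated AERM rate. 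Once that lemma is in hand the theorem follows immediately by combining with the sufficiency direction.
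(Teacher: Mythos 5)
Your proposal is correct and follows essentially the same route as the paper: the ``if'' direction is obtained by observing that Lemma~\ref{lem:EM_stability} and Corollary~\ref{corr:consistency} are per-distribution statements and hence apply to each $\cD\in\mathfrak{D}$, and the converse is delegated entirely to Lemma~\ref{lem:necessity_mod}. Your added remark about why the empirical (discrete uniform) distributions force the auxiliary distribution-free learnability hypothesis matches the paper's own discussion preceding the lemma.
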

\begin{proof}
The ``if'' part is exactly the same as the argument in Section~\ref{sec:privacy_stability}, since both Lemma~\ref{lem:EM_stability} and Lemma~\ref{corr:consistency} holds for each distribution independently.
Under the additional assumption that the problem itself is learnable (distribution-free and non-privately), the ``only if'' part is given by Lemma~\ref{lem:necessity_mod}.
\end{proof}

	This result may appear to be unsatisfactory due to the additional assumption of learnability. It is clearly a strong assumption because many problems that are $\mathfrak{D}$-learnable for a practically meaningful $\mathfrak{D}$ are not actually learnable. We provide one such example here.
\\	
	\begin{example}\label{ex:estimate_discrete}
			Let the data space be $[0,1]$, the hypothesis space be the class of all \emph{finite} subset of $[0,1]$ and the loss function $\ell(h,z) = 1_{z\notin h}$. This problem is not learnable, and not even $\mathfrak{D}$-learnable when $\mathfrak{D}$ is the class of all discrete distributions with finite number of possible values. But it is $\mathfrak{D}$-learnable when $\mathfrak{D}$ is further restricted with an upper bound on the total number of possible values. 
	\end{example}
	\begin{proof}
		For any discrete distribution with a finite support set, there is an $h\in\cH$ such that the optimal risk is $0$. Assume the problem is learnable with rate $\xi(n)$, then for some $n$ $\xi(n)<0.5$. However, we can always construct a uniform distribution over $3n$ elements and it is information-theoretically impossible for any estimators based on $n$ samples from the distribution to achieve a risk better than $2/3$. The problem is therefore not learnable. When we assume an upper bound $N$ on the maximum number of bins of the underlying distribution, then the ERM which outputs just the support of all observed data will be universally consistent with rate $\xi(n) = N/n$. 
	\end{proof}
	It turns out that we cannot hope to \emph{completely} remove the assumption from Theorem~\ref{thm:characterization_mod}. The following example illustrates that some form of qualification (implied by the learnability assumption) is necessary for the converse statement to be true. 
\\	
\begin{example}\label{ex:converse}
	Consider the learning problem in \Cref{ex:estimate_discrete}. Let $\mathfrak{D}$ be the class of all continuous distributions. There is a learning problem that is s privately $\mathfrak{D}$-learnable but no private AERM algorithm exists.
\end{example}	
	\begin{proof}
		Let the learning problem be that in Example~\ref{ex:estimate_discrete} and $\mathfrak{D}$ be the class of all continuous distributions defined on $[0,1]$.
		Consider The learning algorithm $\cA(Z)$ always returns $h = \emptyset$.\\	
	The optimal risk for any continuous distribution is $1$ because any finite subset is of measure $0$, output $\emptyset$ is $0$-consistent and $0$-generalizing, but not AERM, since the minimum empirical risk is $0$. $\cA$ is also $0$-differentially private, therefore the problem is privately $\mathfrak{D}$-learnable for $\mathfrak{D}$ being the set of all continuous distributions.\\	
	However, it is not privately $\mathfrak{D}$-learnable via an AERM, i.e., no private AERM algorithm exists for this problem.
	 We prove this by contradiction. Assume an $\epsilon$-DP AERM algorithm exists, the subsampling lemma ensures the existence of an $\epsilon(n)$-DP AERM algorithm $\cA'$ with $\epsilon(n)\rightarrow 0$. $\cA'$ is therefore generalizing by stability, and it follows that the $\cA'$ has risk $\E_{h\sim\cA'(Z)}R(h)$ converging to $0$. But there is no $h\in\cH$ such that $R(h)<1$, giving the contradiction.
	 \end{proof}
Interestingly, this problem is $\mathfrak{D}$-learnable via a non-private AERM algorithm, which always outputs $h={Z}$. This is $0$-consistent, $0$-AERM but not generalizing. This example suggests that $\mathfrak{D}$-learnability and learnability are quite different because for learnable problems, if an algorithm is consistent and AERM, then it must also be generalizing \citep[Theorem~10]{shalev2010learnability}. 

\subsection{A generic learning algorithm}
The characterization of private learnability suggests a generic (but impractical) procedure that learns all privately learnable problems (in the same flavor as the generic algorithm in \citet{shalev2010learnability} that learns all learnable problems). This is to solve
\begin{equation}\label{eq:generic_all}
\argmin_{\parbox{0.7in}{\centering  \scriptsize{$(\mathcal A, \epsilon):$ \\ $\cA: \cZ^n \rightarrow \cH$,\\ $\cA$ is $\epsilon$-DP}}} \left[\epsilon +\sup_{Z\in \cZ^n} \left(\E_{h\sim \cA(Z)}\hat{R}(h,Z)  - \inf_{h\in\cH} \hat{R}(h,Z)\right)\right],
\end{equation}
or to privately $\mathfrak{D}$-learn the problem when \eqref{eq:generic_all} is not feasible
\begin{equation}\label{eq:generic_all_mod}
\argmin_{\parbox{0.7in}{\centering \scriptsize{$(\mathcal A, \epsilon):$ \\ $\cA: \cZ^n \rightarrow \cH$,\\ $\cA$ is $\epsilon$-DP}}} \left[\epsilon +\sup_{\cD\in \mathfrak{D}}\E_{Z\sim \cD^n} \left(\E_{h\sim \cA(Z)}\hat{R}(h,Z)  - \inf_{h\in\cH} \hat{R}(h,Z)\right)\right].
\end{equation}
\begin{theorem}
Assume the problem is learnable.
If the problem is private learnable, \eqref{eq:generic_all} will always output a universally consistent private learning algorithm. If the problem is private $\mathfrak{D}$-learnable, \eqref{eq:generic_all_mod} will always output a $\mathfrak{D}$-universally consistent private learning algorithm.
\end{theorem}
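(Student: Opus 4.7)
The plan is to exhibit a feasible sequence of (algorithm, privacy parameter) pairs whose objective value in \eqref{eq:generic_all} (respectively \eqref{eq:generic_all_mod}) tends to zero, so that any (near-)minimizer must have both vanishing $\epsilon$ and vanishing always-AERM (resp.\ $\mathfrak{D}$-universal AERM) gap. Once both quantities vanish, the per-distribution form of Corollary \ref{corr:consistency} delivers universal consistency (resp.\ $\mathfrak{D}$-universal consistency).

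For the first statement, suppose the problem is privately learnable. Theorem \ref{thm:characterization} gives an $\epsilon$-DP always AERM algorithm for some constant $\epsilon<\infty$, which is universally consistent by Corollary \ref{corr:consistency}. Lemma \ref{lem:necessity} then produces, for each $n$, an algorithm $\tilde\cA_n$ that is always AERM with some rate $\tilde\xi(n)\to 0$ and $\tilde\epsilon(n)$-DP with $\tilde\epsilon(n)=\tfrac{2}{\sqrt n}(e^{\epsilon}-e^{-\epsilon})\to 0$. Plugging $(\tilde\cA_n,\tilde\epsilon(n))$ into \eqref{eq:generic_all} shows the infimum of its objective is at most $\tilde\epsilon(n)+\tilde\xi(n)\to 0$. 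Therefore any (near-)minimizer $(\cA_n^*,\epsilon_n^*)$ satisfies $\epsilon_n^*\to 0$ and has an always-AERM (hence universally AERM) gap going to zero; Corollary \ref{corr:consistency} then yields universal consistency.

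For the second statement, assume additionally that the problem is learnable in the distribution-free, non-private sense (this hypothesis is automatic in the first statement by Proposition \ref{pro:non-private-learnability}, but must be invoked explicitly here). Starting from the $\epsilon$-DP algorithm that is $\mathfrak{D}$-universally consistent and applying Lemma \ref{lem:necessity_mod}, we obtain an algorithm $\tilde\cA_n'$ that is $\mathfrak{D}$-universally AERM with rate going to $0$ and $\tfrac{2}{\sqrt n}(e^{\epsilon}-e^{-\epsilon})$-DP. This feasible pair drives the objective of \eqref{eq:generic_all_mod} to zero, and the argument concludes exactly as before by applying the per-distribution form of Corollary \ref{corr:consistency} to every $\cD\in\mathfrak{D}$.

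The main subtlety, and really the only non-cosmetic obstacle, is that the infimum in \eqref{eq:generic_all}/\eqref{eq:generic_all_mod} need not be attained in the space of all randomized learning rules, so strictly speaking one should work with an $\alpha_n$-approximate minimizer for some $\alpha_n=o(1)$; because our upper bound on the objective already goes to $0$, this loses nothing in the consistency rate. A second minor point is that the $\sup_Z$ inside \eqref{eq:generic_all} is \emph{by definition} the always-AERM gap, and the $\sup_{\cD\in\mathfrak D}\E_{Z\sim\cD^n}$ inside \eqref{eq:generic_all_mod} is the $\mathfrak{D}$-universal AERM gap, so the single scalar objective simultaneously controls privacy and the exact notion of ``ERM-like'' required by Theorem \ref{thm:characterization}/Theorem \ref{thm:characterization_mod}.
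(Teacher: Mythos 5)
Your proof is correct and follows essentially the same route as the paper's: exhibit the private always-AERM (resp.\ $\mathfrak{D}$-universally AERM) algorithm guaranteed by the characterization theorems as a feasible witness driving the objective of \eqref{eq:generic_all} (resp.\ \eqref{eq:generic_all_mod}) to zero, then invoke Corollary~\ref{corr:consistency} on the (near-)minimizer. Your remarks about working with approximate minimizers when the infimum is not attained, and about the learnability hypothesis being genuinely needed only for the $\mathfrak{D}$-learnable case, are careful refinements of the same argument.
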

\begin{proof}
If the problem is private learnable, by Theorem~\ref{thm:characterization} there exists an algorithm $\cA$ that is $\epsilon(n)$-DP and  always AERM with rate $\xi(n)$ and $\epsilon(n)+\xi(n) \rightarrow 0$. This $\cA$ is a witness in the optimization so we know that any minimizer of \eqref{eq:generic_all} will have a objective value that is no greater than $\epsilon(n)+\xi(n)$ for any $n$. Corollary~\ref{corr:consistency} concludes its universal consistency.
The second claim follows from the characterization of private $\mathfrak{D}$-learnability in Theorem~\ref{thm:characterization_mod}.
\end{proof}



It is of course impossible to minimize the supremum over any data $Z$, nor is it possible to efficiently search over the space of all algorithms, let alone DP algorithms. But conceptually, this formulation may be of interest to theoretical questions related to the search of private learning algorithms and the fundamental limit of machine learning under privacy constraints.

\section{Private learning for penalized ERM}\label{sec:pen-erm}
Now we describe a generic and practical class of private learning algorithms,
based on the idea of minimizing the empirical risk under privacy constraint:
\begin{equation}\label{eq:generic_optimization}
  \minimize_{h\in\cH} F(Z,h)=\frac{1}{n}\sum_{i=1}^{n} \ell(h,z_i) + g_n(h).
\end{equation}
The first term is empirical risk and the second term vanishes as $n$ increases so that this estimator is asymptotically ERM. The same formulation has been studied before in the context of differentially private machine learning \citep{chaudhuri2011differentially,kifer2012private}, but our focus is more generic and does not require the objective function to be convex, differentiable, continuous, or even have a finite dimensional Euclidean space embedding, hence covers a larger class of learning problems.


%

Our generic algorithm for differentially private learning is summarized in Algorithm~\ref{alg:EM_erm}. It applies the exponential mechanism \citep{mcsherry2007mechanism}
to penalized ERM. We note that this algorithm implicitly requires that $\int_{\cH} \exp(\frac{\epsilon(n)}{2\Delta q}q(h,Z)) dh < \infty$, otherwise the distribution is not well-defined and it does not make sense to talk about differential privacy. In general, if $\cH$ is a compact set with a finite volume (with respect to a base measure, such as the Lebesgue measure or counting measure), then such  a distribution always exists. We will revisit this point and discuss the practicality of this assumption in the Section~\ref{sec:inf_domain}.
\begin{algorithm}[t]
   \caption{ Exponential Mechanism for regularized ERM}
   \label{alg:EM_erm}
\begin{algorithmic}
   \STATE {\bfseries Input:} Data points $Z=\{z_1,...,z_n\}\in \cZ^n$, loss function $\ell$, regularizer $g_n$, privacy parameter $\epsilon(n)$ and a hypothesis space $\cH$.
   \STATE {1.} Construct utility function $q(h,Z):=-\frac{1}{n}\sum_{i=1}^{n}\ell(h,z_i) - g_n(h)$, and its sensitivity $\Delta q:=\sup_{h\in \cH,d(Z,Z')=1}|q(h,Z)-q(h,Z')|\le \frac{2}{n}\sup_{h\in\cH,z\in\cZ}\left|\ell(h,z)\right|.$
    \STATE{2.} Sample $h\in \cH$ with probability $\P(h)\propto\exp(\frac{\epsilon(n)}{2\Delta q} q(h,Z))$.
   \STATE {\bfseries Output:} $h$.
\end{algorithmic}
\end{algorithm}

Using the characterization results developed so far, we are able to give sufficient conditions for consistency of private learning algorithms without having to establish uniform convergence.  Define the sublevel set as
   \begin{equation}\label{eq:sublevel_set}
     \cS_{Z,t} = \{ h\in \cH \;|\; F(Z,h)\leq t+ \inf_{h\in\cH}F(Z,h)\}\,,
   \end{equation}
   where $F(h,Z)$ is the regularized empirical risk function defined in (\ref{eq:generic_optimization}).
  In particular, we assume the following conditions:

\textbf{A1}. Bounded loss function:
  $0\leq \ell(h,z)  \leq 1 \; \text{ for any }\; h\in \cH, z\in \cZ.$

\textbf{A2}. Sublevel set condition: There exist constant positive integer $n_0$, positive real number $t_0$, and a sequence of regularizer $g_n$ satisfying $\sup_{h\in \cH}|g_n(h)|=o(n)$, such that for any $0<t<t_0$, $n>n_0$
     \begin{equation}\label{eq:measure_cond}
     \E_{Z\sim \cD^n}\left(\frac{\mu(\cH)}{\mu(\cS_{Z,t})}\right) \leq K\left(\frac{1}{t}\right)^\rho,
  \end{equation}
  where $K=K(n),\rho=\rho(n)$ satisfy $\log K + \rho\log n=o(n)$.
Here the measure $\mu$ may depend on context, such as Lebesgue measure ($\cH$ is continuous) or counting measure ($\cH$ is discrete).



The first condition of boundedness is common. It is assumed in \citeauthor{vapnik1998statistical}'s characterization for ERM learnability and \citeauthor{shalev2010learnability}'s general characterization of all learnable problems. In fact, we can always consider $\cH$ to be a sublevel set such that the boundedness condition holds. For the second condition, the intuition is that we require the sublevel set to be large enough such that the sampling procedure will return a good hypothesis with large probability. $\mu(\cS_t)$ is a critical parameter in the utility guarantee for the exponential mechanism \citep{mcsherry2007mechanism}.  Also, it is worth pointing out that A2 implies that the exponential distribution is well-defined.
\begin{theorem}[General private learning]\label{thm:general_private_learning}
Let $(\cZ,\cH,\ell)$ be any problem in the general learning setting. Suppose we can choose $g_n$  such that A.1 and A.2 are satisfied with $(\rho,K,g_n,n_0, t_0)$ for a distribution $\cD$, then Algorithm \ref{alg:EM_erm} satisfies $\epsilon(n)$-privacy and  is consistent with rate
\begin{equation}\label{eq:thm_cons}
  \xi(n) = \frac{9[\log K+(\rho+2)\log n]}{n\epsilon(n)}+2\epsilon(n) + \sup_{h\in\cH}|g_n(h)|.
\end{equation}
In particular, if $\epsilon(n)=o(1)$, $\sup_{h\in\cH}|g_n(h)|=o(1)$ and $\log K+\rho\log n=o(n\epsilon(n))$ for all $\cD$ (in $\mathfrak{D}$)
Algorithm \ref{alg:EM_erm} privately learns ($\mathfrak{D}$-learns) the problem.
\end{theorem}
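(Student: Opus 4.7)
The plan is three-fold: (i) verify $\epsilon(n)$-differential privacy directly from the exponential mechanism, (ii) establish that Algorithm~\ref{alg:EM_erm} is universally AERM (or $\mathfrak{D}$-universally AERM) under A.1 and A.2 via a sublevel-set utility argument, and (iii) close with Corollary~\ref{corr:consistency}, which converts privacy plus AERM into consistency at the cost of an additional $e^{\epsilon(n)}-1\le 2\epsilon(n)$ stability term. The three summands of $\xi(n)$ correspond exactly to these pieces: the AERM rate of the exponential mechanism, the bias introduced by the regularizer $g_n$, and the stability correction.

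For step (i), since A.1 forces $\Delta q\le 2/n$ for $q(h,Z)=-F(h,Z)$, the standard $\epsilon$-DP guarantee of the exponential mechanism \citep{mcsherry2007mechanism} delivers $\epsilon(n)$-DP with no further work. For step (ii), write $F^*(Z)=\inf_h F(h,Z)$, so that the output density equals $\exp(-\epsilon(n) n F(h,Z)/4)$ normalized over $\cH$. Lower-bounding the normalizer by its integral over $\cS_{Z,t/2}$ and upper-bounding the numerator of the tail by its value on the boundary of $\cS_{Z,t}$ yields the standard utility estimate
\begin{equation*}
\P\bigl(h\notin\cS_{Z,t}\mid Z\bigr)\;\le\;\frac{\mu(\cH)}{\mu(\cS_{Z,t/2})}\exp\!\left(-\frac{\epsilon(n)\,n\,t}{8}\right).
\end{equation*}
Since $\hat R(h,Z)-\hat R^*(Z)\le F(h,Z)-F^*(Z)+\sup_h|g_n(h)|$ and $F(h,Z)-F^*(Z)$ is uniformly bounded under A.1, splitting on the event $\{h\in\cS_{Z,t}\}$ versus its complement, then taking expectation over $Z\sim\cD^n$ and applying A.2, gives
\begin{equation*}
\E\bigl[\hat R(h,Z)-\hat R^*(Z)\bigr]\;\le\;t+\sup_h|g_n(h)|+C\cdot K\!\left(\tfrac{2}{t}\right)^{\!\rho}\!\exp\!\left(-\frac{\epsilon(n)\,n\,t}{8}\right),
\end{equation*}
for an absolute constant $C$. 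Choosing $t=9[\log K+(\rho+2)\log n]/(n\epsilon(n))$ makes the exponential factor swallow the $K(2/t)^\rho$ prefactor (the gap between $9/8$ and $1$ leaves a polynomial margin), leaving a residual of order $n^{-1}$ and producing an AERM rate of $9[\log K+(\rho+2)\log n]/(n\epsilon(n))+\sup_h|g_n(h)|+o(1/n)$.

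For step (iii), Corollary~\ref{corr:consistency} converts this into universal consistency by adding a $2\epsilon(n)$ stability correction, recovering the displayed $\xi(n)$. The ``in particular'' clause then follows by observing that each summand of $\xi(n)$ vanishes under the stated asymptotic conditions $\epsilon(n)=o(1)$, $\sup|g_n|=o(1)$, and $\log K+\rho\log n=o(n\epsilon(n))$; and since every step in (ii) is per-distribution, replacing ``every $\cD$'' by ``every $\cD\in\mathfrak{D}$'' yields the private $\mathfrak{D}$-learnability variant. The main technical obstacle will be the bookkeeping around the choice of $t$: the $(2/t)^\rho$ polynomial factor from A.2 must be absorbed cleanly into the exponential, which forces the constant $9$ (any number strictly greater than $8$ would suffice, but $9$ aligns with the stated bound). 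A secondary subtlety is that A.2 is an \emph{expectation} over $Z$, so the tail estimate on $\P(h\notin\cS_{Z,t}\mid Z)$ must be handled conditionally and then averaged, rather than treated as a uniform statement in $Z$.
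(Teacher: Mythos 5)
Your proposal is correct and follows essentially the same route as the paper: sensitivity $\Delta q\le 2/n$ gives privacy via \citet{mcsherry2007mechanism}, the sublevel-set utility bound is averaged over $Z\sim\cD^n$ using A.2 and a $t\asymp\frac{\log K+(\rho+2)\log n}{n\epsilon(n)}$ choice to obtain AERM, and Corollary~\ref{corr:consistency} (privacy $\Rightarrow$ stability, plus AERM $\Rightarrow$ consistency) supplies the $2\epsilon(n)$ term; your $\cS_{t/2}$-normalizer variant of the utility lemma is just a reparametrization of the paper's use of \citet[Lemma~7]{mcsherry2007mechanism}. The only quibble is a constant in the bookkeeping: without assuming $g_n\ge 0$, the inequality $\hat R(h,Z)-\hat R^*(Z)\le F(h,Z)-F^*(Z)+\sup_h|g_n(h)|$ should carry $2\sup_h|g_n(h)|$, a discrepancy that is also present between the paper's Lemma~\ref{lemma:EM_AERM} and the stated rate \eqref{eq:thm_cons} and is immaterial to the conclusion.
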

We give an illustration of the proof in Figure~\ref{fig:bigpic2}. The detailed proof,  based on the stability argument~\citep{shalev2010learnability}, is deferred to \Cref{sec:private_beyond_uniform_convergence}. 

\begin{figure}[t]
  \centering
  \includegraphics[width=0.8\textwidth]{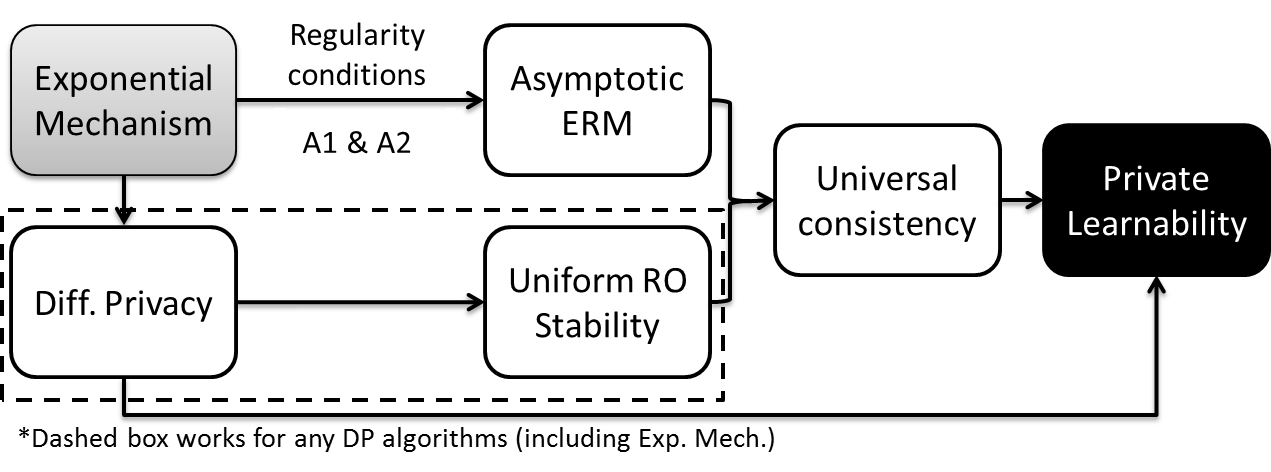}\\
  \caption{Illustration of Theorem~\ref{thm:general_private_learning}: conditions for private learnability in general learning setting.}\label{fig:bigpic2}
\end{figure}

To see that Theorem~\ref{thm:general_private_learning} actually contains a large number of problems in the general learning setting. We provide concrete examples that satisfy A1 and A2 below for both privately learnable and privately $\mathfrak{D}$-learnable problems that can be learned using Algorithm~\ref{alg:EM_erm}.

\subsection{Examples of privately learnable problems}
We start from a few cases where Algorithm~\ref{alg:EM_erm} is universally consistent for all distributions.
\begin{example}[Finite discrete $\cH$]\label{exmp:discrete}
Suppose $\cH$ can be fully encoded by $M$-bits, then
$$
\mu(\cS_t)/\mu(\cH)\geq |\cH|^{-1} = 2^{-M}\,,
$$
since there are at least $1$ optimal hypothesis for each function and now $\mu$ is the counting measure. In other word, we can take $K=2^{M}$ and $\rho=0$ in the \eqref{eq:thm_cons}. Plug this into the expression and take $g_n\equiv 0$, $\epsilon(n)=\sqrt{(M+\log n)/n}$, we get a rate of consistency $\xi(n)=O(\frac{M+\log n}{\sqrt{n}})$.
In addition, if we can find a data-independent covering set for a continuous space, then we can discretize the space and the result same results follow.  This observation
will be used in the construction of many private learning algorithms below.

\end{example}
\begin{example}[Lipschitz functions/H\"{o}lder class]\label{exmp:Lipschitz}
Let $\cH$ be a compact,  $\beta_p$-regular subset of $\mathbb R^d$ satisfying
$\mu(B\cap \cH)\ge \beta_p \mu(B)$ for any $\ell_p$ ball $B\subset \mathbb R^d$ that is small enough. Assume that $F(Z,\cdot)$ is $L$-Lipschitz on $\cH$: for any $h,h^{\prime}\in \cH$,
$$|F(Z,h)-F(Z,h^\prime)|\leq  L\|h-h^\prime\|_p\,.$$
Then for sufficiently small $t$, we have Lebesgue measure
$$
\mu(\cS_t) \geq
\beta_p\left(t/L\right)^d\,
$$
and Condition A.2 holds with $K=\mu(\cH)\beta_p^{-1}L^{d}$, $\rho=d$.
Furthermore, if we take $\epsilon(n)=\sqrt{\frac{d(\log L+\log n)+\log(\mu(\cH)/\beta_p)}{n}}$, the algorithm is  $O\left(\sqrt{\frac{d(\log L+\log n)+\log(\mu(\cH)/\beta_p)}{n}} + \underset{h\in\cH}{\sup}|g_n(h)|\right)$-consistent. \end{example}

This shows that condition \textrm{A2} holds for a large class of low-dimensional problems of interest in machine learning and one can learn the problem privately without actually needing to find a covering set algorithmically.
Specifically, the example includes many practically used methods such as logistic regression, linear SVM, ridge regression, even multi-layer neural networks, since the loss functions in these methods are jointly bounded in $(Z, h)$  and Lipschitz in $h$.

The example also raises an interesting observation that while differentially private classification is not possible in a distribution-free setting for 0-1 loss function \citep{chaudhuri2011sample}, it is learnable under smoother surrogate loss, e.g., logistic loss or hinge loss.
In other words, private learnability and computational tractability both benefit from the same relaxation.



The Lipschitz condition still requires the dimension of the hypothesis space to be $o(n)$.  Thus it does not cover high-dimensional machine learning problems where $d\gg n$, nor does it contain the example of \citet{shalev2010learnability} that ERM fails.

For high dimensional problems where $d$ grows with $n$, typically some assumptions or restrictions need to be made either on the data or on the hypothesis space (so that it becomes essentially low-dimensional). We give one example here for the problem of sparse regression.





\begin{example}[Best subset selection]
 Consider $\cH = \{h\in \R^{d}: \|h\|_0<s, \|h\|_2\leq 1\}$ and let $\ell(h,z)$ be an $L$-Lipschitz loss function. The solution can only be chosen from ${d\choose s} < d^s$ different $s$-dimensional subspaces. We can apply Algorithm~\ref{alg:EM_erm} twice to first sample a support set $S$ with utility function being the $-\min_{h\in \cH_S}F(Z,h)$, and then sample a solution in the chosen $s$-dimensional subspace. By the composition theorem this two-stage procedure is differentially private. Moreover, by the arguments in Example~\ref{exmp:discrete} and Example~\ref{exmp:Lipschitz} respectively, we have an $\mu(\cS_t)\geq \left(\frac{1}{d}\right)^s$ for the subset selection and $\mu(\cS_t)\geq (\frac{t}{L})^s$ for the low-dimensional regression.  Note that $\rho=0$ in both cases and the dependency on the ambient dimension $d$ is on the logarithm.  The first stage ensures that for the chosen support set $\cS$, $\min_{h\in \cH_S}F(Z,h)$ is close to $\min_{h\in \cH}F(Z,h)$ by $O(\frac{s\log d + \log n}{n\epsilon(n)})$ in expectation and ( the second stage ensures that the sampled hypothesis from $\cH_S$ would have objective function close to $\min_{h\in \cH_S}F(Z,h)$ by $O(\frac{s\log L+s\log n + \log(\mu(\cH_S)/\beta_p)}{n\epsilon(n)})$. This leads to an overall rate of consistency (they simply add up) of $O(\frac{s(\log d+\log n +L)+\log(\mu(\cH_S)/\beta_p)}{\sqrt{n}})$ if we choose $\epsilon(n)=1/\sqrt{n}$.
\end{example}

\subsection{Examples of privately $\mathfrak{D}$-learnable problems.}
For problems where private learnability is impossible to achieve, we may still apply  Theorem~\ref{thm:general_private_learning} to prove the weaker private $\mathfrak{D}$-learnability for some specific class of distributions.

\begin{example}[Finite Representation Dimension in the General Learning Setting]
For binary classification problems with $0$-$1$ loss (PAC learning), this has been well-studied. In particular, \citet{beimel2013characterizing} characterized the sample complexity of privately learnable problems using a combinatorial condition they call a ``Probabilistic Representation'', which basically involves finding a finite, data-independent set of  hypotheses to approximate any hypothesis in the class. Their claim is that if the ``representation dimension'' is finite, then the problem is privately learnable, otherwise it is not.
We can extend the notion of probabilistic representation beyond the finite discrete and countably infinite hypothesis class considered in \citet{beimel2013characterizing} to cases when the problem is not privately learnable (e.g, learning threshold functions on $[0,1]$). The existence of probabilistic representation for all distributions in $\mathfrak{D}$ would lead to a $\mathfrak{D}$-universally private learning algorithm.
\end{example}


Another way to define a class of distribution $\mathfrak{D}$ is to assume the existence of a reference distribution that is close to any distribution of interest as in \cite{chaudhuri2011sample}.
\begin{example}[Existence of a public reference distribution]
To deal with the $0$-$1$ loss classification problems on a continuous hypothesis domain, \citet{chaudhuri2011sample} assume that there exists a data-independent reference distribution $\cD^*$, which by multiplying a fixed constant on its density, uniformly dominates any distributtion of interest. This essentially produces a subset of distributions $\mathfrak{D}$. The consequence is that one can build an $\epsilon$-net of $\cH$ with metric defined on the risk under $\cD^*$ and this will also be a (looser) covering set of any distribution $\cD\in\mathfrak{D}$, thereby learning the problem for any distribution in the set.

The same idea can be applied to the general learning setting. For any fixed reference distribution $\cD^*$ defined on $\cZ$ and constant $c$,
$$\mathfrak{D}=\{\cD=(\cZ,\cF,\P) \mid  \P_\cD(z\in A)\leq c \P_{\cD^*}(z\in A)\text{ for }\forall A \in \cF\}$$
is a valid set of distributions and we are able to $\mathfrak{D}$-privately learn this problem whenever
we can construct a sufficiently small cover set with respect to $\cD^*$ and reduce the problem to Example~\ref{exmp:discrete}.  This class of problems includes high-dimensional and infinity dimensional problems such as density estimation, nonparametric regression, kernel methods and essentially any other problems that are strictly learnable \citep{vapnik1998statistical}, since they are characterized by one-sided uniform convergence (and the corresponding entropy condition).


\end{example}

\subsection{Discussion on uniform convergence and private learnability}

Uniform convergence requires that $\E_{Z\sim \cD^n} \sup_{h\in\cH} |\hat{R}(h,Z) - R(h)| \rightarrow 0$ for any distribution $\cD$ with a distribution independent rate. Most machine learning algorithms rely on uniform convergence to establish consistency result (e.g., through complexity measure such as VC-dimension, Rademacher Complexity, covering and bracketing numbers and so on). In fact, the learnability of ERM algorithm is characterized by the one-sided uniform convergence \citep{vapnik1998statistical}, which is only slightly weaker than requiring uniform convergence on both sides.

A key point in \citet{shalev2010learnability} is that the learnability (by any algorithm) in general learning setting is no longer characterized by variants of uniform convergence. However, the class of privately learnable problems is much smaller. Clearly, uniform convergence is not sufficient for a problem to be privately learnable (see Section~\ref{sec:counter_example}), but is it necessary? 

In binary classification with discrete domain (agnostic PAC Learning), since VC-dimension being finite characterizes the class of privately PAC learnable problems, the necessity of uniform convergence is clear. This could also be more explicitly seen from \citet{beimel2013characterizing} where the \emph{probabilistic representation dimension} is a form of uniform convergence on its own.

In the general learning setting, the problem is still open. We were not able to prove that private learnability implies uniform convergence, but we could not construct a counter example either. All our examples in this section do implicitly or explicitly uses uniform convergence, which seems to hint at a positive answer.

\section{Practical concerns}
\subsection{High confidence private learning via boosting}
We have stated all results so far  in expectation. We can easily convert these to the high-confidence learning paradigm by applying Markov's inequality, since convergence in expectation to the minimum risk implies convergence in probability to the minimum risk. While the $1/\delta$ dependence on the failure probability $\delta$ is not ideal, we can apply a similar meta-algorithm ``boosting''\citep{schapire1990strength} as in \citet[Section~7]{shalev2010learnability} to get a $\log(1/\delta)$ rate.
The approach is  similar to cross-validation. Given a pre-chosen positive integer $a$, the original boosting algorithm randomly partitions the data into $(a+1)$ subsamples of size $n/(a+1)$, and applies Algorithm~\ref{alg:EM_erm} on the first $a$ partitions, obtaining $a$ candidate hypotheses.  
The method then returns the one hypothesis with smallest validation error, calculated using the remaining subsample. To ensure differential privacy, our method instead uses the exponential mechanism to sample the best candidate hypothesis, where the logarithm of sampling probability is proportional to the negative validation error.
\begin{theorem}[High-confidence private learning]\label{thm:high_confidence}
If an algorithm $\cA$ privately learns a problem with rate $\xi(n)$ and privacy parameter $\epsilon(n)$, then the  boosting algorithm $\cA^{\prime}$ with $a=\log\frac{3}{\delta}$ is
$\max\left\{\epsilon\left(\frac{n}{\log(3/n)+1}\right), \frac{\log(3/\delta)+1}{\sqrt{n}} \right\}$-differentially private, its output $h$ obeys
$$R(h) - R^* \leq  e\xi\left(\frac{n}{\log(3/\delta)+1}\right) + C\sqrt{\frac{\log(3/\delta)}{n}} $$
for an absolute constant $C$ with probability at least $1-\delta$. 
\end{theorem}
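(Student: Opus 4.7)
The plan is to split the proof into a privacy analysis and a utility analysis, then combine them with a union bound over three bad events whose probabilities sum to at most $\delta$.

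For the privacy side, I would first observe that the $a+1=\log(3/\delta)+1$ subsamples are disjoint, so by the standard parallel-composition property of differential privacy the $a$ training calls together are $\epsilon(m)$-DP, where $m=n/(a+1)$. The selection stage is an exponential mechanism over the $a$ candidates with utility equal to the negative validation loss on the held-out subsample; its sensitivity (replacing one validation point) is $1/m=(a+1)/n$. If I calibrate its internal parameter to $\epsilon_{\mathrm{EM}}=(a+1)/\sqrt{n}$, the selection stage is $\epsilon_{\mathrm{EM}}$-DP. Because training and validation act on disjoint records, the overall privacy cost is the maximum of the two, matching the claimed bound.

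For utility, I would bound the three sources of error separately. (i) By the hypothesis on $\cA$, each candidate $h_i$ satisfies $\E[R(h_i)]-R^*\le \xi(m)$, so Markov's inequality gives $\P(R(h_i)-R^*>e\,\xi(m))\le 1/e$; since the $a$ training subsamples are independent, all candidates exceeding $e\,\xi(m)$ has probability at most $e^{-a}=\delta/3$. Call $h_{i^\star}$ any candidate meeting the bound. (ii) A Hoeffding bound plus a union bound over the $a$ candidates shows that, conditionally on the training samples and thus on $\{h_i\}$, with probability at least $1-\delta/3$ the validation averages satisfy $|\hat R_{\mathrm{val}}(h_i)-R(h_i)|\le C_1\sqrt{\log(a/\delta)/m}$ for all $i$. (iii) The standard utility guarantee for the exponential mechanism yields, with probability at least $1-\delta/3$,
\[
\hat R_{\mathrm{val}}(\hat h)\;\le\;\min_i \hat R_{\mathrm{val}}(h_i)+\frac{2\Delta\log(a/\delta)}{\epsilon_{\mathrm{EM}}}\;=\;\frac{2\log(a/\delta)}{\sqrt n}.
\]
Chaining these on the good event (probability $\ge 1-\delta$) gives
\[
R(\hat h)\le \hat R_{\mathrm{val}}(\hat h)+C_1\sqrt{\tfrac{\log(a/\delta)}{m}}\le \hat R_{\mathrm{val}}(h_{i^\star})+\tfrac{2\log(a/\delta)}{\sqrt n}+C_1\sqrt{\tfrac{\log(a/\delta)}{m}},
\]
and bounding $\hat R_{\mathrm{val}}(h_{i^\star})\le R(h_{i^\star})+C_1\sqrt{\log(a/\delta)/m}\le R^\ast+e\,\xi(m)+C_1\sqrt{\log(a/\delta)/m}$ closes the argument, after collecting the $\sqrt{\log(3/\delta)/n}$ terms (using $m=n/(a+1)$ and $a=\log(3/\delta)$) into the single constant $C$ in the theorem statement.

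The main obstacle I anticipate is the accounting, rather than any deep step: keeping the three $\delta/3$ failure events clean while matching the claimed constants requires choosing $\epsilon_{\mathrm{EM}}$ and the validation set size so that the exponential-mechanism gap and the Hoeffding gap are of the same order $\sqrt{\log(3/\delta)/n}$. A secondary subtlety is that the candidates $h_i$ are random, so step (ii) must be applied conditionally on the training sample and then integrated; this is fine because the validation sample is independent of the training partitions and the exponential mechanism's randomness is independent of both.
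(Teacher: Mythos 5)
Your proposal is correct and follows essentially the same route as the paper's proof: disjoint partitioning with parallel composition for privacy, and a three-way union bound combining Markov's inequality over the $a$ independent training runs, Hoeffding plus a union bound on the held-out validation risks, and the exponential-mechanism utility guarantee for the selection step. The only differences are cosmetic parameter choices (the paper writes the exponential-mechanism failure probability as $n^{-\eta}$ with $\eta$ tuned to $\delta/3$, and uses a slightly more conservative sensitivity bound of $2(a+1)/n$), neither of which changes the argument.
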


\subsection{Efficient sampling algorithm for convex problems}
Our proposed exponential sampling based algorithm is to establish a more explicit geometric condition upon which AERM holds, hence the algorithm may not be computationally tractable.
Ignoring the difficulty of constructing the $\epsilon$-covering set of an exponential number of elements, sampling from the set alone is not a polynomial time algorithm.  But we can solve  a subset of the continuous version of our Algorithm~\ref{alg:EM_erm} described in Theorem~\ref{thm:general_private_learning} in polynomial time to arbitrary accuracy (see also \citet[Theorem~3.4]{bassily2014private}).
\begin{proposition}\label{prop:efficiency}
If $n^{-1}\sum_{i=1}^{n}\ell(h,z_i)+g_n(h)$ is convex in $h$ and $\cH$ is a convex set, then the sampling procedure in Algorithm~\ref{alg:EM_erm} can be solved in polynomial time.
\end{proposition}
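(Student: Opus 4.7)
The plan is to recognize the target density of Algorithm~\ref{alg:EM_erm} as a log-concave distribution over a convex body, and then invoke classical polynomial-time sampling algorithms from the convex geometry literature.

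First, I would rewrite the sampling density. Algorithm~\ref{alg:EM_erm} draws $h\in\cH$ with probability proportional to $\exp\!\bigl(\tfrac{\epsilon(n)}{2\Delta q}\,q(h,Z)\bigr)=\exp\!\bigl(-\tfrac{\epsilon(n)}{2\Delta q}\,F(Z,h)\bigr)$, where $F(Z,h)=n^{-1}\sum_i \ell(h,z_i)+g_n(h)$. Under the assumption of the proposition, $F(Z,\cdot)$ is convex on the convex set $\cH$, so the negative log-density equals a positive constant times a convex function, i.e., the target is a log-concave density on a convex body. Under A.1 and the standing assumption that $\sup_{h\in\cH}|g_n(h)|<\infty$, the function $F(Z,\cdot)$ is also bounded, so the density has bounded log-range.

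Second, I would apply a polynomial-time sampler for log-concave distributions over convex bodies, such as the hit-and-run random walk (Lov\'asz--Vempala) or the Dikin/ball walk with a standard warm start. Given a membership oracle for $\cH$, an oracle for $F(Z,\cdot)$, and a containment-in-a-ball certificate for $\cH$, these algorithms produce a sample whose distribution is within total variation $\tau$ of the target in time polynomial in the dimension of $\cH$, the log-aspect ratio of $\cH$, the Lipschitz/range parameters of $F(Z,\cdot)$ (multiplied by $\epsilon(n)/\Delta q$), and $\log(1/\tau)$. The fact that $\ell$ and $g_n$ are uniformly bounded ensures the exponent is Lipschitz in an appropriate sense and that the log-Lipschitz parameter governing mixing is polynomial, which is what these walks require.

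Third, I would address approximation error. Since approximate sampling suffices for the use in Algorithm~\ref{alg:EM_erm}, taking $\tau$ polynomially small (which costs only an extra $\log(1/\tau)$ factor) makes the discrepancy between the produced distribution and the true exponential mechanism negligible; by a standard coupling argument this costs at most an additive $\tau$ in the differential privacy guarantee (or can be absorbed into an $(\epsilon,\delta)$ relaxation), and at most an additive $\tau$ in expected utility, neither of which affects the polynomial-time claim. The main obstacle I anticipate is a bookkeeping one rather than a conceptual one: verifying that $\cH$ satisfies the convex-body access assumptions (membership oracle, explicit outer radius, and either an interior point or rounding) and that the scaling of the exponent by $\epsilon(n)/(2\Delta q)=O(n\epsilon(n))$ keeps the log-Lipschitz constant of the density polynomial, so that the mixing time bounds from log-concave sampling apply without extra conditions. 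Once this is checked, the proposition follows directly, paralleling the argument used in \citet[Theorem~3.4]{bassily2014private}.
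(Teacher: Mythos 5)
Your proposal is correct and follows essentially the same route as the paper's proof: observe that convexity of $F(Z,\cdot)$ makes the exponential-mechanism density log-concave on the convex body $\cH$, then invoke a polynomial-time log-concave sampler (the paper cites \citet{applegate1991sampling}, which yields a sample arbitrarily close in total variation). Your additional bookkeeping on oracle access and on how the TV error propagates to the privacy and utility guarantees is more detail than the paper provides, but it does not change the argument.
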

\begin{proof}
When $n^{-1}\sum_{i=1}^{n}\ell(h,z_i)+g_n(h)$ is convex, the utility function $q(h,Z)$ is concave in $h$. The density to be sampled from in Algorithm \ref{alg:EM_erm} is proportional to $\exp(\frac{\epsilon n q(h,Z)}{B})$ and is log-concave. The Markov chain sampling algorithm in \citet{applegate1991sampling} is guaranteed to produce a sample from a distribution that is arbitrarily close to the target distribution (in the total variation sense) in polynomial time.
\end{proof}

\subsection{Exponential mechanism in infinite domain}\label{sec:inf_domain}

As we mention earlier, the results in Section~\ref{sec:pen-erm} based on the exponential mechanism implicitly assumes certain regularity conditions that ensures the existence  of a probability distribution.

When $\cH$ is finite, the existence is trivial. On the other hand, an infinite set $\cH$ is tricky in that there may not exist a proper distribution that satisfies $\P(h) \propto e^{\frac{\epsilon}{2\Delta q}q(Z,h)}$ for at least some $q(Z,h)$. For instance, if $\cH = \R$ and $q(Z,h)\equiv 1$ then $\int_{\R}e^{\frac{\epsilon}{2\Delta q}q(Z,h)}dh = \infty$. Such distributions that are only defined up to scale with no finite normalization constants are called improper distributions. In case of finite dimensional non-compact set, this translates into an additional assumption on the loss function and the regularization term.

Things get even trickier when $\cH$ is an infinite dimensional space, such as a subset of a Hilbert space. While probability measures can still be defined, no density function can be defined on such spaces. Therefore, we cannot use exponential mechanism to define a valid probability distribution. 

The practical implication is that exponential mechanism is really only applicable to cases when the hypothesis space $\cH$ allows for definitions of densities in the usual sense, or then $\cH$ can be approximated by such a space. For example, a separable Hilbert space can be studied by finite-dimensional projections. Also, we can approximate RKHS induced by translation invariant kernels via random Fourier features \citep{rahimi2007random}.


\section{Results for learnability under $(\epsilon,\delta)$-differential privacy}\label{sec:eps_delta}
Another way to weaken the definition of private learnability is through $(\epsilon,\delta)$-approximate differential privacy. 
\begin{definition}[\citealp{dwork2006our}]
	An algorithm $\cA$ obeys $(\epsilon,\delta)$-differential privacy if for any $Z,Z'$ such that $d(Z,Z')\leq 1$, and for any measurable set $\cS\subset \cH$
	$$
	\P_{h\sim \cA(Z)}(h\in\cS) \leq e^\epsilon \P_{h\sim \cA(Z')}(h\in\cS) + \delta.
	$$
\end{definition}
We define a version of the problem to be
\begin{definition}[Approximately Private Learnability]
	We say a learning problem is $\Delta(n)$-approximately privately learnable for some pre-specified family of rate $\Delta(n)$ if for some $\epsilon<\infty$, $\delta(n)\in \Delta(n)$, there exists a universally consistent algorithm that is $(\epsilon,\delta(n))$-DP.
\end{definition}
This is a completely different subject to study and the class of approximately privately learnable problems could be substantially larger than the pure privately learnable problems. Moreover, the picture may vary with respect to how small $\delta(n)$ is required to be. In this section, we present our preliminary investigation on this problem.

Specifically, we will consider two questions:
\begin{enumerate}
	\item Does the existence of an $(\epsilon,\delta)$-DP always AERM algorithm characterize the class of approximately private learnable problems?
	\item Are all learnable problems approximately privately learnable for different choices of $\Delta(n)$?
\end{enumerate}

The minimal requirement in the same flavor of Definition~\ref{def:private-learnability} would be to require $\Delta(n)=\{\delta(n)|\delta(n) \rightarrow 0\}$. The learnability problem turns out to be trivial under this definition due to the following observation.
\begin{lemma}\label{lem:subsampling_eps_delta}
	For any algorithm $\cA$ that acts on $Z$, $\cA'$ that runs $\cA$ on a randomly chosen subset of $Z$ of size $\sqrt{n}$ is $(0,\frac{1}{\sqrt{n}})$-DP.
\end{lemma}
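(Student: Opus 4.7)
The plan is to exploit the simple fact that with high probability the random subsample of size $\sqrt{n}$ drawn from $Z$ does not contain the single index at which $Z$ and $Z'$ differ. When it does not, the two induced distributions over subsamples are identical, giving a perfect coupling; when it does, we simply pay the failure probability in $\delta$.

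Concretely, I would proceed as follows. Fix neighboring datasets $Z,Z'\in\cZ^n$ with $d(Z,Z')=1$, and let $i^*$ be the unique index where they disagree. Let $S$ denote the uniformly random subset of $\{1,\dots,n\}$ of size $\lfloor\sqrt{n}\rfloor$ chosen by $\cA'$, and write $Z_S$ for the corresponding subsample. Define the event $E=\{i^*\notin S\}$. A direct count gives
\begin{equation*}
\P(E^c) \;=\; \frac{\lfloor\sqrt{n}\rfloor}{n} \;\le\; \frac{1}{\sqrt{n}}.
\end{equation*}
Conditional on $E$, the subsamples $Z_S$ and $Z'_S$ are identical as multisets, so for every measurable $\cS\subseteq\cH$,
\begin{equation*}
\P\bigl(\cA'(Z)\in\cS \mid E\bigr) \;=\; \P\bigl(\cA'(Z')\in\cS \mid E\bigr).
\end{equation*}

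The second step is just to combine these two facts via the standard decomposition. For any measurable $\cS$,
\begin{equation*}
\P(\cA'(Z)\in\cS) \;=\; \P(\cA'(Z)\in\cS, E) + \P(\cA'(Z)\in\cS, E^c) \;\le\; \P(\cA'(Z')\in\cS, E) + \P(E^c) \;\le\; \P(\cA'(Z')\in\cS) + \frac{1}{\sqrt{n}},
\end{equation*}
which is exactly the definition of $(0,1/\sqrt{n})$-DP with $e^{0}=1$. Symmetry in $Z$ and $Z'$ gives the other direction.

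There is really no obstacle here; the only mild subtlety is that $\cA$ itself is arbitrary (not assumed private), so all randomness buying privacy comes from the subsampling step, and we must be careful to couple only the choice of $S$ (not the internal randomness of $\cA$) across $Z$ and $Z'$. Once the coupling is set up on the event $E$, the bound follows immediately. Note that if one wants $\lfloor\sqrt{n}\rfloor$ replaced by $\lceil\sqrt{n}\rceil$ or similar, the probability bound changes only by a negligible factor.
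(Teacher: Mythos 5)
Your proof is correct and follows essentially the same route as the paper's: condition on whether the differing index lands in the subsample, use that the conditional output distributions coincide when it does not, and absorb the probability $\lfloor\sqrt{n}\rfloor/n\le 1/\sqrt{n}$ that it does into the additive $\delta$ term. The paper writes this with the law of total probability and conditional probabilities rather than your joint-probability/coupling phrasing, but the argument is identical.
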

\begin{proof}
	Let $Z$ and $Z'$ be adjacent datasets that differs only in data point $i$. For any $i$ and any $S \in \sigma(\cH)$.
	\begin{align*}
	\P(\cA'(Z) \in S)  &= \P_I(\cA(Z_I)\in S|i\in I) \P(i\in I) + \P_I(\cA(Z_I)\in S|i\notin I) \P(i\notin I)\\
	&=\P_I(\cA(Z_I)\in S|i\in I) \P(i\in I) + \P_I(\cA(Z'_I)\in S|i\notin I) \P(i\notin I)\\
	&= \P(\cA'(Z') \in S)  + [\P_I(\cA(Z_I)\in S|i\in I)- \P_I(\cA'(Z_I)\in S|i\in I) ] \P(i\in I)\\
	&\leq \P(\cA'(Z') \in S) + \P(i\in I)\\
	&= e^0 \P(\cA'(Z') \in S)  + \frac{1}{\sqrt{n}}.
	\end{align*}
	This verifies the $(0,1/\sqrt{n})$-DP of algorithm $\cA'$.
\end{proof}
The above lemma suggests that if $\delta(n)=  o(1)$ is all we need for the \emph{approximately private learnability}, then any consistent learning algorithm can be made approximately DP by simply subsampling. In other words, any learnable problem is also learnable under approximate differential privacy.

To get around this triviality, we need to specify a sufficiently fast rate of $\delta(n)$ going to $0$. While it is common to require that $\delta(n)  = o(1/\text{poly}(n))$\ \footnote{Here the notation ``$o(1/\text{poly}(n))$'' means ``decays faster than any polynomial of $n$''.  A sequence $a(n)=o(1/\text{poly}(n))$ if and only if $a(n)=o(n^{-r})$ for any $r>0$.} for cryptographically strong privacy protection, requiring $\delta(n) = o(1/n)$ is already enough to invalidate the above subsampling argument and makes the problem of learnability a non-trivial one.

Again, the question is whether AERM characterizes approximately private learnability and whether there is a gap between the class of learnable and approximately privately learnable problems.

Here we show that the ``folklore'' Lemma~\ref{lem:EM_stability} and subsampling lemma (Lemma~\ref{lem:sampling_thm}) can be extended to work with $(\epsilon,\delta)$-DP and then we provide a positive answer to the first question.
\begin{lemma}[Stability of $(\epsilon,\delta)$-DP]\label{lem:stability_approx}
	If $\cA$ is $(\epsilon,\delta)$-DP, and $0\leq \ell(h,z)\leq 1$, then $\cA$ is $(e^\epsilon - 1 + \delta)$-Strongly Uniform RO-stable.
\end{lemma}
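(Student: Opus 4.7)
The plan is to mirror the proof of Lemma~\ref{lem:EM_stability} (``privacy implies stability'' for pure $\epsilon$-DP) using a layer-cake / superlevel-set decomposition of the expected loss, and then to apply the $(\epsilon,\delta)$-DP definition to each superlevel set.

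Fix any $z\in\cZ$ and any $Z,Z'\in\cZ^n$ with $d(Z,Z')=1$, and let $f(h):=\ell(h,z)\in[0,1]$. First I would write the expected loss via the layer-cake formula
\begin{equation*}
\E_{h\sim\cA(Z)}f(h) \;=\; \int_0^1 \P_{h\sim\cA(Z)}\bigl(f(h)>t\bigr)\,dt \;=\; \int_0^1 \P_{h\sim\cA(Z)}(h\in S_t)\,dt,
\end{equation*}
where $S_t:=\{h\in\cH : f(h)>t\}$ is measurable since $f$ is a loss evaluated at a fixed $z$. Applying the $(\epsilon,\delta)$-DP definition to each $S_t$ gives $\P_{h\sim\cA(Z)}(h\in S_t)\le e^\epsilon \P_{h\sim\cA(Z')}(h\in S_t)+\delta$, and integrating over $t\in[0,1]$ yields
\begin{equation*}
\E_{h\sim\cA(Z)}f(h) \;\le\; e^\epsilon\,\E_{h\sim\cA(Z')}f(h) + \delta.
\end{equation*}

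Next I would subtract $\E_{h\sim\cA(Z')}f(h)$ from both sides and use $f\in[0,1]$ to bound $\E_{h\sim\cA(Z')}f(h)\le 1$, obtaining
\begin{equation*}
\E_{h\sim\cA(Z)}f(h)-\E_{h\sim\cA(Z')}f(h) \;\le\; (e^\epsilon-1)\,\E_{h\sim\cA(Z')}f(h)+\delta \;\le\; (e^\epsilon-1)+\delta.
\end{equation*}
Swapping the roles of $Z$ and $Z'$ gives the same bound in the other direction, so the absolute difference is at most $e^\epsilon-1+\delta$. Taking the supremum over $z$ and over adjacent $Z,Z'$ yields the claimed $(e^\epsilon-1+\delta)$-strong uniform RO-stability.

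There is no real obstacle here; the only subtlety is measurability of the superlevel sets $S_t$, but since we are evaluating a fixed loss function at a fixed $z$ and $h\mapsto\ell(h,z)$ is assumed measurable (as required for $R(h)$ and $\hat R(h,Z)$ to be well-defined in the general learning setting), the layer-cake step goes through without any additional assumption. The structural observation is simply that $(\epsilon,\delta)$-DP transfers to expectations of bounded measurable functions with an additive $\delta$ and a multiplicative $e^\epsilon$, which when combined with boundedness of $f$ gives the stated stability constant.
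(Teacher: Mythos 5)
Your proof is correct and arrives at exactly the claimed constant, but it takes a genuinely different route from the paper's. The paper fixes the single event $E=\{h : p(h)\ge p'(h)\}$ on which the output density under $Z$ dominates that under $Z'$, bounds $\left|\E_{h\sim\cA(Z)}\ell(h,z)-\E_{h\sim\cA(Z')}\ell(h,z)\right|$ by $\int_E \bigl(p(h)-p'(h)\bigr)\,dh=\P_{h\sim\cA(Z)}(h\in E)-\P_{h\sim\cA(Z')}(h\in E)$, and then applies the $(\epsilon,\delta)$-DP inequality once to $E$, giving $(e^\epsilon-1)\P_{h\sim\cA(Z')}(h\in E)+\delta\le e^\epsilon-1+\delta$. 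You instead decompose the expectation over the superlevel sets of the loss via the layer-cake formula and apply the DP inequality at every level $t$ before integrating. The trade-off: the paper's argument invokes DP only once and directly mirrors the pure-DP Lemma~\ref{lem:EM_stability}, but as written it presumes that $\cA(Z)$ and $\cA(Z')$ admit densities $p,p'$ with respect to a common base measure (otherwise one must pass to the Hahn decomposition of the signed measure); your argument never mentions densities, requires only measurability of $h\mapsto\ell(h,z)$, and yields along the way the reusable intermediate inequality $\E_{h\sim\cA(Z)}f(h)\le e^\epsilon\,\E_{h\sim\cA(Z')}f(h)+\delta$ for any measurable $f:\cH\to[0,1]$, i.e., the general statement that $(\epsilon,\delta)$-DP transfers to expectations of bounded functions. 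Both proofs are valid; there is no gap in yours.
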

\begin{proof}
	For any $Z,Z'$ such that $d(Z,Z')\leq 1$ and for any $z\in \cZ$. Let the event $E = \{h | p(h)\geq p'(h)\}$, 
	\begin{align*}
	&\left|\E_{h\sim\cA(Z)} \ell(h,z) - \E_{h\sim\cA(Z')}\ell(h,z)\right| = \left|\int_h  \ell(h,z)p(h)dh - \int_h\ell(h,z)p'(h) dh\right| \\
	\leq& \sup_{h,z}\ell(h,z)\int_E  p(h) - p'(h)dh \leq  \int_E  p(h) - p'(h)dh  
	= \P_{h\sim \cA(Z)}(h\in E) - \P_{h\sim \cA(Z')}(h\in E)\\
	\leq& (e^\epsilon-1)\P_{h\sim \cA(Z')}(h\in E)  + \delta \leq e^\epsilon-1  + \delta.
	\end{align*}
	The last line applies the definition of $(\epsilon,\delta)$-DP.
\end{proof}
\begin{lemma}[Subsampling Lemma of $(\epsilon,\delta)$-DP]\label{lem:subsampling_approx}
	If $\cA$ is $(\epsilon,\delta)$-DP, then $\cA'$ that acts on a random subsample of $Z$ of size $\gamma n$ obeys $(\epsilon',\delta')$-DP with $\epsilon' = \log(1+\gamma e^\epsilon(e^{\epsilon}-1))$ and $\delta' = \gamma e^\epsilon\delta$.
\end{lemma}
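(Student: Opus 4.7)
The plan is to mimic the strategy of \Cref{lem:subsampling_eps_delta} but exploit the $(\epsilon,\delta)$-DP guarantee of $\cA$ itself in order to actually amplify privacy beyond the trivial bound $(\epsilon,\gamma\delta)$-DP that subsampling alone yields. Fix adjacent datasets $Z,Z'$ differing only at index $i$, a measurable $S\subseteq\cH$, and let $I$ be the uniformly random size-$\gamma n$ subset of $[n]$. Decomposing by whether $i\in I$ and using that $Z_I=Z'_I$ whenever $i\notin I$,
\begin{align*}
\P(\cA'(Z)\in S) &= \gamma A + (1-\gamma)B, \\
\P(\cA'(Z')\in S) &= \gamma A' + (1-\gamma)B,
\end{align*}
where $A=\P(\cA(Z_I)\in S\mid i\in I)$, $A'=\P(\cA(Z'_I)\in S\mid i\in I)$, and $B=\P(\cA(Z_I)\in S\mid i\notin I)$.

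The proof then rests on two applications of $(\epsilon,\delta)$-DP. First, conditional on any $I\ni i$, the sub-databases $Z_I$ and $Z'_I$ are adjacent, so directly $A\le e^\epsilon A' +\delta$. Second, I claim that $A'\le e^\epsilon B +\delta$. To see this, I couple the conditional laws of $I$ given $\{i\in I\}$ and given $\{i\notin I\}$: let $J$ be a uniformly random size-$(\gamma n-1)$ subset of $[n]\setminus\{i\}$ and, independently, let $k$ be uniform in $[n]\setminus(J\cup\{i\})$; set $I_1=J\cup\{i\}$ and $I_2=J\cup\{k\}$. A short combinatorial check shows that $I_1\stackrel{d}{=}(I\mid i\in I)$ and $I_2\stackrel{d}{=}(I\mid i\notin I)$. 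Under this coupling, $Z'_{I_1}$ and $Z_{I_2}$ differ in exactly one coordinate (they share $J$ and differ only between $z'_i$ and $z_k$), so the $(\epsilon,\delta)$-DP of $\cA$ applied pointwise and then averaged gives $A'\le e^\epsilon B +\delta$.

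Chaining the two inequalities yields $A\le e^\epsilon A' +\delta \le e^\epsilon(e^\epsilon B+\delta)+\delta$. Substituting into $\gamma A+(1-\gamma)B$, comparing term by term with $\gamma A' + (1-\gamma)B$, and bundling the $\delta$-terms produces
$$
\P(\cA'(Z)\in S) \;\le\; \P(\cA'(Z')\in S) + \gamma e^\epsilon(e^\epsilon-1)\,B + \gamma e^\epsilon \delta,
$$
from which, using that $B$ is bounded above by $\P(\cA'(Z')\in S)$ up to the cases controlled by the second inequality, the desired form $\P(\cA'(Z)\in S)\le [1+\gamma e^\epsilon(e^\epsilon-1)]\,\P(\cA'(Z')\in S)+\gamma e^\epsilon\delta$ follows; taking the logarithm gives $\epsilon'=\log(1+\gamma e^\epsilon(e^\epsilon-1))$ and $\delta'=\gamma e^\epsilon\delta$.

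The main obstacle will be the second DP bound: verifying that the coupling above truly matches the conditional marginals (a hypergeometric counting exercise similar in flavor to the one in \Cref{lem:subsampling_eps_delta}) and then, in the final algebraic step, handling the regime $A'<B$ so that the bound on $B$ in terms of $\P(\cA'(Z')\in S)$ retains a clean multiplicative form rather than the crude $\P(\cA'(Z')\in S)/(1-\gamma)$ — this is precisely where the second DP inequality $A'\le e^\epsilon B +\delta$ from Step 2 pays off, since it forces $B$ and $A'$ to be close enough that the loose $1/(1-\gamma)$ factor is avoided.
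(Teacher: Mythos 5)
Your setup and both privacy applications match the paper's proof: the decomposition $\P(\cA'(Z)\in S)=\gamma A+(1-\gamma)B$, the bound $A\le e^\epsilon A'+\delta$ from adjacency of $Z_I$ and $Z'_I$ when $i\in I$, and the inequality $A'\le e^\epsilon B+\delta$ are exactly the ingredients the paper uses. Your coupling of the two conditional laws of $I$ is a cleaner packaging of the paper's double-counting of neighboring index sets in $\cI_1=\{I: i\in I\}$ and $\cI_2=\{I: i\notin I\}$, and it is correct (the hypergeometric check goes through, and $Z'_{I_1}$, $Z_{I_2}$ indeed differ in one entry). The gap is in the final assembly. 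After chaining all the way down to $B$ and arriving at
$$\P(\cA'(Z)\in S)\;\le\;\P(\cA'(Z')\in S)+\gamma e^\epsilon(e^\epsilon-1)B+\gamma e^\epsilon\delta,$$
you need $B\le\P(\cA'(Z')\in S)=\gamma A'+(1-\gamma)B$, which is equivalent to $B\le A'$ and is false in general. Your proposed rescue --- that $A'\le e^\epsilon B+\delta$ controls the problematic regime --- points the inequality the wrong way: that bound prevents $A'$ from greatly exceeding $B$, whereas the failure mode is $B>A'$, for which you would need the reverse bound $B\le e^\epsilon A'+\delta$; and even with that, you only obtain $B\le\P(\cA'(Z')\in S)/\bigl(1-\gamma(1-e^{-\epsilon})\bigr)+O(\delta)$, which degrades the multiplicative factor beyond the claimed $1+\gamma e^\epsilon(e^\epsilon-1)$.

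The fix --- and this is what the paper does --- is to stop the chaining one step earlier. Keep the excess term as $\gamma(e^\epsilon-1)A'+\gamma\delta$ and bound $A'$, not $B$, by $\P(\cA'(Z')\in S)$: your second inequality gives $B\ge e^{-\epsilon}(A'-\delta)$, hence
$$\P(\cA'(Z')\in S)=\gamma A'+(1-\gamma)B\ \ge\ \bigl[\gamma+(1-\gamma)e^{-\epsilon}\bigr]A'-(1-\gamma)e^{-\epsilon}\delta\ \ge\ e^{-\epsilon}A'-e^{-\epsilon}\delta,$$
so $A'\le e^\epsilon\P(\cA'(Z')\in S)+\delta$. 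Substituting back yields $(1+\gamma e^\epsilon(e^\epsilon-1))\P(\cA'(Z')\in S)+\gamma\delta+\gamma(e^\epsilon-1)\delta$, and the $\delta$-terms bundle to exactly $\gamma e^\epsilon\delta$, giving the stated $(\epsilon',\delta')$.
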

\begin{proof}
	For any event $E\in \sigma(\cH)$, let $i$ be the coordinate where $Z$ and $Z'$ differs
\begin{align}
	&\P_{h\sim \cA'(Z)}(h\in E) = \gamma \P_{h\sim A(Z_I)}(h\sim E | i\in I ) + (1-\gamma)\P_{h\sim A(Z_I)}(h\sim E | i\notin I )\nonumber\\
	=&\gamma \P_{h\sim A(Z_I)}(h\sim E | i\in I ) + (1-\gamma)\P_{h\sim A(Z'_I)}(h\sim E | i\notin I )\nonumber\\
	=& \gamma \P_{h\sim A(Z_I)}(h\sim E | i\in I ) -\gamma \P_{h\sim A(Z'_I)}(h\sim E | i\in I ) + \gamma \P_{h\sim A(Z'_I)}(h\sim E | i\in I ) \nonumber\\
	&+(1-\gamma)\P_{h\sim A(Z'_I)}(h\sim E | i\notin I )\nonumber\\
	=& \P_{h\sim \cA'(Z')}(h\in E) + \gamma [\P_{h\sim A(Z_I)}(h\sim E | i\in I ) -\P_{h\sim A(Z'_I)}(h\sim E | i\in I ) ]\nonumber\\
	\leq& \P_{h\sim \cA'(Z')}(h\in E) + \gamma (e^\epsilon-1)\P_{h\sim A(Z'_I)}(h\sim E | i\in I ) + \gamma \delta,\label{eq:subsampling_eps_delta_der1}
\end{align}
where in last line, we apply $(\epsilon,\delta)$-DP of $\cA$. 

It remains to show that $\P_{h\sim A(Z'_I)}(h\sim E | i\in I )$ is similar to $\P_{h\sim \cA'(Z')}(h\in E)$. First,
\begin{equation}
\P_{h\sim \cA'(Z')}(h\in E)  = \gamma \P_{h\sim \cA(Z'_{I})}(h\in E|i\in I) + (1-\gamma )\P_{h\sim \cA(Z'_{I})}(h\in E|i\notin I).\label{eq:subsampling_eps_delta_der2}
\end{equation}
Denote $\cI_1 =\{ I | i\in I\}$, $\cI_2 =\{ I | i\notin I\}$. We known $|\cI_1| = {n-1\choose\gamma n -1}$, and $|\cI_2| = {n-1\choose\gamma n}$ and $|\cI_1|/|\cI_2| = \gamma n/(n-\gamma n)$. For every $I \in \cI_2$ there are precisely $\gamma n$ elements $J\in\cI_1$ such that $d(I,J) = 1$. Likewise, for every $J\in \cI_1$, there are $n-\gamma n$ elements $I\in \cI_2$ such that $d(I,J)=1$.
It follows by symmetry that if we apply $(\epsilon,\delta)$-DP to $1/\gamma n$ of each $I \in \cI_2$ and change $I$ to their corresponding $J\in \cI_1$,  then each $J\in \cI_1$ will receive $(n-\gamma n)/\gamma n$ ``contribution'' in total from the sum over all $I \in \cI_2$.
\begin{align*}
&\P_{h\sim \cA(Z'_{I})}(h\in E|i\notin I) = \frac{1}{|\cI_2|}\sum_{I\in \cI_2} \P_{h\sim \cA(Z'_{I})}(h\in E) \\
=& \frac{1}{|\cI_2|}\sum_{I\in \cI_2} \sum_{j=1}^{\gamma n} \frac{1}{\gamma n}\P_{h\sim \cA(Z'_{I})}(h\in E)\\
\geq& \frac{|\cI_1|}{|\cI_2|}  \frac{1}{|\cI_1|}\sum_{J \in \cI_1} \frac{n-\gamma n}{\gamma n } e^{-\epsilon}( \P_{h\sim \cA(Z'_{J})}(h\in E) -\delta)\\
=&\frac{1}{|\cI_1|}\sum_{J \in \cI_1} e^{-\epsilon}( \P_{h\sim \cA(Z'_{J})}(h\in E) -\delta) =  e^{-\epsilon}\P_{h\sim \cA(Z'_{I})}(h\in E|i \in I)  - e^{-\epsilon}\delta，
\end{align*}
Substitute into \eqref{eq:subsampling_eps_delta_der2}, we get 
$$
\P_{h\sim \cA(Z'_{I})}(h\in E|i \in I)  \leq  \frac{1}{\gamma + (1-\gamma)e^{-\epsilon}}  \P_{h\sim \cA'(Z')}(h\in E) + \frac{(1-\gamma)e^{-\epsilon}}{\gamma + (1-\gamma)e^{-\epsilon}} \delta.
$$
We further relax the upper bound to a simple form $e^\epsilon \P_{h\sim \cA'(Z')}(h\in E) + \delta$ and substitute into \eqref{eq:subsampling_eps_delta_der1}, we have
$$
\P_{h\sim \cA'(Z)}(h\in E) \leq (1+ \gamma e^\epsilon(e^\epsilon-1))\P_{h\sim \cA'(Z')}(h\in E) + \gamma \delta + \gamma(e^\epsilon-1)\delta,
$$
which concludes the proof.
\end{proof}

Using the above two lemmas, we are able to establish the same result which says that AERM characterizes the approximate private learnability for certain classes of $\Delta(n)$.
\begin{theorem}\label{thm:eps_delta_DP}
	A problem is $\Delta(n)$-approximately privately learnable
	implies that there exists an always AERM algorithm that is $(\epsilon(n),n^{-1/2}e^\epsilon\delta(\sqrt{n}))$-DP for some $\epsilon(n)\rightarrow 0$ and $\delta(\sqrt{n})\in \Delta(n)$. The converse is also true if $n^{-1/2}e^\epsilon\delta(\sqrt{n})\in \Delta(n)$.
\end{theorem}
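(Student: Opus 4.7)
The plan is to mirror the proof of Theorem~\ref{thm:characterization} in the $(\epsilon,\delta)$-DP setting, with Lemma~\ref{lem:stability_approx} and Lemma~\ref{lem:subsampling_approx} playing the roles that Lemma~\ref{lem:EM_stability} and Lemma~\ref{lem:sampling_thm} played in the pure-DP case. The two directions then decouple cleanly: stability gives the converse and subsampling gives the forward direction, with the always-AERM part adapting the argument of Lemma~\ref{lem:necessity}.

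For the converse (sufficiency of AERM), suppose $\cA$ is always AERM with some rate $\xi_{\text{aerm}}(n)\to 0$ and $(\epsilon(n),\delta'(n))$-DP with $\epsilon(n)\to 0$ and $\delta'(n):=n^{-1/2}e^\epsilon\delta(\sqrt{n})\in\Delta(n)$ (which by virtue of being in $\Delta(n)$ must tend to $0$). Lemma~\ref{lem:stability_approx} immediately yields strong uniform RO-stability with parameter $e^{\epsilon(n)}-1+\delta'(n)\to 0$. Combining stability with always AERM via the per-distribution version of the ``stability + AERM $\Rightarrow$ consistency'' result (Theorem~\ref{thm:stable_aerm}, mirroring Corollary~\ref{corr:consistency}) gives universal consistency. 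Since $\delta'(n)\in\Delta(n)$ by hypothesis, this witnesses $\Delta(n)$-approximate private learnability.

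For the forward direction (necessity of AERM), start with an $(\epsilon,\delta(n))$-DP universally consistent algorithm $\cA$ with rate $\xi(n)$ and $\delta(n)\in\Delta(n)$. Define $\cA'$ by running $\cA$ on a uniformly random subset $I\subset\{1,\dots,n\}$ of size $\lfloor\sqrt{n}\rfloor$, i.e., with subsampling rate $\gamma=n^{-1/2}$. Lemma~\ref{lem:subsampling_approx} gives $\cA'$ the guarantee $(\log(1+n^{-1/2}e^\epsilon(e^\epsilon-1)),\,n^{-1/2}e^\epsilon\delta(\sqrt{n}))$-DP, and the first parameter is $O(n^{-1/2})\to 0$. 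Consistency of $\cA'$ with rate $\xi(\sqrt{n})$ is inherited from $\cA$. Always AERM is established exactly as in Lemma~\ref{lem:necessity}: for any fixed $Z\in\cZ^n$, apply universal consistency of $\cA$ to the uniform discrete distribution on $Z$; for this distribution the population minimum coincides (up to a $O(1/\sqrt{n})$ term from the chance that subsample draws repeats) with the empirical minimum $\hat R^*(Z)$, so the expected empirical risk of $\cA'(Z)$ is close to $\hat R^*(Z)$ uniformly in $Z$.

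The main obstacle is the always-AERM step in the forward direction, where one must carefully propagate the additive $\delta$ through the subsample-to-full-data comparison and track the $O(1/\sqrt{n})$ slack arising from sampling with vs.\ without replacement on the empirical distribution of $Z$. The stability-based arguments from Section~\ref{sec:necessity} work here because they ultimately only use that the privacy parameter is small, but the $\delta$ component of privacy contributes only additively to stability (Lemma~\ref{lem:stability_approx}), so the same universality-on-a-discrete-uniform-distribution trick in Lemma~\ref{lem:necessity} goes through unchanged; the only quantitative change is an extra $O(\delta(\sqrt{n}))$ term in the AERM rate, which is absorbed into the rates since $\delta(\sqrt{n})\to 0$.
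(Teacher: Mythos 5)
Your proposal matches the paper's proof essentially step for step: the ``if'' direction combines Lemma~\ref{lem:stability_approx} with Theorem~\ref{thm:stable_aerm}, and the ``only if'' direction subsamples at rate $\gamma=n^{-1/2}$, invokes Lemma~\ref{lem:subsampling_approx} for the privacy claim, and reuses the always-AERM argument of Lemma~\ref{lem:necessity} verbatim (which, as you note, depends only on universal consistency of $\cA$ on the discrete uniform distribution over $Z$, so it is unaffected by the additive $\delta$). The paper omits the quantitative bookkeeping you spell out in your last paragraph, so your writeup is if anything slightly more complete.
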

\begin{proof}
	If we have an always AERM algorithm with $\xi_{erm}(n)$ that is $(\epsilon(n),\delta(n))$-DP for $\delta(n)\in \Delta(n)$. Then by Lemma~\ref{lem:stability_approx}, this algorithm is strongly uniform RO-stable with rate $e^{\epsilon(n)}-1 + \delta(n)$. By Theorem~\ref{thm:stable_aerm}, the algorithm is universally consistent with rate $\xi_{erm}(n) + e^{\epsilon(n)}-1 + \delta(n)$. This establishes the ``if'' part.
	
	To see the ``only if'' part, by definition if a problem is $\Delta(n)$-approximately privately learnable with $\epsilon$ and $\delta(n)\in \Delta(n)$. Then by Lemma~\ref{lem:subsampling_approx} with $\gamma = 1/\sqrt{n}$, we get an algorithm that obeys the privacy condition. It remains to prove always AERM, which requires exactly the same arguments in the proof of Lemma~\ref{lem:necessity}. Details are omitted.
\end{proof}
Note that the results above suggest that in the two canonical settings $\Delta(n)=o(1/n)$ or $\Delta(n)=o(1/\text{poly}(n))$, existence of a private AERM algorithm that satisfies the stronger constraint $\epsilon(n)=o(1)$ characterizes the learnability.

The next question that whether any learnable problems are also approximately privately learnable would depend on how fast $\delta(n)$ is required to decay. We know that when we only have $\Delta(n)=o(1)$, all learnable problems are approximately privately learnable, and when we have $\Delta(n)=\{0\}$, only a strict subset of these problems is privately learnable. The following result establishes that when $\delta(n)$ needs to go to $0$ with a sufficiently fast rate, there is separation between learnability and approximately private learnability.
\begin{proposition}\label{prop:phase_transition}
	Let $\Delta(n) =\{\delta(n)|\delta(n)\leq \tilde{\delta}(n)\}$ for some sequence $\tilde{\delta}(n)\rightarrow 0$. The following statements are true.
	\begin{itemize}
		\item  All learnable problems are $\Delta(n)$-approximately privately learnable, if $\tilde{\delta}(n)=\omega(1/n)$.
		\item There exists a problem that is learnable but not $\Delta(n)$-approximately privately learnable, if  $\tilde{\delta}(n) \leq \frac{\exp(-\epsilon(n^2) n^2)}{n}$
	\end{itemize}	
\end{proposition}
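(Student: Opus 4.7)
The plan is to treat the two items separately. The first is a straightforward extension of the subsampling idea (\Cref{lem:subsampling_eps_delta}); the second combines subsampling with the threshold counterexample from \Cref{prop:counterexample}, upgraded to $(\epsilon,\delta)$-differential privacy via group privacy.

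For the first item, I would generalize \Cref{lem:subsampling_eps_delta} from subsample size $\sqrt n$ to arbitrary $m$: running any algorithm $\cA$ on a uniformly random $m$-subset of $Z$ is $(0,m/n)$-DP, since the only ingredient of the proof is that $\P(i\in I)=m/n$. Given a universally consistent algorithm $\cA$ at rate $\xi(n)\to 0$ for a learnable problem, I would pick $m(n)=\lfloor n\tilde\delta(n)\rfloor$, which diverges because $\tilde\delta(n)=\omega(1/n)$ forces $n\tilde\delta(n)\to\infty$. The subsampled algorithm is $(0,\tilde\delta(n))$-DP and remains universally consistent with rate $\xi(m(n))\to 0$, establishing the first bullet.

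For the second item, I would reuse the threshold problem of \Cref{prop:counterexample} ($\cX=[0,1]$ with $0$-$1$ loss and threshold hypotheses). Suppose it were $\Delta(n)$-approximately privately learnable. By the ``only if'' direction of \Cref{thm:eps_delta_DP}, there is an always-AERM algorithm $\cA$ at sample size $N=n^2$ that is $(\epsilon(N),\delta'(N))$-DP with $\epsilon(N)=O(1/\sqrt N)\to 0$ and $\delta'(N)=N^{-1/2}e^\epsilon\delta(\sqrt N)=n^{-1}e^\epsilon\delta(n)$. I would then mirror the original counterexample with $K=\lceil e^{\epsilon(N)N}\rceil$ disjoint intervals $I_i=[h_i-\eta/3,h_i+\eta/3]$, $\eta=1/K$, and deterministic datasets $Z_i$ of size $N$ supported in $I_i$ and labeled by $h_i$. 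Always-AERM forces $\P(\cA(Z_i)\in I_i)\ge 0.8$ uniformly in $i$ for large $N$. Group privacy for $(\epsilon(N),\delta'(N))$-DP at Hamming distance $N$ between $Z_1$ and $Z_i$ yields
\[
\P(\cA(Z_1)\in I_i) \;\ge\; e^{-N\epsilon(N)}\,\P(\cA(Z_i)\in I_i) \;-\; \frac{\delta'(N)}{e^{\epsilon(N)}-1},
\]
and summing over $i=2,\dots,K$ with the $I_i$ disjoint gives $\P(\cA(Z_1)\notin I_1)\ge \Omega(1) - K\delta'(N)/\epsilon(N)$. This contradicts always-AERM on $Z_1$ as soon as $K\delta'(N)/\epsilon(N)\to 0$, which, unwinding $N=n^2$ and $\delta'(N)=n^{-1}e^\epsilon\delta(n)$, is exactly the condition $\tilde\delta(n)\le \exp(-\epsilon(n^2)n^2)/n$ stated in the proposition.

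The main obstacle is the interplay between the vanishing privacy parameter $\epsilon(N)$ produced by subsampling and the amplification of the $\delta'$-error in group privacy: the denominator $e^{\epsilon(N)}-1\sim \epsilon(N)$ shrinks, so the $\delta'$-contribution is inflated by roughly $1/\epsilon(N)$ before being multiplied by $K=e^{\epsilon(N)N}$. Balancing these three factors (keeping $(K-1)e^{-N\epsilon(N)}$ bounded below while absorbing $K\delta'(N)/\epsilon(N)$ using the fast decay of $\tilde\delta$) is the delicate step and it dictates the precise exponent in the claimed impossibility condition. A secondary, more routine, concern is constant bookkeeping and verifying that the deterministic datasets $Z_i$ can be chosen so that always-AERM really implies $\P(\cA(Z_i)\in I_i)$ bounded away from zero uniformly in $i$ for all sufficiently large $N$.
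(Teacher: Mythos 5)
Your proposal is correct and follows essentially the same route as the paper: subsampling at rate $\tilde\delta(n)$ for the first bullet, and for the second bullet the threshold counterexample combined with group privacy for $(\epsilon,\delta)$-DP and the AERM characterization of Theorem~\ref{thm:eps_delta_DP}. The only cosmetic difference is that you use the tight group-privacy error term $\delta/(e^{\epsilon}-1)$ where the paper uses the cruder bound $ne^{-\epsilon}\delta$; both suffice under the stated decay condition on $\tilde\delta$.
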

\begin{proof}
	The first claim follows from the same argument in Lemma~\ref{lem:subsampling_eps_delta}. If a problem is learnable, there exists a universally consistent learning algorithm $\cA$. The algorithm that applies $\cA$ on a $\tilde{\delta}(n)$-fraction random subsample of the dataset is $(0,\tilde{\delta}(n))$-DP and universally consistent with rate $\xi(n\tilde{\delta}(n))$. Since $\tilde{\delta}(n)=\omega(1/n)$, $n\tilde{\delta}(n) \rightarrow \infty$.
	
	We now show that when we require a fast decaying $\delta(n)$, then suddenly the example in Section~\ref{sec:counter_example} due to \citet{chaudhuri2011sample} becomes not approximately privately learnable even for $(\epsilon,\delta)$-DP. Let $Z,Z'$ be two completely different data sets, by repeatedly applying the definition of $(\epsilon,\delta)$-DP, for any set $\cS\subset \cH$
	$$\P(\cA(Z)\in \cS) \leq e^{n\epsilon}\P(\cA(Z)\in \cS) + \sum_{i=1}^{n}e^{(i-1)\epsilon}\delta \leq e^{n\epsilon}\P(\cA(Z')\in \cS) + ne^{(n-1)\epsilon}\delta. $$
	When we shift the inequality around, we get 
	$$\P(\cA(Z')\in \cS) \leq e^{-n\epsilon}\P(\cA(Z')\in \cS) - e^{-\epsilon}n\delta.$$
	
	Consider the same example in Section~\ref{sec:counter_example} where we hope to learn a threshold on $[0,1]$. Assuming there exists an algorithm $\cA$ that is universally AERM and $(\epsilon(n),\delta(n))$-DP for $\epsilon(n)<\infty$ and $\delta(n) \leq 0.4 n e^{-\epsilon n}$.
	
	Everything up to \eqref{eq:counter_exp_keyeq1} remains exactly the same. Now, apply the above implication of $(\epsilon,\delta)$-DP, we can replace \eqref{eq:counter_exp_keyeq1} for each $i=2,...,K$, by 
	$$ \P(\cA(Z_1) \in [h_i-\eta/3,h_i+\eta/3]) \geq \exp(-\epsilon n)\P(\cA(Z_i) \in [h_i-\eta/3,h_i+\eta/3]) - n\delta(n).$$
	Then \eqref{eq:counter_exp_keyeq2} becomes
	\begin{equation*}
	\P (\cA(Z_1)\notin [h_1-\eta/3,h_1+\eta/3]) \geq K \exp(-\epsilon n) 0.9  - K e^{-\epsilon}n\delta(n) \geq 0.9 \geq 0.5,
	\end{equation*}
	where the last inequality follows by $K>\exp(\epsilon n)$ and $\delta(n)\leq 0.4 n e^{-\epsilon n}$. This yields the same contradiction to always AERM of $\cA$ on $Z_1$, which requires $\P(\cA(Z_1)\notin [h_1-\eta/3,h_1+\eta/3])<0.1$. Therefore, such AERM does not exist. By the contrapositive of Theorem~\ref{thm:eps_delta_DP}, the problem is not approximately privately learnable for $\tilde{\delta}(n) \leq \frac{\exp(-\epsilon(n^2) n^2)}{n}$.
\end{proof}

\begin{figure}[tb]
	\centering
	\includegraphics[width=0.8\textwidth]{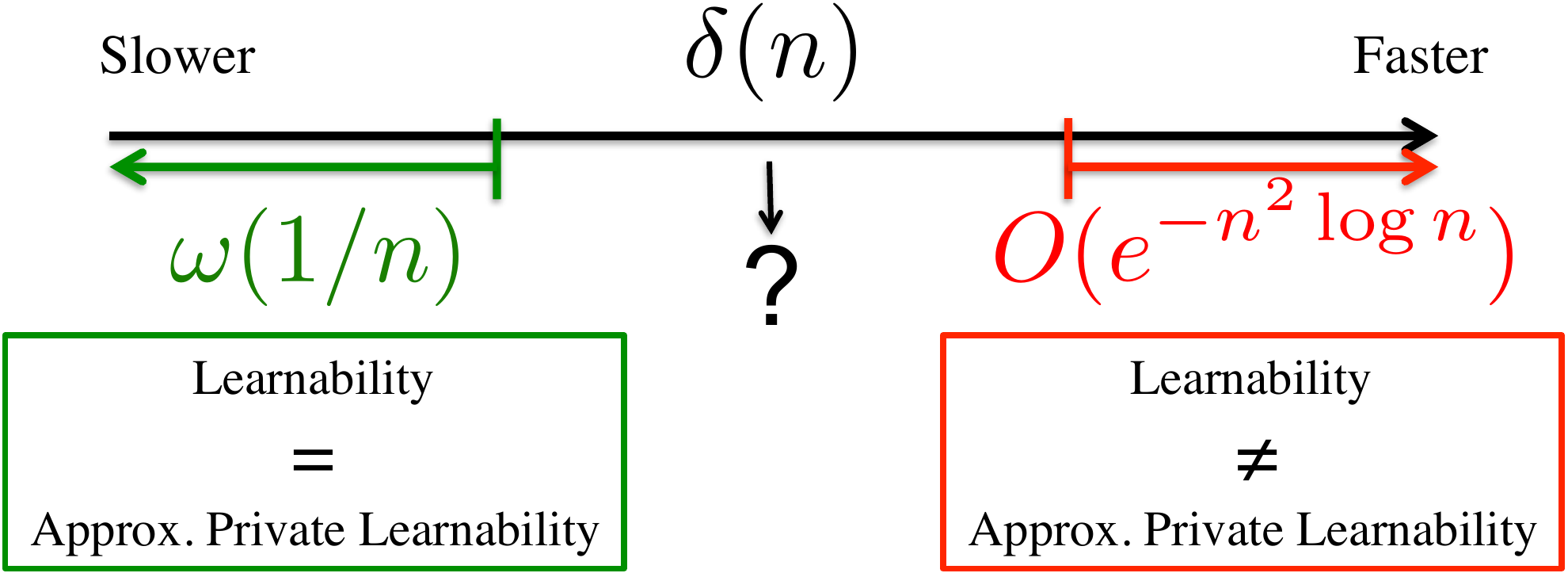}
	\caption{Illustration of Proposition~\ref{prop:phase_transition} and the open problem.}\label{fig:illus_phase_transition}
\end{figure}
The bound can be further improved to $\exp(-\epsilon(n) n)/n$ if we directly work with universal consistency on various distributions rather than through always AERM on specific data points. Even that is likely to be suboptimal as there might be more challenging problems and less favorable packings to consider.  

The point of this exposition, however, is to illustrate that $(\epsilon,\delta)$-DP alone does not close the gap between learnability and private learnability. Additional relaxation on the specified rate of decay on $\delta$ does. We now know that the phase transition occurs when $\delta(n)$ is somewhere between $\Omega(\exp(-n^2\log n))$ and $O(1/n)$; but there is still a substantial gap between the upper and lower bounds.

\section{Conclusion and future work}
In this paper, we revisited the question {\it ``What can we learned privately?''} and considered a broader class of statistical machine learning problems than those studied previously. Specifically, we characterized the learnability under privacy constraint by showing any privately learnable problems can be learned by a private algorithm that asymptotically minimizes the empirical risk for any data, and the problem is not privately learnable otherwise. This allows us to construct a conceptual procedure that privately learns any privately learnable problem.
We also propose a relaxed notion of private learnability called private $\mathfrak{D}$-learnability, which requires the existence of an algorithm that is consistent for any the distribution within a class of distributions $\mathfrak{D}$. We characterized private $\mathfrak{D}$-learnability too with a weaker notion of AERM. 
 For problems that can be formulated as penalized empirical risk minimization, we provide a sampling algorithm with a set of meaningful sufficient conditions on the geometry of the hypothesis space and demonstrate that it covers a large class of problems.
In addition, we further extended the characterization to learnability under $(\epsilon,\delta)$-differential privacy and provided a preliminary analysis which establishes the existence of a phase transition from all learnable problems being approximately private learnable to some learnable problems being not approximately private learnable at some non-trivial rate of decay on $\delta(n)$.

Future work includes understanding the conditions under which privacy and AERM are contradictory (recall that we only have one example on learning thresholding functions due to \citealt{chaudhuri2011sample}), characterizing the rate of convergence, searching for practical algorithms that generically learns all privately learnable problems, and better understanding the gap between learnability and approximate private learnability. 

\section*{Acknowledgment}
We thank the AE and the anonymous reviewers for their comments that lead to significant improvement of this paper. The research was partially supported by NSF Award BCS-0941518 to the Department of Statistics at Carnegie Mellon University, and a grant by Singapore National Research Foundation under its International Research Centre @ Singapore Funding Initiative and administered by the IDM Programme Office.


\appendix
\addappheadtotoc
\section{Proofs of technical results}
In this appendix, we provide detailed proofs to the technical results that in the main text.

\subsection{Privacy in subsampling}\label{sec:proof-subsampling}
\begin{proof}[Proof of Lemma~\ref{lem:dp_learnability}]
Let $\cA$ be the consistent $\epsilon$-DP algorithm. Consider $\cA'$ that apply
$\cA$ to a random subsample of $\lfloor \sqrt{n}\rfloor$ data points. By Lemma~\ref{lem:sampling_thm} with $\gamma = \frac{\lfloor \sqrt{n}\rfloor}{n} \leq \frac{1}{\sqrt{n}}$, we get the privacy claim. For the consistency claim, note that the given sample is an iid sample of size $\sqrt{n}$ from the original distribution.
\end{proof}

\begin{lemma}[Subsampling theorem]\label{lem:sampling_thm}
If Algorithm $\cA$ is $\epsilon$-DP for $Z\in \cZ^n$ for any $n=1,2,3,...$, then the algorithm $\cA'$ that output the result of $\cA$ to a random subsample of size $\gamma n$ data points preserves $2\gamma (e^{\epsilon}-e^{-\epsilon})$-DP.
\end{lemma}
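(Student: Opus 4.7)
Fix adjacent data sets $Z, Z' \in \cZ^n$ differing only in position $i_0$, and an arbitrary measurable event $E \subseteq \cH$. Let $I$ denote the uniformly random size-$\gamma n$ subset of $[n]$ used by $\cA'$. The plan is to decompose
\begin{equation*}
\P(\cA'(Z)\in E) = \gamma \cdot \E\!\left[\P(\cA(Z_I)\in E) \,\big|\, i_0\in I\right] + (1-\gamma)\cdot \E\!\left[\P(\cA(Z_I)\in E) \,\big|\, i_0\notin I\right]
\end{equation*}
and exploit two observations. First, when $i_0\notin I$, we have $Z_I = Z'_I$, so the two conditional expectations under $i_0\notin I$ are identical. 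Second, when $i_0\in I$, the subsampled data sets $Z_I$ and $Z'_I$ are size-$\gamma n$ neighbors, so the hypothesis that $\cA$ is $\epsilon$-DP \emph{for every sample size} yields $e^{-\epsilon}\P(\cA(Z'_I)\in E)\le \P(\cA(Z_I)\in E) \le e^{\epsilon}\P(\cA(Z'_I)\in E)$. Subtracting the analogous expression for $Z'$ gives the additive estimate $\P(\cA'(Z)\in E) - \P(\cA'(Z')\in E)\le \gamma(e^\epsilon - 1)\,\E[\P(\cA(Z'_I)\in E)\mid i_0\in I]$.

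To upgrade the additive estimate to a multiplicative one, I would use a ``swap'' argument to control $\E[\P(\cA(Z'_I)\in E)\mid i_0\in I]$ in terms of $\E[\P(\cA(Z'_I)\in E)\mid i_0\notin I]$: for every subset $I\ni i_0$, each choice $I^\dagger = (I\setminus\{i_0\})\cup\{j\}$ with $j\notin I$ produces a size-$\gamma n$ subset omitting $i_0$, and the data sets $Z'_I$ and $Z'_{I^\dagger}$ differ in a single coordinate, so a second application of $\epsilon$-DP gives $\P(\cA(Z'_I)\in E)\le e^{\epsilon}\P(\cA(Z'_{I^\dagger})\in E)$. Averaging over all such $(I,I^\dagger)$ pairs, and counting each $I^\dagger\not\ni i_0$ with the correct multiplicity $\gamma n$, one obtains $\E[\P(\cA(Z'_I)\in E)\mid i_0\in I] \le e^{\epsilon}\,\E[\P(\cA(Z'_I)\in E)\mid i_0\notin I] \le \tfrac{e^{\epsilon}}{1-\gamma}\P(\cA'(Z')\in E)$. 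Substituting back yields $\P(\cA'(Z)\in E)\le \bigl(1 + \tfrac{\gamma(e^\epsilon - 1)e^\epsilon}{1-\gamma}\bigr)\P(\cA'(Z')\in E)$, and applying the same argument to $Z'$ in place of $Z$ (using the lower $e^{-\epsilon}$ bound) gives a matching inequality on the other side.

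Finally, I would symmetrize the upper and lower tails: the upper direction contributes a factor $(e^\epsilon - 1)$ and the lower direction $(1 - e^{-\epsilon}) = (e^\epsilon - 1)e^{-\epsilon}$, summing to $(e^\epsilon - e^{-\epsilon})$, and an extra factor of $2$ absorbs the $1/(1-\gamma)$ and the crudeness of going from $1 + x$ to $e^x$ via $1 + x \le e^x$ for all $x$. This pins down the stated constant $2\gamma(e^\epsilon - e^{-\epsilon})$.

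The main obstacle is the bookkeeping in the swap step: one needs to verify that summing $\P(\cA(Z'_I)\in E)$ over $\{I:i_0\in I\}$, after each term is replaced by the average over its $n-\gamma n$ neighbors $I^\dagger$, recovers exactly the sum over $\{I^\dagger:i_0\notin I^\dagger\}$ with the combinatorial multiplicity $\gamma n$, and that this gives the ratio $|A|/|B| = \gamma/(1-\gamma)$ needed to relate the two conditional expectations. A secondary subtlety is ensuring the constants line up to produce the exact bound $2\gamma(e^\epsilon - e^{-\epsilon})$ rather than a slightly different but qualitatively equivalent expression; this is where the symmetric combination of the upper and lower DP inequalities is crucial.
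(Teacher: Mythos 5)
Your decomposition by conditioning on whether the differing index $i_0$ lands in the subsample is exactly the right starting point, and your swap/double-counting argument relating $\E[\P(\cA(Z'_I)\in E)\mid i_0\in I]$ to $\E[\P(\cA(Z'_I)\in E)\mid i_0\notin I]$ is sound --- it is in fact the argument the paper uses for the $(\epsilon,\delta)$ version of this lemma (Lemma~\ref{lem:subsampling_approx}). The problem is that this route does not deliver the stated constant. Passing through the additive estimate and then applying DP a \emph{second} time in the swap step costs an extra factor of $e^\epsilon$, leaving you with $\epsilon' \le \log\bigl(1+\tfrac{\gamma(e^\epsilon-1)e^\epsilon}{1-\gamma}\bigr)\approx \gamma e^\epsilon(e^\epsilon-1)$. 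Since $2\gamma(e^\epsilon-e^{-\epsilon})=2\gamma(e^\epsilon-1)(1+e^{-\epsilon})$, your bound matches the claim only when $e^\epsilon\le 2(1-\gamma)(1+e^{-\epsilon})$, i.e.\ roughly $\epsilon\lesssim 1$; for larger $\epsilon$ you get $O(\gamma e^{2\epsilon})$ rather than $O(\gamma e^{\epsilon})$, and no factor of $2$ absorbs that. Your closing ``symmetrization'' step is also not coherent as stated: the upper- and lower-tail multiplicative bounds on $\P(\cA'(Z)\in E)/\P(\cA'(Z')\in E)$ do not add in your framework --- for $\epsilon'$-DP you would simply take the worse of the two one-sided log-ratios.

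The paper avoids the extra $e^\epsilon$ by never leaving the likelihood ratio. Writing both $\P(\cA'(Z)\in E)$ and $\P(\cA'(Z')\in E)$ as averages over the choice of $\cR=I\setminus\{i_0\}$ and a replacement index $j\ne i_0$, every conditional probability in both numerator and denominator is compared (via one application of $\epsilon$-DP) to the \emph{same} reference quantity $\E_{\cR,\,j\ne i_0}\,\P(\cA(Z_{\cR\cup\{j\}})\in E)$, which then cancels. This gives
\[
\frac{\P(\cA'(Z)\in E)}{\P(\cA'(Z')\in E)}\;\le\;\frac{1+\gamma(e^{\epsilon}-1)}{1+\gamma(e^{-\epsilon}-1)},
\]
and the privacy loss is the difference of two logarithms: the numerator contributes at most $\gamma(e^{\epsilon}-1)$ via $\log(1+x)\le x$, the denominator at most $2\gamma(1-e^{-\epsilon})$ via $\log(1+x)\ge x/(1+x)$ for $x\in[-1,0]$ (this is where the factor $2$ actually enters), and the two add to at most $2\gamma(e^{\epsilon}-e^{-\epsilon})$. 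If you restructure your argument to bound the ratio directly against the common reference expectation, your combinatorial bookkeeping carries over unchanged and the stated constant falls out.
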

\begin{proof}[Proof of Lemma~\ref{lem:sampling_thm} (Subsampling theorem)]
This is a corollary of Lemma~4.4 in \citet{beimel2014bounds}. To be self-contained, we reproduce the proof here in our notation.

Recall that $\cA'$ is the algorithm that first randomly subsample $\gamma n$ data points then apply $\cA$.
Let $Z$ and $Z'$ be any neighboring databases and assume they differ on the $i$th data point. Let $\cS\subset [n]$ be the indices of the random subset of the entries that are selected, and $\cR\subset [n]\backslash\{i\}$ be a index size of size $\gamma n-1$.  We apply the law of total expectation twice and argue that for any adjacent $Z$, $Z'$, any event $E\subset \cH$,
\begin{equation*}
\resizebox{\columnwidth}{!}{
	$
\begin{aligned}
&\frac{\P_{h\sim \cA'(Z)}(h\in E)}{\P_{h\sim \cA'(Z')}(h\in E)} = \frac{\gamma \P_{h\sim \cA(Z_S)}(h\in E  | i\in \cS)  + (1-\gamma)  \P_{h\sim \cA(Z_\cS)}(h\in E  | i\notin \cS)}{\gamma \P_{h\sim \cA(Z'_\cS)}(h\in E  | i\in \cS)  + (1-\gamma)  \P_{h\sim \cA(Z'_\cS)}(h\in E  | i\notin \cS)}\\
=& \frac{\sum_{\cR\in [n]\backslash \{i\}} \P(\cR)  \left[\gamma \P_{h\sim \cA(Z_\cS)}\left(h\in E  | \cS=\cR \cup\{i\}\right) + (1-\gamma)  \P_{h\sim \cA(Z_\cS)}\left(h\in E  | \cS=\cR\cup \{j\}, j\neq i\right)\right] }{\sum_{\cR\in [n]\backslash \{i\}} \P(\cR)  \left[\gamma \P_{h\sim \cA(Z'_\cS)}\left(h\in E  | \cS=\cR \cup\{i\}\right) + (1-\gamma)  \P_{h\sim \cA(Z'_\cS)}\left(h\in E  | \cS=\cR\cup \{j\}, j\neq i\right)\right]}
\end{aligned}$
}
\end{equation*}

By the given condition that $\cA$ is $\epsilon$-DP, we can replace $\cR\cup \{i\}$ with $\cR\cup\{j\}$ for an arbitrary $j$ with bounded changes in the probability and the above likelihood ratio can be upper bounded by
$$
\resizebox{\columnwidth}{!}{$
\frac{\left(\gamma e^{\epsilon} + 1-\gamma\right) \E_{\cR\in[n]\backslash\{i\}, j\neq i}\P_{h\sim \cA(Z_\cS)}\left(h\in E  | \cS=\cR\cup \{j\}\right)}{\left(\gamma e^{-\epsilon} + 1-\gamma\right) \E_{\cR\in[n]\backslash\{i\}, j\neq i}\P_{h\sim \cA(Z_\cS)}\left(h\in E  | \cS=\cR\cup \{j\}\right)} = \frac{\gamma e^{\epsilon} + 1-\gamma}{\gamma e^{-\epsilon} + 1-\gamma} = \frac{1 + \gamma(e^{\epsilon}-1)}{1+\gamma(e^{-\epsilon}-1)}.
$}
$$

By definition, the privacy loss of the algorithm $\cA'$ is therefore
$$\epsilon' \leq \log\left(1+\gamma[e^\epsilon-1]\right) - \log\left( 1+\gamma\left[e^{-\epsilon}-1\right]\right).$$

Note that $\epsilon>0$ implies that $-1\leq e^{-\epsilon}-1 < 0$ and $0<e^\epsilon-1<\infty$. The result follows by applying the property of the natural logarithm:
\begin{align*}
\log(1+x) \leq \frac{x}{2}\frac{2+x}{1+x} \leq x && \text{ for } 0\leq x<\infty\\
\log(1+x) \geq \frac{x}{2}\frac{2+x}{1+x}\geq \frac{x}{1+x} && \text{ for } -1\leq x\leq 0
\end{align*}
to upper bound the expression.
\end{proof}

\subsection{Characterization of private learnability}\label{sec:proof-characterization}
\paragraph{Privacy implies stability} Lemma~\ref{lem:EM_stability} says that an $\epsilon$-differentially private algorithm is $(e^\epsilon-1)$-stable (and also $2\epsilon$-stable if $\epsilon<1$).
\begin{proof}[Proof of Lemma~\ref{lem:EM_stability}]
Construct $Z^\prime$ by replacing an arbitrary data point in $Z$ with $z^\prime$ and let the probability density/mass defined by $\cA(Z)$ and $\cA(Z^{\prime})$ be $p(h)$ and $p^\prime(h)$ respectively,  then we can bound the stability as follows
\begin{align*}
&\left| \E_{h\sim\cA(Z)}\ell(h,z) - \E_{h\sim\cA(Z^{\prime})}\ell(h,z)\right| \\
=& \left| \int_{h}\ell(h,z)p(h)dh - \int_{h}\ell(h,z)p^\prime(h)dh\right|
= \left|\int_h\ell(h,z)(p(h)-p^{\prime}(h))dh\right| \\
\leq& \sup_{h,z}\left|\ell(h,z)\right| \int_{p(h)\geq p^{\prime}(h)} p(h)-p^{\prime}(h)dh
\leq 1\cdot \int_{p(h)\geq p^{\prime}(h)}
p^{\prime}(h)(\frac{p(h)}{p^{\prime}(h)}-1)dh\\
\leq&(e^{\epsilon}-1) \int_{p(h)\geq p^{\prime}(h)}p^{\prime}(h) dh \leq (e^{\epsilon}-1).
\end{align*}

For $\epsilon<1$ we have
$\exp(\epsilon) -1 < 2\epsilon
$.

\end{proof}

\paragraph{Stability + AERM $\Rightarrow$ consistency}
\begin{theorem}[Randomized version of {\citealt[Theorem~8]{shalev2010learnability}}]\label{thm:stable_aerm}\quad\\ If any algorithm is $\xi_1(n)$-stable and $\xi_2(n)$-AERM then it is consistent with rate $\xi(n)=\xi_1(n)+\xi_2(n)$.
\end{theorem}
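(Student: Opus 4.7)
The plan is to adapt the classical deterministic proof (Theorem~8 of \citet{shalev2010learnability}) to randomized algorithms by decomposing the risk excess into a generalization gap, an AERM gap, and a nonpositive bias term. Specifically, I would write
\begin{align*}
\E_{Z\sim\cD^n}\!\bigl[\E_{h\sim\cA(Z)} R(h)\bigr] - R^*
&= \underbrace{\E_Z\!\bigl[\E_{h\sim\cA(Z)} R(h) - \E_{h\sim\cA(Z)}\hat R(h,Z)\bigr]}_{\text{(I): generalization gap}}\\
&\quad+ \underbrace{\E_Z\!\bigl[\E_{h\sim\cA(Z)}\hat R(h,Z) - \hat R^*(Z)\bigr]}_{\text{(II): AERM gap}}
+ \underbrace{\E_Z[\hat R^*(Z)] - R^*}_{\text{(III)}}.
\end{align*}
Term (II) is at most $\xi_2(n)$ by the AERM assumption. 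Term (III) is nonpositive, since $\hat R^*(Z)\le \hat R(h^*,Z)$ pointwise and $\E_Z \hat R(h^*,Z)=R(h^*)=R^*$.

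The main step is to bound term (I) by $\xi_1(n)$ using a ghost-sample/symmetrization argument. By linearity and the fact that $z_1,\dots,z_n$ are i.i.d. (while the algorithm's internal randomness is independent of $Z$),
\[
\E_Z\!\bigl[\E_{h\sim\cA(Z)}\hat R(h,Z)\bigr]
= \frac{1}{n}\sum_{i=1}^{n}\E_Z\!\bigl[\E_{h\sim\cA(Z)}\ell(h,z_i)\bigr]
= \E_Z\!\bigl[\E_{h\sim\cA(Z)}\ell(h,z_1)\bigr].
\]
Introduce an independent ghost sample $z_1'\sim\cD$ and let $Z^{(1)}=(z_1',z_2,\dots,z_n)$. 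Because swapping the labels $z_1\leftrightarrow z_1'$ leaves the joint distribution invariant,
\[
\E\!\bigl[\E_{h\sim\cA(Z)}\ell(h,z_1')\bigr]
= \E\!\bigl[\E_{h\sim\cA(Z^{(1)})}\ell(h,z_1)\bigr].
\]
The left-hand side equals $\E_Z\!\bigl[\E_{h\sim\cA(Z)}R(h)\bigr]$ since $z_1'\sim\cD$ is independent of $Z$. Now $Z$ and $Z^{(1)}$ differ on exactly one coordinate, so strong uniform RO-stability (Definition~\ref{def:stability}) yields
\[
\Bigl|\E\!\bigl[\E_{h\sim\cA(Z^{(1)})}\ell(h,z_1)\bigr] - \E\!\bigl[\E_{h\sim\cA(Z)}\ell(h,z_1)\bigr]\Bigr| \le \xi_1(n),
\]
which bounds term (I) by $\xi_1(n)$. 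Combining the three pieces gives the claimed consistency rate $\xi(n)=\xi_1(n)+\xi_2(n)$.

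The argument is essentially mechanical; the only subtlety is to make sure the symmetrization handles the internal randomness of $\cA$ correctly. This is why I write every expectation as a joint expectation over $(Z,z_1',\text{coins of }\cA)$ and invoke the fact that the algorithm's coins are independent of the data, so the $z_1\leftrightarrow z_1'$ swap preserves the joint law. There is no real obstacle beyond this bookkeeping, since stability is exactly the replace-one quantity that arises after the swap.
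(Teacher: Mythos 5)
Your argument is essentially the paper's own proof: the same three-way decomposition (generalization gap, AERM gap, and the nonpositive term $\E_Z[\hat R^*(Z)]-\E_Z[\hat R(h^*,Z)]$), with the generalization gap controlled by a ghost-sample swap plus replace-one stability. One step, however, is not justified as written: the reduction
$\frac{1}{n}\sum_{i=1}^{n}\E_Z\bigl[\E_{h\sim\cA(Z)}\ell(h,z_i)\bigr]=\E_Z\bigl[\E_{h\sim\cA(Z)}\ell(h,z_1)\bigr]$
requires $\cA$ to be invariant under permutations of its input, which is not assumed anywhere (e.g., if $\cA$ ignores $z_1$ entirely, the right-hand side is an out-of-sample quantity and your term (I) would collapse to zero even though the true generalization gap need not). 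The fix is exactly what the paper does: keep the full average, introduce a full ghost sample $z'_1,\dots,z'_n$, and perform the swap coordinate by coordinate, so that for each $i$ one has $\E\bigl[\E_{h\sim\cA(Z)}\ell(h,z_i)\bigr]=\E\bigl[\E_{h\sim\cA(Z^{(i)})}\ell(h,z'_i)\bigr]$ while $\E\bigl[\E_{h\sim\cA(Z)}\ell(h,z'_i)\bigr]=\E_Z\bigl[\E_{h\sim\cA(Z)}R(h)\bigr]$; each of the $n$ resulting differences is bounded by the supremum in \Cref{def:stability}, which ranges over all replace-one pairs and all test points, so no symmetry of $\cA$ is needed. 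With that adjustment your proof is complete and coincides with the paper's.
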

\begin{proof}

We will show the following the two steps as in \citet{shalev2010learnability}
\begin{enumerate}
  \item Uniform RO stability $\Rightarrow$ On average stability $\Leftrightarrow$ On average generalization
  \item AERM + On average generalization $\Rightarrow$ consistency
\end{enumerate}
The definition of these quantities is self-explanatory.

To show that ``stability implies generalization'', we have
\begin{align*}
  &\;\Big|\E_{Z\sim \cD^n}  \left( \E_{h\sim \cA(Z)} R(h) - \E_{h\sim \cA(Z)}\hat{R} (h,Z)\right)\Big|\\
  =&
  \Big|\E_{Z\sim \cD^n}  \left( \E_{z\sim \cD}\E_{h\sim \cA(Z)}\ell(h,z) - \frac{1}{n}\E_{h\sim \cA(Z)}\sum_{i=1}^n \ell(h,z_i)\right)\Big|\\
   =& \;\Big|\E_{Z\sim \cD^n, \{z^{\prime}_1,...,z^{\prime}_n\}\sim \cD^n}  \left( \frac{1}{n}\sum_{i=1}^n \E_{h\sim \cA(Z)}\ell(h,z^{\prime}_i) - \frac{1}{n}\sum_{i=1}^n\E_{h\sim \cA(Z^{(i)})} \ell(h,z^{\prime}_i)\right)\Big|\\
   \leq&\; \sup_{Z, Z^{(i)}\in \cZ^n, d(Z, Z^{(i)})=1, z^{\prime}\in \cZ} \Big| \E_{h\sim \cA(Z)}\ell(h,z^{\prime})-\E_{h\sim \cA(Z^{(i)})}\ell(h,z^{\prime})\Big| \leq \xi_1(n)\,,
\end{align*}
where $Z^{(i)}$ is obtained by replacing the $i$th entry of $Z$ with $z'_i$.
 Next, we show that ``generalization and AERM implies consistency''. Let $ h^*\in \arg\inf_{h\in\cH} R(h)$. By definition, we have $\E_{Z\sim \cD^n}\hat{R}(h^*,Z) = R^*$. It follows that
\begin{align*}
&\;\E_{Z\sim \cD^n} [\E_{h\in \cA(Z)}R(h)  - R^*] =\E_{Z\sim \cD^n}[\E_{h\in \cA(Z)}R(h)  - \hat{R}(h^*,Z)]\\
=&\; \E_{Z\sim \cD^n}[\E_{h\in \cA(Z)}R(h) - \E_{h\in \cA(Z)}\hat{R}(h,Z)] + \E_{Z\sim \cD^n}[\E_{h\in \cA(Z)}\hat{R}(h,Z) -\hat{R}(h^*,Z)]\\
\leq&\; \E_{Z\sim \cD^n}[\E_{h\in \cA(Z)}R(h) - \E_{h\in \cA(Z)}\hat{R}(h,Z)] + \E_{Z\sim \cD^n}[\E_{h\in \cA(Z)}\hat{R}(h,Z) -\hat{R}^*(Z)]\\
\leq &\; \xi_1(n) + \xi_2(n)\,.
\end{align*}
\end{proof}

\paragraph{Privacy + AERM $\Rightarrow$ consistency}
\begin{proof}[Proof of Corrollary~\ref{corr:consistency}]
It follows by combining Lemma~\ref{lem:EM_stability} and Theorem~\ref{thm:stable_aerm}.
\end{proof}

\paragraph{Necessity}
\begin{proof}[Proof of Lemma~\ref{lem:necessity}]
We construct an algorithm $\cA'$ by subsampling the data points using a random subset of $\sqrt{n}$ and then running $\cA$. The privacy claim follows from Lemma~\ref{lem:sampling_thm} directly.

To prove the ``always AERM'' claim, we adapt the proof of Lemma 24 in \citet{shalev2010learnability}. For any fixed data set $Z\in \cZ^n$,
\begin{align*}
\hat{R}(\cA'(Z),Z) - \hat{R}^{*}(Z) &= \E_{Z'\subset Z, |Z'|=\lfloor\sqrt{n}\rfloor} \left[ \hat{R}(\cA(Z'),Z) - \hat{R}^{*}(Z) \right] \\
&= \E_{Z'\sim \text{Unif}(Z)^{\lfloor \sqrt{n}\rfloor}} \left[ \hat{R}(\cA(Z'),Z) - \hat{R}^{*}(Z) | \text{ no duplicates } \right]\\
&\leq \frac{\E_{Z'\sim \text{Unif}(Z)^{\lfloor \sqrt{n}\rfloor}} \left[ \hat{R}(\cA(Z'),Z) - \hat{R}^{*}(Z)\right]}{\P(\text{no duplicates})},
\end{align*}
where $ \text{Unif}(Z)$ is the uniform distribution defined on the $n$ points in $Z$. We need to condition on the event that there are no duplicates for the second equality to hold because $Z'$ is a subsample taken without replacements. The last inequality is by the law of total expectation and the non-negativity of the conditional expectation. But
$\mathbb P(\text{no duplicates})=\prod_{i=0}^{\lfloor \sqrt{n}\rfloor - 1}(1-i/n)\ge
1-\sum_{i=0}^{\lfloor \sqrt{n}\rfloor - 1} i/n \ge 1/2$.
By universal consistency, $\cA$ is consistent on the discrete uniform distribution defined on $Z$, so
\begin{align*}
\hat{R}(\cA'(Z),Z) - \hat{R}^{*}(Z) \leq 2\E_{Z'\sim \text{Unif}(Z)^{\lfloor n\rfloor}} \left[ \hat{R}(\cA(Z'),Z) - \hat{R}^{*}(Z)\right] \leq 2 \xi(\sqrt{n}).
\end{align*}

It is obvious that $\mathcal A'$ is consistent with rate $\sqrt{n}$ as it applies $\mathcal A$
on a random sample of size $\sqrt{n}$.
By Lemma~\ref{lem:dp_learnability}, $\mathcal A'$ is $2n^{-1/2}(e^\epsilon-e^{-\epsilon})$
differentially private.
By Corollary~\ref{corr:consistency}, the new algorithm $\cA'$ is universally consistent.
\end{proof}

\subsection{Proofs for \Cref{sec:counter_example}}\label{sec:proof-counter-example}
\begin{proof}[Proof of Proposition~\ref{pro:non-private-learnability}]
If $\cA(Z)$ is a continuous distribution, we can pick $h\in \cH$ at any point where $\cA(Z)$ has finite density and set $\cA'(Z)| z\in Z$ to be $h$ with probability $1/n$ and the same as $\cA(Z)$ with probability $1-1/n$. This breaks privacy because conditioned on two databases with $z$ or without $z$, $\cA$, the probability ratio of outputting $h$ is $\infty$.

If $\cA(Z)$ is a discrete distribution or a mixed distribution, it must have the same support of the point mass for all $Z$. Otherwise it violates DP because we need $\frac{\P_{h\in\cA(Z)}(h)}{\P_{h\in\cA(Z')}} \leq \exp(n\epsilon)$ for any $Z,Z'\in \cZ^{n}$. Specifically, let the discrete set of point mass be $\tilde{\cH}$ if $\cH\backslash\tilde{\cH}\neq \emptyset$, then we can use the same technique as in the continuous case by adding a small probability $1/n$ on $\cH\backslash\tilde{\cH}$ when $z\in Z$.

If $\tilde{\cH}=\cH$, then $\cH$ is a discrete set, if $|\cH| < n$, then by boundedness and Hoeffding, ERM is a deterministic algorithm that learns any learnable problem. On the other hand, if $|\cH| >n$, then by pigeon hole principle, there always exists a hypothesis $h$ that has probability smaller than $1/n$ in $\cA(Z)$ for any $Z\in \cZ^n$ and we can construct $\cA'$ by outputting a sample of $\cA(Z)$ if $z$ is not observed and outputting a sample $\cA(Z)| \cA(Z)\neq h$ whenever $z$ is observed.

The consistency of $\cA'$ follows easily as its risk is at most $1/n$ larger than
that of $\cA$.
\end{proof}

\subsection{Proofs for characterization of private $\mathfrak D$-learnability}\label{sec:proof-D-learnability}
\begin{proof}[Proof of Lemma~\ref{lem:necessity_mod}]
  Let $\cA'$ be the algorithm that applies $\cA$ to a random subsample of size $\lfloor\sqrt{n}\rfloor$.
If we can show that, for any $\cD\in\mathfrak{D}$,
\begin{itemize}
\item[(a)] the empirical risk of $\cA'$ converges to the the optimal population risk $R^*$ in expectation;
\item[(b)] the empirical risk of the ERM learning rule also converges to $R^*$ in expectation,
\end{itemize}
then by triangle inequality, the empirical risk of $\cA'$ must also converge to the empirical risk of ERM, i.e., $\cA'$ is $\mathfrak D$-universal AERM.

We will start with (a). For any distribution $\cD\in \mathfrak{D}$, we have
\begin{align}
&\E_{Z\sim \cD^n}\hat{R}(\cA'(Z),Z) = \E_{Z\sim \cD^n}\left[\E_{Z'\subset Z, |Z'|=\lfloor\sqrt{n}\rfloor}  \hat{R}(\cA(Z'),Z) \right] \nonumber\\
=& \E_{Z'\sim \cD^{\lfloor \sqrt{n}\rfloor}}  \left[ \frac{\lfloor \sqrt{n}\rfloor}{n}\hat{R}(\cA(Z'),Z') + \E_{Z''\sim \cD^{n-\lfloor \sqrt{n}\rfloor}}\left(\frac{n-\lfloor \sqrt{n}\rfloor}{n} \hat{R}(\cA(Z'),Z'')\right)\right]\nonumber\\
=& \E_{Z'\sim \cD^{\lfloor \sqrt{n}\rfloor}}  \left[ \frac{\lfloor \sqrt{n}\rfloor}{n}\hat{R}(\cA(Z'),Z') + \frac{n-\lfloor \sqrt{n}\rfloor}{n} R(\cA(Z')) \right]
\leq \frac{1}{\sqrt{n}} + R^*+\xi(\sqrt{n}). \label{eq:Dlearnable_proof_a}
\end{align}
The last inequality uses the boundedness of the loss function to get $\hat{R}(\cA(Z'),Z')\leq 1$ and the $\mathfrak{D}$-consistency of $\cA$ to bound the excess risk of $\E_{Z'} R(\cA(Z'))$.

To show (b), we need to exploit the assumption that the problem is (non-privately) learnable. By \citet[Theorem~7]{shalev2010learnability}, the problem being learnable implies that there exists a universally consistent algorithm $\cB$ (not restricted to $\mathfrak{D}$), that is universally AERM with rate $3\xi'(n^{\frac{1}{4}}) + \frac{8}{\sqrt{n}}$ and stable with rate $\frac{2}{\sqrt{n}}$. Moreover, by \citet[Theorem~8]{shalev2010learnability}, $\cB$'s stability and AERM implies that $\cB$ is also generalizing, with rate $6\xi'(n^{\frac{1}{4}}) + \frac{18}{\sqrt{n}}$.
Here the term ``generalizing''  means that the empirical risk is close to the population risk.
Therefore, we can establish (b) via the following chain of approximations
$$\E_{Z\sim \cD^n}  \hat{R}^{*}(Z) \explain{\approx}{\text{AERM of $\cB$}} \E_{Z\sim \cD^n} \hat{R}(\cB(Z),Z) \explainup{\approx}{\text{Generalization of $\cB$}} R(\cB(Z)) \explain{\approx}{\text{Consistency of $\cB$}} R^*.$$
More precisely,
\begin{align}
&\left|\E_{Z\sim \cD^n}\hat{R}^{*}(Z) - R^* \right| \nonumber\\
\leq& \left|\E_{Z\sim \cD^n}\hat{R}^{*}(Z) - \E_{Z\sim \cD^n} \hat{R} \right| + \left|\E_{Z\sim \cD^n} \hat{R} - R(\cB(Z),Z) \right|  + \left|R(\cB(Z),Z)- R^* \right| \nonumber\\
\leq&[3\xi'(n^{\frac{1}{4}}) + \frac{8}{\sqrt{n}}] + [6\xi'(n^{\frac{1}{4}}) + \frac{18}{\sqrt{n}}] + [3\xi'(n^{\frac{1}{4}}) + \frac{10}{\sqrt{n}}]
= 12\xi(n^{\frac{1}{4}})+\frac{36}{\sqrt{n}}.\label{eq:Dlearnable_proof_b}
\end{align}

Combine \eqref{eq:Dlearnable_proof_a} and \eqref{eq:Dlearnable_proof_b}, we obtain the AERM of $\cA'$ with rate $12\xi'(n^{1/4}) + \frac{37}{\sqrt{n}} + \xi(\sqrt{n})$ as required. The privacy of $\cA'$ follows from Lemma~\ref{lem:sampling_thm}.
\end{proof}

\subsection{Proof for Theorem~\ref{thm:general_private_learning}} \label{sec:private_beyond_uniform_convergence}
We first present the proof for Theorem~\ref{thm:general_private_learning}. Recall that the roadmap of the proof is summarized in Figure~\ref{fig:bigpic2}.

For readability, we  denote $\epsilon(n)$ by simply $\epsilon$.

Recall that the objective function is $F(h,Z)=\frac{1}{n}\sum_{i=1}^{n}\ell(h,z_i) +g_n(h)$ and the corresponding utility function $q(h,Z)=-F(h,Z)$. By the boundedness assumption, it is easy to show that if we replace one data point in any $Z$ with something else, then sensitivity
\begin{equation}\label{eq:sensitivity}
  \Delta q = \sup_{h\in \cH, d(Z,Z^{\prime})=1} \left| q(Z,h)-q(Z^{\prime},h) \right|\leq \frac{2}{n}.
\end{equation}
Then by \citet[Theorem 6]{mcsherry2007mechanism}, Algorithm~\ref{alg:EM_erm} that outputs $h\in \cH$ with $\P(h)\propto\exp(\frac{\epsilon}{2\Delta q} q(h,Z))$ naturally ensures $\epsilon$-differential  privacy.

Denote shorthand $F^*:=\inf_{f\in{\cH}} F(Z,h)$ and $q^* := -F^*$, we can state an analog of the utility theorem of the exponential mechanism in \citep{mcsherry2007mechanism}.
\begin{lemma}[Utility]\label{lemma:utility}
Assuming $\epsilon <  \log n$ (otherwise the privacy protection is meaningless anyway), if assumption A1, A2 hold for distribution $\cD$, then
\begin{equation}\label{eq:utility}
  \E_{Z\sim \cD^n}\E_{h\sim \cA(Z)} q(Z,h)\geq -\E_{Z\sim \cD^n}F^* - \frac{9[(\rho+2)\log n+\log K]}{n\epsilon}.
\end{equation}
\end{lemma}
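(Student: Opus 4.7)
The plan is to execute the classical exponential-mechanism utility analysis, where the role played by ``solution cardinality'' is now played by the sublevel-set ratio from A2. Throughout I write $\alpha := \epsilon/(2\Delta q)$; by A1 and the sensitivity bound \eqref{eq:sensitivity}, $\alpha \geq \epsilon n/4$, and $\cA(Z)$ samples with density $p(h)\propto e^{\alpha q(h,Z)}$.

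First I would establish a pointwise tail bound. For any fixed $Z$ and any $0<s<t$, I would lower bound the normalizing constant by restricting the integral to $\cS_{Z,s}$, on which $q(h,Z)\geq q^*(Z)-s$ by definition of the sublevel set. This gives
\begin{equation*}
   \P_{h\sim\cA(Z)}\bigl[q^*(Z)-q(h,Z)>t\bigr] \;\leq\; \frac{\mu(\cH)\,e^{\alpha(q^*-t)}}{\mu(\cS_{Z,s})\,e^{\alpha(q^*-s)}} \;=\; \frac{\mu(\cH)}{\mu(\cS_{Z,s})}\,e^{-\alpha(t-s)}.
\end{equation*}
Picking $s=t/2$ yields a clean one-parameter bound $\mu(\cH)/\mu(\cS_{Z,t/2})\cdot e^{-\alpha t/2}$. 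Since $q^*-q(h)\in[0,M]$ with $M\leq 1+2\sup|g_n|<\infty$ (by A1 and $\sup|g_n|=o(n)$ from A2), for any cutoff $T>0$ I integrate the tail:
\begin{equation*}
   \E_h[q^*-q(h)\mid Z] \;\leq\; T \;+\; \int_T^M \frac{\mu(\cH)}{\mu(\cS_{Z,t/2})}\,e^{-\alpha t/2}\,dt.
\end{equation*}

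Next I would take the expectation over $Z\sim\cD^n$, exchange it with the $t$-integral by Tonelli, and apply A2. For $t/2<t_0$, A2 directly gives $\E_Z[\mu(\cH)/\mu(\cS_{Z,t/2})]\leq K(2/t)^\rho$; for $t\geq 2t_0$, the monotonicity $\cS_{Z,t}\supseteq \cS_{Z,t_0}$ yields a constant bound $K t_0^{-\rho}$ that is harmlessly absorbed by the exponential decay. The result is
\begin{equation*}
   \E_Z\E_h[q^*-q(h)] \;\leq\; T \;+\; K\bigl(\tfrac{2}{T}\bigr)^\rho \cdot \tfrac{2}{\alpha}\cdot e^{-\alpha T/2}.
\end{equation*}

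Finally I would tune $T=8[(\rho+2)\log n+\log K]/(n\epsilon)$ so that $\alpha T/2=(\rho+2)\log n+\log K$, giving $K e^{-\alpha T/2}=n^{-(\rho+2)}$. A short calculation then bounds the tail contribution by $O(\epsilon^{\rho-1}/(n^3(\log n)^\rho))$; using $\epsilon<\log n$ this is vastly smaller than $T/(n\epsilon)$, so the full bound is at most $9[(\rho+2)\log n+\log K]/(n\epsilon)$, matching the target constant. Translating back via $q=-F$, $q^*=-F^*$ gives \eqref{eq:utility}. The only mildly delicate point is the range-of-$t$ issue for A2 and the dependence of $M$ on $\sup|g_n|$; both are handled routinely because exponential decay dominates polynomial blow-up in $1/t$ and in $M=o(n)$.
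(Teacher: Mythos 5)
Your proof is correct and follows essentially the same route as the paper's: both rest on the exponential-mechanism tail bound $\P[q(Z,h)<q^*-2t]\le \frac{\mu(\cH)}{\mu(\cS_{Z,t})}e^{-\alpha t}$ (the paper cites it as Lemma~7 of \citet{mcsherry2007mechanism}), take expectation over $Z$ to invoke A2, and tune $t\asymp[(\rho+2)\log n+\log K]/(n\epsilon)$ so the residual term is dominated by the main one. The only difference is cosmetic: you convert the tail bound to an expectation by integrating over the deviation level, whereas the paper uses a single threshold and bounds the bad event's contribution by the worst-case value $-(1+\sup_h|g_n(h)|)$ times the probability $n^{-2}$; both land on the same constant $9$.
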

\begin{proof}
By the boundedness of $\ell$ and $g$
$$
q(Z,h) = -\frac{1}{n}\sum_{i}\ell(h,z_i) - g_n(h) \geq -(1 + \zeta(n)).
$$
By Lemma~7 in \citet{mcsherry2007mechanism} (translated to our case),
\begin{equation}\label{eq:thm_pf_eq1}
\P_{h\sim \cA(Z)} \left[q(Z,h) < -F^* -2t\right] \leq \frac{\mu(\cH)}{\mu(\cS_t)}e^{-\frac{\epsilon}{2\Delta q} t},
\end{equation}
Apply \eqref{eq:sensitivity}, take expectation over the data distribution on both sides, and applying assumption A2, we get
\begin{equation}\label{eq:utility_proof}
\E_{Z\sim \cD^n} \P_{h\sim \cA(Z)} \left[q(Z,h) < -F^* -2t\right]  \leq K t^{-\rho} e^{-\frac{\epsilon n t}{4}} =e^{-\frac{\epsilon n t}{4} + \log K -\rho\log t} := e^{-\gamma}.
\end{equation}
Take $t=\frac{4\big[(\rho+2)\log n+\log(K)\big]}{\epsilon n}$, by the assumption that $\epsilon<\log n$, we get $\log(n t)>0$. Substitute $t$ into the expression of $\gamma$ we obtain
$$\gamma = \frac{\epsilon n}{4}t - \log K + \rho\log t
=2\log n + \rho\log(n t)\geq 2\log n,$$
and therefore
$$\E_{Z\sim \cD^n} \P_{h\sim \cA(Z)} \left[q(Z,h) < -F^* -2t\right]\leq n^{-2}.$$
Denote $\P_{h\sim \cA(Z)}[q(Z,h)<-F^* -2t] =: p$, we can then bound the expectation  from below as follows:
\begin{align}
\E_{Z\sim \cD^n}\E_{h\sim \cA(Z)} q(Z,h) \geq&  \E_{Z\sim \cD^n} (-F^* - 2t) (1-p)  + \min_{h\in \cH, Z\in \cZ^n} q(Z,h) \E_{Z\sim \cD^n} p\nonumber \\
\geq & \E_{Z\sim \cD^n}(-F^*-2t) +\left(-1-\zeta(n)\right)n^{-2}\nonumber\\
\geq & -\E_{Z\sim \cD^n}F^* -\frac{8\big[(\rho+2)\log n+\log(K)\big]}{\epsilon n} - \left(1+\zeta(n)\right)n^{-2}\nonumber\\
\geq & -\E_{Z\sim \cD^n}F^* -\frac{9\big[(\rho+2)\log n+\log(K)\big]}{\epsilon n}\nonumber.
\end{align}
\end{proof}

Now we can say something about the learning problem. In particular, the AERM follows directly from the utility result and stability follows from the definition of differential privacy.
\begin{lemma}[Universal AERM]\label{lemma:EM_AERM}
Assume \textrm{A1} and \textrm{A2}, and $\epsilon\leq \log n$ (so Lemma~\ref{lemma:utility} holds), then
$$
\E_{Z\sim \cD^{n}} \left[\E_{h\sim\cA(Z)}\hat{R}(h,Z) - \hat{R}^*(Z)\right] \leq \frac{9[(\rho+2)\log n+\log (1/K)]}{n\epsilon} + \zeta(n) .
$$
\end{lemma}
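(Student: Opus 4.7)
The plan is to rewrite the utility bound of Lemma~\ref{lemma:utility} in terms of the empirical risk $\hat R$ instead of the penalized utility $q$, and then show that the penalty $g_n$ only contributes an additive $\zeta(n)$ term, where $\zeta(n):=\sup_{h\in\cH}|g_n(h)|=o(n)$ by assumption A2.

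First, unfold the definitions: by construction,
\begin{equation*}
q(Z,h) \;=\; -\hat R(h,Z) - g_n(h), \qquad F^*(Z) \;=\; \inf_{h\in\cH}\bigl[\hat R(h,Z)+g_n(h)\bigr].
\end{equation*}
Plugging the first identity into the conclusion of Lemma~\ref{lemma:utility} and rearranging gives, in expectation over $Z\sim\cD^n$ and $h\sim\cA(Z)$,
\begin{equation*}
\E_{Z}\E_{h\sim\cA(Z)} \hat R(h,Z) \;\le\; \E_Z F^*(Z) \;-\; \E_{Z}\E_{h\sim\cA(Z)} g_n(h) \;+\; \frac{9\bigl[(\rho+2)\log n+\log K\bigr]}{n\epsilon}.
\end{equation*}

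Next, bound the two $g_n$ terms. Let $\hat h^*(Z)$ be any empirical risk minimizer. Since $F^*(Z)$ is an infimum,
$$F^*(Z)\le \hat R(\hat h^*(Z),Z)+g_n(\hat h^*(Z))\le \hat R^*(Z)+\sup_h|g_n(h)|=\hat R^*(Z)+\zeta(n),$$
and similarly $-\E g_n(h)\le\zeta(n)$ uniformly. Subtracting $\hat R^*(Z)$ from both sides of the displayed inequality and absorbing the two $\zeta(n)$ contributions yields
\begin{equation*}
\E_{Z}\bigl[\E_{h\sim\cA(Z)} \hat R(h,Z)-\hat R^*(Z)\bigr] \;\le\; \frac{9\bigl[(\rho+2)\log n+\log K\bigr]}{n\epsilon} \;+\; O(\zeta(n)),
\end{equation*}
matching the form stated in the lemma.

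There is no real obstacle here: everything reduces to bookkeeping between $F$, $\hat R$, and $g_n$ once Lemma~\ref{lemma:utility} is in hand. The only small subtlety is that the infimum $F^*(Z)$ is \emph{not} the unpenalized $\hat R^*(Z)$, so one must be careful to pay $\zeta(n)$ twice (once to pass from $F^*$ to $\hat R^*$, and once to remove $-\E g_n(h)$), which are both $o(1)$ under the assumption $\sup_h|g_n(h)|=o(1)$ used in Theorem~\ref{thm:general_private_learning}.
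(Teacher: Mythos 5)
Your proposal is correct and follows essentially the same route as the paper's proof: both reduce the empirical-risk gap to the utility bound of Lemma~\ref{lemma:utility} by adding and subtracting the penalty $g_n$, paying $\sup_h|g_n(h)|=\zeta(n)$ once for replacing $\hat R^*(Z)$ by $F^*(Z)$ and once for discarding $-\E_{h\sim\cA(Z)}g_n(h)$. Your observation that the cost is $2\zeta(n)$ rather than $\zeta(n)$ matches the paper's own derivation (whose final line also reads $2\zeta(n)$, so the constant in the lemma statement is off by a factor of two, a harmless slip since $\zeta(n)=o(1)$ in the regime where the lemma is used).
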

\begin{proof}
This is a simple consequence of boundedness and Lemma~\ref{lemma:utility}.
\begin{align*}
    &\E_{Z\sim \cD^{n}} \left[\E_{h\sim\cA(Z)}\hat{R}(h,Z) - \hat{R}^*(Z)\right]\\
    =&\E_{Z\sim \cD^{n}} \E_{h\sim \cA(Z)} \frac{1}{n}\sum_{i} \ell(h,z_i) -\E_{Z\sim \cD^{n}} \inf_h \frac{1}{n}\sum_i \ell(h,z_i)\\
    \leq& \E_{Z\sim \cD^{n}}\E_{h\sim \cA(Z)} \left[\frac{1}{n}\sum_{i} \ell(h,z_i) + g_n(h)\right] -\E_{h\sim \cA(Z)} g_n(h)\\
    & - \E_{Z\sim \cD^{n}}\inf_h \left[\frac{1}{n}\sum_i \ell(h,z_i) + g_n(h)\right] +\sup_h(g_n(h))\\
     =&\E_{Z\sim \cD^{n}} (-F^*-\E_{h\sim \cA(Z)} q(Z,h)) + \sup_h g_n(h)  -  \E_{h\sim \cA(Z)} g_n(h)\\
     \leq & \frac{9[(\rho+2)\log n+\log (1/K)]}{n\epsilon} + 2\zeta(n).
\end{align*}
The last step applies Lemma~\ref{lemma:utility} and $\sup_h {|g_n(h)|}\leq \zeta(n)$ as in Assumption A2 by using the fact that $\sup_h g_n(h)-\E g_n(h) \leq 2\sup_h{|g_n(h)|}$ for any distribution of $h$ the expectation is taken over.
\end{proof}

The above theorem shows that Algorithm~\ref{alg:EM_erm} is asymptotic ERM. By Theorem~\ref{lem:EM_stability}, the fact that this algorithm is $\epsilon$-differential private implies that it is $2\epsilon$-stable. Now the proof follows by applying Theorem~\ref{thm:stable_aerm} which says that stability and AERM of an algorithm certify its consistency. Noting that this holds for any distribution $\cD$ completes our proof for learnability in  Theorem~\ref{thm:general_private_learning}.

\subsection{Proofs of other technical results}

\paragraph{High confidence private learning.}

\begin{proof}[Proof of Theorem~\ref{thm:high_confidence}]
The algorithm $\cA$ privately learns the problem with rate $\xi(n)$ implies that
$$
\E_{Z\in \cD^n}\E_{h\sim\cA(Z)} R(h)-R^* \leq \xi(n).
$$
Let $h\sim \cA(Z)$ and $Z\sim \cD^n$, by Markov's inequality, with probability at least $1-1/e$,
\begin{equation*}
R(h) - R^* \leq e \xi(n).
\end{equation*}
If we split the data randomly into $a+1$ parts of size $n/(a+1)$ and run $\cA$ on the first $a$ partitions, then we get $h_j\sim \cA(Z_j)$. Then with probability at lest $1-(1/e)^{a}$, at least one of them has risk
\begin{equation}\label{eq:highconf_proof1}
\min_{j\in[a]}{R(h_j)} -R^*\leq e \xi(\frac{n}{a+1}).
\end{equation}
Since the $(a+1)$th partition are iid data, and $\ell$ is bounded, we can apply Hoeffding's inequality and union bound, so that with probability $1-\delta_1$ for all $j=1,...,a+1$
\begin{equation}\label{eq:highconf_proof2}
\hat{R}(h_j,Z_{a+1}) - R(h_j) \leq \sqrt{\frac{\log (2a/\delta_1)}{2n} }.
\end{equation}
This means that if exponential mechanism picked the one with the best validation risk it will be almost as good as the one with the best risk. Assume $h_1$ is the one that achieves the best validation risk.

Now it remains to bound the probability that exponential mechanism pick an $h\in \{h_1,...,h_a\}$ that is much worse than $h_1$.

Recall that the utility function is the negative validation risk  which depends only on the last partition $I_{a+1}$.
$$
q(X,h) = \frac{1}{n/(a+1)}\sum_{i\in I_{a+1}} \ell_i(z_i,h).
$$
 This is in fact a random function of the data because we are picking the the validation set $I_{a+1}$ randomly from the data.
Suppose we arbitrarily replace one data point $j$ from the dataset, the distribution of the output of function $q(Z,h)$ is a mixture of the two cases: $j\in I_{a+1}$ and $j\notin I_{a+1}$. Since in the first case, $q(Z,h)=q(Z',h)$ for all $h$, sensitivity for this case is $0$. In the second case, by the boundedness assumption, the sensitivity is at most $2(a+1)/n$. For the exponential mechanism guarantee $\epsilon$ differential privacy, it suffices to take the sensitivity parameter to be $2(a+1)/n$.


By the utility theorem of the exponential mechanism,
\begin{equation}\label{eq:highconf_proof3}
    \P\left[\hat{R}(h)> \hat{R}(h_1)+\frac{8(\eta\log n + \log a)}{\epsilon n/(a+1)}\right] \leq n^{-\eta}.
\end{equation}
Combine \eqref{eq:highconf_proof1}\eqref{eq:highconf_proof2} and\eqref{eq:highconf_proof3} we get
$$
\P\left[R(h) - R^*> e\xi(\frac{n}{a+1}) + \sqrt{\frac{\log(2a/\delta_1)}{2n}} + \frac{8(\eta\log n + \log a)}{\epsilon n/(a+1)}\right]\leq n^{-\eta} + \delta_1+e^{-a}.
$$
Now by appropriately choosing $\eta=\log(3/\delta)/\log n$, $a=\log(3/\delta)$, $\delta_1=\delta/3$, we get
\begin{align*}
\P\Bigg[R(h) - R^*> e\xi(\frac{n}{\log(3/\delta)+1}) + \sqrt{\frac{\log (2\log(3/\delta))+\log(3/\delta)}{2n}} \\
+ \frac{8(\log(3/\delta) + \log \log (3/\delta))}{\epsilon n/(\log(3/\delta)+1)}\Bigg]\leq \delta
\end{align*}
combine the terms and take $\epsilon=\frac{\log(3/\delta)+1}{\sqrt{n}}$, we get the bound of the excess risk in the theorem.

To get the privacy claim, note that we are applying $\cA$ on disjoint partitions of the data so the privacy parameter does not aggregate. Take the worst over all partitions, we get the overall privacy loss $\max\left\{\epsilon\left(\frac{n}{\log(3/n)+1}\right), \frac{\log(3/\delta)+1}{\sqrt{n}} \right\}$ as stated in the theorem.
\end{proof}

\paragraph{The Lipschitz example.}

\begin{proof}[Proof of Example~\ref{exmp:Lipschitz}]
Let $h^* \in \argmin_{h\in\cH} F(Z,h)$, the Lipschitz condition dictates that for any $h$,
$$
|F(h)-F(h^*)| \leq L \|h-h^*\|_p.
$$
Choose a small enough $t<t_0$ such that $h$ is in the small neighborhood of $h^*$, and we can construct a function $\tilde{F}$ that within the sublevel set $\cS_t$, such that the above inequality (when we replace $F$ with $\tilde{F}$) is equality, then for any $h\in \cS_{t_0}$, $\tilde{F}(h)\geq F(Z,h)$. Verify that the sublevel set of $\tilde{F}(h)$, denoted by $\tilde{\cS}_{t}$ always contains $\cS_t$. In addition, we can compute the measure $\mu(\tilde{\cS}_t)$ explicitly, since the function is a cone and
$$
L \|h-h^*\|_p=|\tilde{F}(h)-\tilde{F}(h^*)| =\tilde{F}(h)-\tilde{F}(h^*)\leq
t,
$$
therefore
$$
\tilde{\cS}_t = \{h\;|\; L\|h-h^{*}\|_p\leq t \}.
$$

Since $\cH$ is $\beta_p$-regular, $\mu(B\cap \cH)\ge \beta_p \mu(B)$ for any $\ell_p$ ball $B\subset \mathbb R^d$, the measure of the sublevel set can be lower bounded by $\beta_p$ times the volume of the $\ell_p$ ball with radius $t/L$ and since $\tilde{\cS}_t\subseteq \cS_t$, we have
$$
\mu(\cS_t) \geq \mu(\tilde{\cS}_t) \geq \beta_p \mu\left(B(t/L)\right)= \beta_p\left(t/L\right)^d
$$
as required.
\end{proof}

\section{Alternative proof of Corollary~\ref{corr:consistency} via \citet[Theorem~7]{dwork2014preserving}}\label{app:B}
In this Appendix, we describe how the results in \citet{dwork2014preserving} can be used to obtain the forward direction of our characterization without going through a stability argument. We first restate the result here in our notation:
\begin{lemma}[Theorem 7 in \citealt{dwork2014preserving}]\label{lem:dwork-hardt}
Let $\cB$ be an $\epsilon$-DP algorithm such that given a dataset $Z$, $\cB$ outputs a function from $\cZ$ to $[0,1]$. For any distribution $\cD$ over $\cZ$ and random variable $Z\sim \cD^{n}$, we let $\phi \sim \cB(Z)$. Then for any $\beta>0$, $\tau>0$ and $n\geq 12 \log(4/\beta)/\tau^2$, setting $\epsilon < \tau/2$ ensures
$$\P_{\phi\sim \cB(Z), Z\sim \cD^{n}}\left[\left|\E_{z\sim \cD} \phi(z) - \frac{1}{n}\sum_{z\in Z}\phi(z)\right| \geq \tau  \right] \leq \beta.$$
\end{lemma}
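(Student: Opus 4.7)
The plan is to derive this generalization-from-privacy statement by combining two ingredients: (i) the already-noted fact (Lemma~\ref{lem:EM_stability}) that $\epsilon$-DP implies small shifts in expected bounded functionals under single-coordinate changes to the input; and (ii) an exchangeability/coupling trick that replaces each coordinate of $Z$ by an independent fresh copy, converting the in-sample evaluation $\hat\E_Z\phi$ into the out-of-sample expectation $\E_\cD\phi$.

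First I would establish the in-expectation bound. Fix a coordinate $i$, let $z_i'\sim\cD$ be an independent copy, and let $Z^{(i)}$ denote $Z$ with $z_i$ replaced by $z_i'$. The map $Z\mapsto \E_{\phi\sim\cB(Z)}\phi(z_i)$ takes values in $[0,1]$ and, since $Z,Z^{(i)}$ differ in a single entry, Lemma~\ref{lem:EM_stability} gives
\[
\bigl|\E_{\phi\sim\cB(Z)}\phi(z_i)-\E_{\phi\sim\cB(Z^{(i)})}\phi(z_i)\bigr|\le e^\epsilon-1 .
\]
In the second term $z_i$ is independent of $Z^{(i)}$ (and therefore of $\phi$), so taking expectations over $(Z,z_i')\sim\cD^{n+1}$ turns it into $\E[\E_\cD\phi]$. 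Averaging over $i=1,\dots,n$ yields the symmetric bound
\[
\bigl|\E_{Z,\phi\sim\cB(Z)}[\hat\E_Z\phi-\E_\cD\phi]\bigr|\le e^\epsilon-1,
\]
which is the ``stability'' content already used elsewhere in the paper.

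Next, I would upgrade to a tail bound via a moment generating function argument. For a parameter $t>0$ let $M_t(Z)=\E_{\phi\sim\cB(Z)}\exp\bigl(t(\hat\E_Z\phi-\E_\cD\phi)\bigr)$. Replacing $z_i$ by $z_i'$ changes the exponent by at most $t/n$ deterministically (bounded empirical effect), while the change of measure from $\cB(Z)$ to $\cB(Z^{(i)})$ costs a multiplicative factor at most $e^\epsilon$ by $\epsilon$-DP applied to the nonnegative bounded functional $\exp\bigl(t(\hat\E_Z\phi-\E_\cD\phi)\bigr)$. Iterating over the $n$ coordinates (using the i.i.d.\ product structure of $\cD^n$ to justify repeated swapping and averaging) produces a Hoeffding-type bound of the form $\E M_t\le \exp\bigl(Ct^2/n + C'\epsilon t\bigr)$. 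Markov's inequality then gives
\[
\P\bigl[\hat\E_Z\phi-\E_\cD\phi\ge \tau\bigr]\le \exp\bigl(Ct^2/n + C'\epsilon t - t\tau\bigr),
\]
and optimizing $t\asymp n\tau$ together with the hypotheses $\epsilon<\tau/2$ and $n\ge 12\log(4/\beta)/\tau^2$ drives the right side below $\beta/2$. A union bound over the two-sided deviation finishes the proof.

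The main obstacle is the high-probability step: the in-expectation argument is an almost immediate consequence of the ``privacy implies stability'' lemma already in the paper, but pushing it to sub-Gaussian concentration requires carefully tracking how the $\epsilon$-factor from DP and the $1/n$-factor from perturbing a single empirical coordinate interact when iterated. If the direct MGF calculation proves unwieldy at the level of explicit constants, a fallback is the ``monitor'' reduction of Dwork et al.: assume the bad event has probability exceeding $\beta$, run $\cB$ on $k\asymp 1/\beta$ independent datasets, select the output with worst empirical-vs.-population discrepancy, and apply the in-expectation bound to this amplified procedure (which remains $\epsilon$-DP by parallel composition across disjoint datasets) to derive a contradiction with the hypothesis $\epsilon<\tau/2$.
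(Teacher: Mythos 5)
First, a point of context: the paper does not prove this lemma at all --- it is imported verbatim (up to notation) from \citet{dwork2014preserving} and used as a black box in Appendix~\ref{app:B}, so there is no in-paper argument to compare against. Your reconstruction of the in-expectation bound is correct and coincides with machinery already in the paper (Lemma~\ref{lem:EM_stability} plus the first half of the proof of Theorem~\ref{thm:stable_aerm}): swap coordinate $i$ for a fresh copy, pay $e^\epsilon-1$ for the change in the output distribution, exploit the independence of $z_i$ from $Z^{(i)}$, and average over $i$.

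The gap is in your primary route to the high-probability statement. For the MGF bound to be useful with $t\asymp n\tau$, you need the accumulated privacy cost to be $e^{O(\epsilon t)}=e^{O(\epsilon n\tau)}$, i.e.\ a cost of order $\epsilon t/n=\epsilon\tau$ \emph{per coordinate}. But the iteration you describe pays a multiplicative $e^{\epsilon}$ for each of the $n$ change-of-measure steps, which accumulates to $e^{n\epsilon}$. Markov then gives a tail bound of the form $\exp(Ct^2/n+n\epsilon-t\tau)$, which after optimizing $t$ is $\exp(-c\,n\tau^2+n\epsilon)$; this is nonvacuous only when $\epsilon=O(\tau^2)$, strictly weaker than the claimed hypothesis $\epsilon<\tau/2$ once $\tau$ is small. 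This is exactly the known barrier (pure DP gives max-information of order $\epsilon n$, which yields generalization only at the $\epsilon\lesssim\tau^2$ scale), and no simple per-coordinate change-of-measure gets around it. Your fallback --- the monitor/amplification reduction --- is the correct proof and is essentially the one in \citet{dwork2014preserving}: run $\cB$ on $k\asymp\log(1/\beta)/\beta$ independent datasets, select the output with the worst empirical-versus-population discrepancy, observe that the combined procedure is still $\epsilon$-DP with respect to the concatenated dataset, and derive a contradiction with the in-expectation bound; this is where the constants $n\ge 12\log(4/\beta)/\tau^2$ and $\epsilon<\tau/2$ come from. Note, however, that the in-expectation lemma needed there is slightly stronger than the one you proved: the monitor outputs both a query and an index, and the query is evaluated on the selected sub-dataset, so the coordinate-swap argument must be extended to mechanisms of the form $\cM:(\cZ^n)^k\rightarrow\Phi\times[k]$. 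That extension does go through, but it has to be stated and proved; as written, the proposal does not constitute a complete proof.
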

This lemma was originally stated to prove the claim that privately generated mechanisms for answering statistical queries always generalize.

For statistical learning problems, we can simply take the statistical query $\phi$ to be the loss function $\ell(h,\cdot)$ parameterized by $h\in \cH$. If an algorithm $\cA$ that samples from a distribution on $\cH$ upon observing data $Z$ is $\epsilon$-DP, then $\cB: Z \rightarrow \ell(\cA(Z),\cdot)$ is also $\epsilon$-DP. The result therefore reduces to that the empirical risk and population risk are close with high probability. Due to the boundedness assumption, we can translate the high probability result to the expectation form, which verifies the definition of ``generalization''.

However, ``generalization'' alone still does not imply ``consistency'', as we also need 
$$\E_{\phi \sim \cB(Z)}\frac{1}{n}\sum_{z\in Z}\phi(z) \rightarrow R^* = \min_{\phi\in\Phi} \E_{z\sim \cD}\phi(z)$$
 as $Z$ gets large, which does not hold for all DP-output $\phi$. But when $\phi = \ell(h,\cdot)$, it can be obtained if we assume $\cA$ is AERM. This is shown via the following inequality
$$
\E_{Z\in \cD^n}\E_{\phi \sim \cB(Z)} \frac{1}{n}\sum_{z\in Z}\phi(z) \rightarrow \E_{Z\in \cD^n} \min_{\phi\in \Phi} \frac{1}{n}\sum_{z\in Z}\phi(z)  \leq \E_{Z\in \cD^n} \frac{1}{n}\sum_{z\in Z}\phi^{*}(z) = \E \phi^{*}(z) = R^*,
$$
where $\phi^*=\ell(h^*,\cdot)$ and $h^*$ is an optimal hypothesis function. This wraps up the proof of consistency.

The above proof of ``consistency'' via Lemma~\ref{lem:dwork-hardt} and ``AERM'', however, leads to a looser bound comparing to our result (Corollary~\ref{corr:consistency}) when the additional assumption on $n$ and $\tau$ (equivalently $\epsilon$) is active, i.e., when $\frac{\epsilon(n)}{\log(1/\epsilon(n))} < O\left(\frac{1}{\sqrt{n}}\right)$. In this case it only implies a $\xi(n) + \frac{\log n}{\sqrt{n}}$ bound due to that $\epsilon$-DP implies $\epsilon'$-DP for any $\epsilon'>\epsilon$. Our proof of Corollary~\ref{corr:consistency} is considerably simpler and more general in that it does not require any assumption on the number of data points $n$.


This can easily lead to worse overall error bound for very simple learning problems with sufficiently fast rate. For example, in the problem of learning the mean of $X\in [0,1]$, let the loss function be $|x-h|^{10}$. Consider the $\epsilon(n)$-DP algorithm that outputs $\text{ERM} + \text{Laplace}(\frac{2}{\epsilon(n) n})$ where $\epsilon(n)$ is chosen to be $n^{-9/10}$. This algorithm is AERM with rate $\xi(n) = \frac{10!2!}{(\epsilon(n)n)^{10}} =  O(n^{-1})$.  By Corollary~\ref{corr:consistency} we get an overall rate of $O(n^{-9/10})$ while through Lemma~\ref{lem:dwork-hardt} and the argument that follows, we only get $\tilde{O}(n^{-1/2})$.


\label{sec:proofs}

\bibliographystyle{apa-good}
\bibliography{stability_privacy}
\end{document}